\documentclass[11pt]{article}

\usepackage[margin=1in]{geometry}
\usepackage{amsmath,amssymb,amsthm,mathtools}
\usepackage{thmtools}
\usepackage{authblk}

\usepackage[numbers,sort&compress]{natbib}
\usepackage[colorlinks=true,citecolor=blue,linkcolor=blue,urlcolor=blue]{hyperref}
\usepackage[capitalise,nameinlink]{cleveref}
\usepackage{enumitem}
\usepackage{booktabs}
\usepackage{array}
\newcommand{\R}{\mathbb{R}}
\newcommand{\E}{\mathbb{E}}
\newcommand{\Pp}{\mathbb{P}}

\newcommand{\KL}{\mathrm{KL}}
\newcommand{\kl}{\mathrm{kl}}
\newcommand{\Cov}{\mathrm{Cov}}
\newcommand{\Var}{\mathrm{Var}}
\newcommand{\diag}{\mathrm{diag}}

\newcommand{\argmin}{\operatorname*{arg\,min}}
\newcommand{\poly}{\operatorname{poly}}

\newcommand{\TV}{\mathrm{TV}}

\newcommand{\cP}{\mathcal{P}}
\newcommand{\cX}{\mathcal{X}}
\newcommand{\cH}{\mathcal{H}}

\newtheorem{theorem}{Theorem}
\newtheorem*{informaltheorem}{Theorem}
\newtheorem{lemma}{Lemma}
\newtheorem{proposition}{Proposition}
\newtheorem{corollary}{Corollary}
\newtheorem{assumption}{Assumption}

\usepackage{algorithm}
\usepackage{algpseudocode}

\crefname{informaltheorem}{informal theorem}{informal theorems}
\Crefname{informaltheorem}{Informal Theorem}{Informal Theorems}

\newenvironment{proofsketch}
{\begin{proof}[Proof sketch]}
{\end{proof}}

\newcounter{algblock}

\title{An Efficient Minimax-Optimal Algorithm for Adversarial $m$-Set Bandits}
\author[1]{Francesco Bacchiocchi}
\author[2]{Tommaso Cesari}
\author[3]{Roberto Colomboni}

\affil[1]{DEIB, Politecnico di Milano, Milano, Italy}
\affil[2]{School of Electrical Engineering and Computer Science, University of Ottawa, Ottawa, Canada}
\affil[3]{School of Mathematics, University of Bristol, Bristol, United Kingdom}

\affil[ ]{\footnotesize
\texttt{francesco.bacchiocchi@polimi.it},
\texttt{tcesari@uottawa.ca},
\texttt{roberto.colomboni@bristol.ac.uk}
}
\date{\today}

\begin{document}
\maketitle

\begin{abstract}
We study adversarial combinatorial bandits with $m$-set actions, where at
each round the learner selects $m$ out of $d$ items and observes only the
aggregate loss of the selected items. The resulting action set contains
$K=\binom{d}{m}$ elements and can therefore be exponentially large.
Nevertheless, the loss of every action is determined by the same
$d$-dimensional vector of item losses.
We propose a computationally efficient algorithm that exploits this structure without explicitly
enumerating the action set. Against adaptive non-anticipating adversaries, it
guarantees, with probability at least $1-\delta$, regret against the best
fixed action of
\[
    R_T
=
    O\left(\sqrt{dT\log(K/\delta)}\right).
\]
This matches the high-probability regret bound of the finite-action EXP3--KW algorithm of \citet[Theorem~6 and Algorithm~3]{zimmert2022return}, whose direct
implementation may require exponential space.
Our algorithm instead represents each sampling distribution with $d$ parameters and runs in polynomial time without enumerating the action set.
Thus, it resolves the open problem posed by \citet{maiti2025efficient}.
We complement this upper bound with a matching high-probability lower bound.
For all sufficiently small $\delta$, every randomized policy admits a deterministic adaptive non-anticipating adversary for which, with probability at least $\delta$,
\[
    R_T
=
    \Omega\left(\sqrt{dT\log(K/\delta)}\right).
\]
Thus, the rate is minimax optimal up to universal constants in this regime.
In particular, setting $m=1$ proves that the $\log K$ for ordinary $K$-armed bandits against adaptive non-anticipating adversaries is unavoidable, closing the remaining $\sqrt{\log K}$ gap between confidence-tuned upper and lower bounds left by \citet{gerchinovitz2016refined}.
\end{abstract}

\section{Introduction}
\label{sec:introduction}

In this paper, we study adversarial $m$-set bandits.
At each round, the learner selects $m$ out of $d$ items and incurs the sum of their losses, but observes only the aggregate loss, not the loss of each selected item.
Since any $m$-subset may be chosen, the learner has $K=\binom{d}{m}$ possible actions, and its goal is to minimize regret with respect to the best fixed $m$-set in hindsight.

For finite action sets, \citet[Theorem~6 and Algorithm~3]{zimmert2022return} show that EXP3--KW achieves a high-probability regret bound that, for $m$-sets, becomes
$
    \widetilde O(\sqrt{dmT})
$
when $m\le d/2$.\footnote{The notation $\widetilde O$ suppresses logarithmic factors.}
However, its direct implementation maintains one weight for every action, and since $K=\binom{d}{m}$, this can require exponential space.
A possible workaround is to use the results in \cite{maiti2025efficient}.
Specifically, \citet[Appendix~E.6]{maiti2025efficient} give a polynomial-time high-probability algorithm based on a general directed acyclic graph
representation that, for $m$-sets, turns into an algorithm achieving
$
    \widetilde O(d\sqrt{mT})
$ regret bound.
Despite this implementation being computationally efficient, it remains unclear whether one can efficiently obtain the sharper rate $\widetilde O(\sqrt{dmT})$.

We provide a positive answer to this question with a polynomial-time algorithm.
Against adaptive non-anticipating adversaries, our algorithm guarantees
$
    \widetilde O(\sqrt{dsT})
$
regret with high probability, where
$
    s
:=
    \min\{m,d-m\}.
$
In the regime $m\le d/2$, we have $s=m$, and the bound becomes
$
    \widetilde O(\sqrt{dmT}),
$
and thus it matches the regret rate of finite-action EXP3--KW by \citet[Theorem~6 and Algorithm~3]{zimmert2022return}.

To obtain our efficient algorithm, we work with weighted $m$-set distributions, a family of distribution that can be represented by $d$ item weights, from which one can sample and compute the required moments in polynomial time \citep{chen1997statistical,kulesza2011kdpps}.
The high-level idea is to implement the finite-action EXP3--KW algorithm while keeping the sampling distributions at all iterates within this family.
The main obstacle is that the updates of EXP3--KW come with a quadratic correction, which does not preserve the weighted $m$-set structure.
We overcome this difficulty by using the covariance bound of \citet[Corollary~1.3]{cesari2026effective} to construct a carefully chosen affine majorant of the quadratic correction on the $m$-set action space: exponential-weights updates using this majorant preserve the family of weighted $m$-set distributions, while the majorant also provides the estimation-error control needed to obtain the desired high-probability regret rate.

We also prove a matching high-probability lower bound for $m$-sets.
For every $d$ and $m$, for all sufficiently small $\delta$, and for horizons above a polynomial threshold in $d$, $m$, and $\log(1/\delta)$, we construct a deterministic adaptive non-anticipating adversary for which, setting again $K=\binom{d}{m}$, it holds
\[
    R_T
=
    \Omega\left( \sqrt{dT\log(K/\delta)} \right)
\]
with probability at least $\delta$.
Thus, our upper bound is minimax optimal up to universal constants in this regime.
This result also has an important consequence for standard adversarial multi-armed bandits.
For \(K\)-armed bandits, the standard high-probability regret upper bound is
\(
O(\sqrt{KT\log(K/\delta)})
\) (see, e.g., \cite{auer2002nonstochastic}).
In contrast, \citet{gerchinovitz2016refined} prove a lower bound of order
\(
\sqrt{KT\log(1/\delta)}
\)
even when the loss sequence is fixed in advance.
Consequently, previously to this work, at constant confidence, that is, when \(\delta\) is fixed, the known upper and lower bounds differed by a factor of \(\sqrt{\log K}\).
Setting \(m=1\) and \(d=K\) in our lower bound shows that, against adaptive non-anticipating adversaries, every randomized \(K\)-armed bandit policy incurs regret at least
\[
\Omega\left(
\sqrt{KT\log(K/\delta)}
\right)
\]
with probability at least \(\delta\), for all sufficiently large horizons \(T\).
This matches the standard high-probability upper bound up to universal constants and therefore closes the remaining \(\sqrt{\log K}\) gap in this adversary model.

\section{Setting and problem formulation}
\label{sec:setting}

\paragraph{Preliminary definitions.}
For a positive integer $n$, we write $[n]:=\{1,\ldots,n\}$. For a vector $z=(z_1,\ldots,z_n)$ and $k\in[n]$, we define its $k$-th elementary symmetric polynomial as
\[
    e_k(z)
:=
    \sum_{\substack{S\subseteq[n]:\,|S|=k}} \prod_{i\in S}z_i,
\qquad
    e_0(z):=1,
\qquad
    e_k(z):=0
\quad
    \text{for }k<0\text{ or }k>n.
\]
Fix two integers $d\ge2$ and $1\le m\le d-1$.
We define the action set and its size as
\begin{equation}
\label{eq:XiSet}
    \cX
:=
    \cX_{d,m}
:=
    \left\{x\in\{0,1\}^d:\|x\|_1=m\right\},
\qquad
    K:=|\cX|=\binom{d}{m}.
\end{equation}
Each action $x\in\cX$ selects exactly $m$ of the $d$ items, $x_i=1$ if item $i$ is selected and $x_i=0$ otherwise.
Its support is
$
    \operatorname{supp}(x)
:=
    \{i\in[d]:x_i=1\}.
$
Thus, the support of each action is an $m$-element subset of $[d]$, which we call an \emph{$m$-set}.
Conversely, each $m$-set $S\subseteq[d]$ corresponds to the indicator vector $\mathbf 1_S\in\cX$.
We therefore use actions in $\cX$ and $m$-sets interchangeably.

Let $s:=\min\{m,d-m\}$. By the symmetry of the binomial coefficient,
$
    \log K
\le
    s\log\left({ed}/{s}\right).
$

\paragraph{Linear-algebra notation.}
For a symmetric matrix $A\in\R^{d\times d}$, we denote its kernel by
$
    \ker(A)
:=
    \{z\in\R^d:Az=0\}$
and its Moore--Penrose pseudoinverse by $A^+$.
For two symmetric matrices $A,B\in\R^{d\times d}$, we write $A\succeq B$ if $A-B$ is positive semidefinite or, equivalently
\[
    z^\top Az
\ge
    z^\top Bz
\qquad
    \text{for every }z\in\R^d.
\]
We write $A\succ0$ if $z^\top Az>0$ for every nonzero $z\in\R^d$.
In particular, $A$ is positive semidefinite if $A\succeq 0$, and positive definite if $A\succ 0$.
Let $\mathbf 1:=(1,\ldots,1)^\top$, and define
\[
    \operatorname{span}\{\mathbf 1\}
:=
    \{a\mathbf 1:a\in\R\},
\qquad
    H_0
:=
    \{y\in\R^d:\langle y,\mathbf 1\rangle=0\}.
\]
Thus, $H_0$ is the subspace orthogonal to $\operatorname{span}\{\mathbf 1\}$.
If $A$ is positive semidefinite and $\ker(A)=\operatorname{span}\{\mathbf 1\}$, then $A$ is invertible on $H_0$.
For every $y\in H_0$, the vector $A^+y$ is the inverse of $A$ applied to $y$ within this subspace.

We will use the order-reversing property of matrix inversion \citep[Corollary~7.7.4(a)]{horn2012matrix}.
Specifically, suppose that $A$ and $B$ are positive semidefinite linear operator with
\[
    \ker(A)
=
    \ker(B)
=
    \operatorname{span}\{\mathbf 1\},
\]
then their restrictions to $H_0=\mathbf 1^\perp$ are positive definite.
Consequently, if $A\succeq cB$, for $c>0$, then 
\[
    \left(A|_{H_0}\right)^{-1}
\preceq
    \frac{1}{c}\left(B|_{H_0}\right)^{-1}.
\]
Equivalently,
\[
    A^+
\preceq
    \frac1c B^+.
\]
In particular,
\[
    y^\top A^+y
\le
    \frac1c\,y^\top B^+y
\qquad
    \text{for every }y\in H_0.
\]

\paragraph{Probability notation.}
For a finite set $\mathcal Y\subseteq\R^d$, we denote its probability simplex by
\[
    \Delta(\mathcal Y)
:=
    \left\{ p: \mathcal Y\to[0,1] : \sum_{y\in\mathcal Y}p(y)=1 \right\}.
\]
For $p,q\in\Delta(\mathcal Y)$, we define the entropy of $p$ and the Kullback--Leibler (KL) divergence from $p$ to $q$~as
\[
    H(p)
:=
    -\sum_{y\in\mathcal Y}p(y)\log p(y),
\qquad
    \KL(p\,\|\,q)
:=
    \sum_{y\in\mathcal Y}p(y)\log\frac{p(y)}{q(y)}.
\]
We use the convention $0\log(0/q)=0$, and set $\KL(p\,\|\,q):=+\infty$ if $p(y)>0=q(y)$ for some $y\in\mathcal Y$.

Let $X$ be a random vector drawn from $p\in\Delta(\mathcal Y)$.
We define the marginal vector, second-moment matrix, and covariance matrix of $p$ as
\[
    \mu(p)
:=
    \E_{X\sim p}[X],
\qquad
    M(p)
:=  
    \E_{X\sim p}[XX^\top],
\qquad
    \Sigma(p)
:=
    \Cov_{X\sim p}(X)
=
    M(p)-\mu(p)\mu(p)^\top.
\]
When $\mathcal Y=\cX$, the $i$-th marginal probability is
\[
    \mu_i(p)
=
    \E_{X\sim p}[X_i]
=
    \Pp_{X\sim p}[X_i=1].
\]
Thus, $\mu_i(p)$ is the probability that action $X$ selects item $i$.
If $M(p)$ is invertible, we refer to the inverse-design quadratic form
\[
    x^\top M(p)^{-1}x
\]
as the \emph{leverage score} of $x$ under $p$.

\paragraph{Weighted $m$-set distributions.}
Given $\theta\in\R^d$, define $w_i:=e^{\theta_i}>0$ for every $i\in[d]$.
For every $m$-set $S\subseteq[d]$, the weighted $m$-set distribution $p_\theta$ is
\[
    p_\theta(S)
:=
    \frac{\exp\left(\sum_{i\in S}\theta_i\right)}{\sum_{A\subseteq[d]:\,|A|=m}\exp\left(\sum_{i\in A}\theta_i\right)}
=
    \frac{\prod_{i\in S}w_i}{e_m(w)}.
\]
Its log-partition function is
\[
    F(\theta)
:=
    \log \sum_{A\subseteq[d]:\,|A|=m} \exp\left(\sum_{i\in A}\theta_i\right)
=
    \log e_m(e^{\theta_1},\ldots,e^{\theta_d}).
\]
If $S\sim p_\theta$ and $X:=\mathbf 1_S\in\cX$ is its indicator vector, then
\[
    \nabla F(\theta)
=
    \E_{X\sim p_\theta}[X]
=
    \mu(p_\theta),
\qquad
    \nabla^2F(\theta)
=
    \Cov_{X\sim p_\theta}(X)
=
    \Sigma(p_\theta).
\]
Although $p_\theta$ may assign positive probability to exponentially many actions, the $d$ parameters $\theta_1,\ldots,\theta_d$ represent it.
Finally, we define a set of distributions whose marginal probabilities stay away from $0$ and $1$.
Let
$
    r
:=
    m/d
$
and fix $\lambda\in(0,1)$.
Define
\begin{equation}\label{eq:P_lambda_set}
    \cP_\lambda
:=
    \left\{ p\in\Delta(\cX) : \lambda r \le \mu_i(p) \le 1-\lambda(1-r) \text{ for every }i\in[d] \right\}.
\end{equation}
The uniform distribution $U$ over $\cX$ satisfies $\mu_i(U)=r$ for every $i\in[d]$.
Therefore, $U\in\cP_\lambda$.

\paragraph{Adversarial $m$-set bandit protocol.}
Fix a time horizon $T\ge 1$.
Let $(\mathcal F_t)_{t=0}^T$ denote the filtration generated by the interaction history, where $\mathcal F_{t-1}$ contains the information available before round $t$.
At each round $t\in[T]$, the learner selects an $\mathcal F_{t-1}$-measurable distribution $p_t\in\Delta(\cX)$, where $\cX$, defined in \Cref{eq:XiSet}, represents the set of actions available to the learner.
The adversary selects a loss vector $\ell_t\in\mathbb R^d$ that is also $\mathcal F_{t-1}$-measurable.
Thus, the adversary is \emph{adaptive non-anticipating}.
The loss vector $\ell_t$ may depend on the entire interaction history up to round $t-1$, but not on the learner's random draw at round $t$.
Therefore, conditionally on $\mathcal F_{t-1}$, both $p_t$ and $\ell_t$ are fixed.
The learner then draws $X_t\sim p_t$ and observes only the loss $Y_t:=\langle X_t,\ell_t\rangle.$
We use the standard action-level normalization
\[
    |\langle x,\ell_t\rangle|
\le
    1
\qquad
    \forall x\in\cX,\ t\in[T],
\]
also adopted by \citet[Assumption~1]{maiti2025efficient} and \citet[Section~1]{zimmert2022return}.\footnote{More generally, if the action losses are bounded in absolute value by \(L\), rescaling gives the same result with the regret bound multiplied by \(L\).}
The learner's realized regret after $T$ rounds, measured against the best fixed action in hindsight, is
\[
    R_T
:=
    \sum_{t=1}^T\langle X_t,\ell_t\rangle - \min_{x\in\cX} \sum_{t=1}^T\langle x,\ell_t\rangle.
\]
Throughout, we use the shorthand
$
    \E_t[\,\cdot\,]
:=
    \E[\,\cdot\, | \mathcal F_{t-1}]
$
for conditional expectation given the information available before round $t \in [T]$.

\section{Original contributions}
\label{sec:contributions}

Our first contribution is an algorithm (\Cref{alg:exp3-kw}) for adversarial $m$-set bandits that achieves the regret guarantee summarized in the following.
\begin{informaltheorem}[{Informal version of Theorem~\ref{thm:main}}]
There exists an algorithm that can be implemented in polynomial time such that, against any adaptive non-anticipating adversary, with probability at least $1-\delta$, it achieves
\[
    R_T
=
    O \left(\sqrt{dT \left(\log\binom dm+\log\frac1\delta\right)}\right).
\]
For example, one may take \Cref{alg:exp3-kw}.
\end{informaltheorem}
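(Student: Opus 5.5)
The plan is to implement EXP3--KW (exponential weights with Kiefer--Wolfowitz/leverage-style exploration and a quadratic correction term) while keeping every iterate inside the family of weighted $m$-set distributions $p_\theta$. Polynomial-time sampling and moment computation then follow from the $d$-parameter representation via elementary symmetric polynomials.

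\textbf{Algorithm.} At round $t$ the learner holds $\theta_t\in\R^d$ and plays the mixture
\[
    p_t=(1-\gamma)p_{\theta_t}+\gamma U .
\]
Since $U\in\cP_\lambda$ and $\cP_\lambda$ is convex, $p_t\in\cP_{\gamma}$, so every marginal lies in $[\gamma r,1-\gamma(1-r)]$. The learner samples $X_t\sim p_t$, observes $Y_t$, and forms the pseudoinverse estimator $\hat\ell_t=\Sigma_t^+(X_t-\mu_t)Y_t$, with $\Sigma_t=\Sigma(p_t)$, or a centered-second-moment variant.

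Because all actions share $\langle x,\mathbf 1\rangle=m$, only the component of $\ell_t$ in $H_0$ matters. Indeed $\ker\Sigma(p)=\operatorname{span}\{\mathbf 1\}$ for full-support $p$, so $\hat\ell_t$ is conditionally unbiased for the $H_0$-projection of $\ell_t$. The finite-action EXP3--KW update would use
\[
    \hat\ell_t(x)-\eta\,\|x-\mu_t\|^2_{\Sigma_t^+}.
\]
The quadratic term breaks the product-form structure, so I replace it by an affine majorant $a_t+\langle b_t,x\rangle$ valid on $\cX$, and set
\[
    \theta_{t+1}=\theta_t-\eta(\hat\ell_t - \eta b_t).
\]
The constant $a_t$ is absorbed into the normalization. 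The update therefore remains linear in $x$, and $p_{\theta_{t+1}}$ is again a weighted $m$-set distribution.

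\textbf{Constructing the majorant.} This is the step I expect to be the main obstacle. The majorant must satisfy two conflicting requirements. It must be a genuine upper bound for every $x\in\cX$, which is needed for the confidence argument to go through. At the same time its expectation under $p_{\theta_t}$ must be $O(d)$, which is the effective dimension, just as $\E\|X-\mu\|^2_{\Sigma^+}=d-1$.

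I would use \citet[Corollary~1.3]{cesari2026effective}, which I expect compares $\Sigma(p_\theta)$ from below by a diagonal-minus-rank-one matrix of the form
\[
    c\,\big(\diag(\mu)-\mu\mu^\top/m\big)
\]
on $H_0$, up to complementary terms in $1-\mu_i$. Combined with the order-reversing property of pseudoinverses stated above, this gives
\[
    \|x-\mu\|^2_{\Sigma^+}\le C\sum_i \frac{(x_i-\mu_i)^2}{\mu_i(1-\mu_i)} .
\]
For $x_i\in\{0,1\}$ the summand equals
\[
    \frac{x_i(1-2\mu_i)+\mu_i^2}{\mu_i(1-\mu_i)},
\]
which is affine in $x$. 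This yields $b_t$, and the resulting majorant has mean $O(d)$ under $\mu$. The marginal clipping from $\cP_\gamma$ keeps the coefficients bounded by $O(1/(\gamma s))$, which is what the exponential-weights stability condition $\eta|\hat\ell_t(x)|\lesssim 1$ requires, provided $\gamma\asymp \eta d$.

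\textbf{High-probability analysis.} I follow the structure of \citet[Theorem~6]{zimmert2022return}. First, apply the exponential-weights potential (via $\KL$ against a point mass, with $\log K$ from initialization at $U$) to the biased losses $\tilde\ell_t(x)=\hat\ell_t(x)-\eta(a_t+\langle b_t,x\rangle)$. Second, bound the second-order term $\eta\sum_t\E_{p_{\theta_t}}\tilde\ell_t(X)^2$ using the leverage identity and the majorant, obtaining $O(\eta d T)$. Third, use a Freedman/exponential supermartingale argument: the optimism term $-\eta\|x-\mu\|^2$ (now replaced by the larger majorant) exactly cancels the variance of $\langle x,\hat\ell_t-\ell_t\rangle$, so that with probability at least $1-\delta$, simultaneously for all $x$ via a union bound costing $\log(K/\delta)$,
\[
    \sum_t\langle x,\hat\ell_t-\ell_t\rangle\le \frac{\log(K/\delta)}{\eta}+\eta\sum_t(\text{majorant}).
\]
Finally, add the exploration cost $\gamma T$ and a martingale term for $\langle X_t-p_t,\ell_t\rangle$. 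The total is
\[
    \frac{\log(K/\delta)}{\eta}+O(\eta dT),
\]
and tuning $\eta=\sqrt{\log(K/\delta)/(dT)}$ gives the claimed bound.
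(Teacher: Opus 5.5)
Your plan breaks down at the step you yourself flagged, the construction of the majorant. \Cref{lem:cov-dom} holds only for weighted $m$-set distributions. The distribution you sample from, and whose moments define $\hat\ell_t$, is the mixture $p_t=(1-\gamma)p_{\theta_t}+\gamma U$. For $2\le m\le d-2$ this mixture is in general not a weighted $m$-set distribution. For such mixtures the inequality $\|x-\mu_t\|^2_{\Sigma_t^+}\le C\sum_i (x_i-\mu_{t,i})^2/(\mu_{t,i}(1-\mu_{t,i}))$ fails for every universal $C$.

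Here is a counterexample. Take $m=3$ and items $a,a',b_1,b_2$. Let $p_{\theta_t}$ approach $\frac12\delta_{\{a,a',b_1\}}+\frac12\delta_{\{a,a',b_2\}}$, using weights $W\to\infty$ on $a,a'$, weight $1$ on $b_1,b_2$, and weight $\epsilon\to0$ elsewhere.
\begin{itemize}
\item The variable $X_a-X_{a'}$ vanishes under $p_{\theta_t}$ and is centered under $U$. Hence, in the limit, $e_a-e_{a'}$ is an eigenvector of $\Sigma(p_t)$ with eigenvalue $\gamma\,dr(1-r)/(d-1)$.
\item For $x=\mathbf 1_{\{a',b_1,b_2\}}$ one has $\langle x-\mu_t,e_a-e_{a'}\rangle=-1$. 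Therefore $\|x-\mu_t\|^2_{\Sigma_t^+}\ge (d-1)/(6\gamma(1-r))$.
\item Your majorant at the same $x$ is only $1/(\gamma(1-r))+O(1)$.
\end{itemize}
The missing factor is of order $d/m$ and cannot be absorbed into a constant. As a result, the comparator's variance term is no longer dominated by the optimism term, and the key cancellation fails. Bounding that variance term directly, with $\gamma\asymp\eta d$, only gives $\eta\cdot O(d/\gamma)=O(1)$ per round.

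The other reading of your step does not work either. You could apply \Cref{lem:cov-dom} to $p_{\theta_t}$ and transfer the bound via $M(p_t)\succeq(1-\gamma)M(p_{\theta_t})$. This does yield a valid affine majorant with mean $O(d)$. However, its coefficients are $1/(\mu_i(p_{\theta_t})(1-\mu_i(p_{\theta_t})))$, and these are unbounded, because mixing clips the marginals of $p_t$, not those of $p_{\theta_t}$. The majorant enters the exponential-weights loss with a negative sign, so $\eta\tilde\ell_t(x)\ge-1$ fails and the second-order step breaks.

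What you need is a sampling distribution that is both in the weighted family and clipped. The paper obtains this by replacing mixing with a KL projection of the exponential-weights iterate onto $\cP_\lambda$. By duality (\Cref{lem:projection-dual}), this projection is $p_{\widetilde\theta+\alpha^\star-\beta^\star}$, again a weighted $m$-set distribution. Hence \Cref{lem:linear-majorant-final} applies to $p_t$ itself, with range $O(d/\lambda)$. The cost is a constrained OMD analysis:
\begin{itemize}
\item a smoothed comparator $q_x=(1-\lambda)\delta_x+\lambda U\in\cP_\lambda$, which adds $2\lambda T$ to the regret;
\item the approximate Pythagorean inequality of \Cref{lem:approximate-projection}, to handle the inexact projection.
\end{itemize}

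Relatedly, you should use the uncentered estimator $M_t^{-1}X_tY_t$. With your centered $\Sigma_t^+(X_t-\mu_t)Y_t$, the conditional variance of $\langle x,\hat\ell_t\rangle$ is governed by $\|x-r\mathbf 1\|^2_{\Sigma_t^+}$. That quantity can be much larger than $x^\top M_t^{-1}x=1+\|x-\mu_t\|^2_{\Sigma_t^+}$, which is what the per-comparator concentration step requires.
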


We first discuss the state-of-the-art results for adversarial $m$-set bandits. 
\citet[Appendix~E.6]{maiti2025efficient} obtain a high-probability regret bound of $\widetilde O(d\sqrt{mT})$ for $m$-set bandits and leave open whether the extra $\sqrt d$ factor in this bound can be removed with an efficient algorithm.
Although this consequence is not stated explicitly in their paper, applying their general DAG guarantee \citep[Theorem~7]{maiti2025efficient} to the $m$-set DAG constructed in \citet[Appendix~E.6]{maiti2025efficient}, which has $O(md)$ edges and $K=\binom{d}{m}$ paths, and using $\log K\le m\log(ed/m)$ yields, for $m\le d/2$, the regret bound $\widetilde O(m\sqrt{dT})$. 

Our algorithm achieves a regret upper bound of $\widetilde{O}(\sqrt{dsT})$, where $s:=\min\{m,d-m\}$.
In particular, when $m\leq d/2$, we have $s=m$, and the bound becomes $\widetilde{O}(\sqrt{dmT})$.
Thus, our bound improves the $\widetilde{O}(d\sqrt{mT})$ guarantee of \citet[Appendix~E.6]{maiti2025efficient} by a factor of $\sqrt{d}$, and the derived $\widetilde{O}(m\sqrt{dT})$ consequence above \citep[Theorem~7 and Appendix~E.6]{maiti2025efficient} by a factor of $\sqrt{m}$.
Consequently, relative to the better of these two bounds in the regime $m\leq d/2$, the improvement is a factor of $\sqrt{m}$.

\citet[Theorem~6 and Algorithm~3]{zimmert2022return} obtain a high-probability regret bound that, when $m\leq d/2$, yields the rate $\widetilde O(\sqrt{dmT})$, matching our regret bound.
However, their finite-action exponential-weights algorithm is stated with one weight for each of the $K=\binom{d}{m}$ actions, which can be exponential in $d$.
Our algorithm achieves the same regret rate using only $d$ parameters and running in polynomial time (see \Cref{sec:implementation} for the implementation details).

In \Cref{tab:comparison}, we summarize the best known high-probability guarantees with $\sqrt T$ dependence for adversarial $m$-set bandits.
All results reported there consider action-normalized losses and adaptive non-anticipating adversaries.

Our second contribution is a matching high-probability lower bound for the same problem.
\begin{informaltheorem}[{Informal version of Theorem~\ref{thm:lower-bound}}]
There exists a universal constant $\delta_0>0$ such that the following holds.
For every $\delta\in(0,\delta_0]$ and every horizon $T$ above a threshold bounded by a polynomial in $d$, $m$, and $\log(1/\delta)$, every randomized $m$-set policy admits a deterministic adaptive non-anticipating adversary with action losses in $[0,1]$ for which, with probability at least $\delta$,
\[
    R_T
=
    \Omega\left(\sqrt{dT\left(\log\binom dm+\log\frac1\delta\right)}\right).
\]
\end{informaltheorem}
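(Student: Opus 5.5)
The plan is to reduce to a lower bound that combines two separate hard instances. The first targets the statistical (minimax) part $\sqrt{dT\log K}$. The second targets the confidence part $\sqrt{dT\log(1/\delta)}$. The adaptive adversary will use the history to decide when to ``commit'' to a bad event.

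Since $\sqrt{a+b}\le\sqrt a+\sqrt b$, it suffices to exhibit, with probability at least $\delta$, regret $\Omega(\max\{\sqrt{dT\log K},\sqrt{dT\log(1/\delta)}\})$.

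\textbf{Confidence part.} I would adapt the construction of \citet{gerchinovitz2016refined} to $m$-sets.
\begin{itemize}[leftmargin=*]
\item Group the $d$ items into $\Theta(d/s)$ blocks of size $\sim s$.
\item Consider $m$-sets that are essentially unions of blocks, or of complements when $m>d/2$.
\item This gives a $\Theta(d/s)$-armed bandit whose per-arm loss scale is $1$ (action-normalized), with per-item noise of order $1/s$.
\end{itemize}
A change-of-measure argument then applies. Under a base environment, some block is pulled at most $T s/d$ times in expectation. Tilting its mean by $\varepsilon\approx\sqrt{d\log(1/\delta)/(sT)}$ per action, scaled suitably, costs KL $\lesssim \log(1/\delta)$. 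By the high-probability Bretagnolle--Huber inequality, the bad event (the learner ignores the tilted block) retains probability $\ge\delta$. This yields regret $\Omega(T\varepsilon)$, and a finer per-item perturbation recovers the factor $\sqrt{d}$ rather than $\sqrt{d/s}$.

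To make the adversary deterministic, I would replace random Bernoulli losses by their realizations via the standard derandomization. For a fixed policy, a fixed good loss sequence exists by averaging. This explains the ``deterministic'' claim.

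\textbf{$\log K$ part.} This step is the novel and hardest one. Here adaptivity is essential, since against oblivious adversaries $\log K$ at constant $\delta$ is the obstacle noted in the paper. My plan is a ``doubling/restart'' adversary.
\begin{itemize}[leftmargin=*]
\item Split $[T]$ into $N\approx\log K/\log(1/\delta')$ phases of length $T/N$, where $\delta'$ is a constant.
\item In each phase, play the standard minimax hard instance (one planted good $m$-set among the $K$, with perturbation $\sqrt{ds/(T/N)}$ per-item-scaled).
\item By the constant-confidence lower bound, each phase incurs regret $\gtrsim\sqrt{d s T/N}$ with constant probability $c$.
\item The adversary adaptively observes the history. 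It continues to plant the advantage on the \emph{same} cumulative best action only along branches where the learner has lost so far, and otherwise neutralizes the losses.
\end{itemize}
The aim is to arrange that regret accumulates linearly in $N$ on an event of probability $c^{N}$, giving $R_T\gtrsim N\sqrt{dsT/N}=\sqrt{dsTN}$ with probability $c^N$. Choosing $N\asymp\log(1/\delta)$, capped by $\log K/s$ through the number of disjoint planted coordinates available, gives $\sqrt{dT\log(1/\delta)}$ with probability $\delta$. Choosing the planted structure to encode $\log K$ bits across phases, by fixing one coordinate block per phase so that the best set is determined only at the end, gives $\sqrt{dT\log K}$.

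The main obstacle is ensuring that the per-phase gaps align toward a single comparator $x^\star$, so that they add rather than cancel. I would first try a nested construction. Phase $j$ fixes a block of $\sim s/N$ items of the eventual comparator, chosen adversarially from the history, so that later phases do not destroy earlier accumulated regret.

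Finally, the horizon threshold polynomial in $d$, $m$, and $\log(1/\delta)$ is needed to ensure that the perturbations $\varepsilon\le 1/(2s)$ keep the action losses in $[0,1]$ and that each phase is long enough.
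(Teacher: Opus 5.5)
Your outer structure matches the paper: reduce to $s=\min\{m,d-m\}$ by complementation, prove the $\log\binom dm$ and $\log(1/\delta)$ terms separately and take the larger, and derandomize by fixing the auxiliary table. But there is a genuine gap in the $\log K$ part, which is the heart of the theorem. Because $\delta_0$ is a universal constant, the statement at $\delta=\delta_0$ already demands $R_T=\Omega(\sqrt{dT\log\binom dm})$ with probability bounded below independently of $d$ and $m$.

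Your phase scheme accumulates regret only on an event of probability $c^N$. One constant-probability phase yields $\sqrt{dsT}$, so recovering the missing $\sqrt{\log(ed/s)}$ needs $N\gtrsim\log(ed/s)$ aligned phases, i.e.\ probability $(s/d)^{\Theta(1)}$. For $m=1$ you are left with $\sqrt{dT}$ at constant confidence, which is exactly the $\sqrt{\log K}$ gap the theorem is meant to close. The alignment problem you flag is not a technicality either. Take your claim at face value: $N\asymp\log K$ phases and $R_T\gtrsim\sqrt{dsTN}$ with probability $c^N$. This contradicts Theorem~\ref{thm:main} run at confidence $c^N/2$, whose bound is $O(\sqrt{dT\log K})$, by a factor $\sqrt s$.

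The paper gets the logarithm at constant probability through a hindsight maximum, not a product of phase events. When $d/s$ is large, the construction is as follows:
\begin{itemize}
\item The items form $s$ groups of size $r\approx d/s$.
\item Each item carries hidden signs redrawn in each of $\lfloor\log r\rfloor$ time blocks.
\item A common noise $Z_t$ keeps the feedback noise of order one and enables a feedback-preserving change of variables.
\item The adversary adaptively deactivates an item after $c_0$ pulls in a block, so no single sign can be learned.
\end{itemize}
The comparator then picks, in each group, the item whose signs happened to be favorable in all blocks, much as the maximum of $r$ random walks of length $\log r$ gains a constant per step. This is formalized by tilting all signs of one item, plus a $\chi^2$ bound showing that the uniform mixture of these tilts over a group stays close to the base law even though a single tilt need not. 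Lemma~\ref{lem:changed-law-maximum} then finishes the argument. The bounded-$d/s$ case uses Assouad over $s$ pairs plus a second-moment bound.

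Your confidence part also loses a factor $\sqrt s$, and the suggested per-item refinement does not recover it. You cannot restrict the learner to unions of blocks. Moreover, the learner observes the aggregate of $s$ items per round, so the least-pulled item is still pulled about $sT/d$ times. With order-one feedback noise, the budget $\varepsilon^2 sT/d\lesssim\log(1/\delta)$ again caps the regret at $T\varepsilon\lesssim\sqrt{dT\log(1/\delta)/s}$.

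The missing idea is to force one informative item per round:
\begin{itemize}
\item Fix a core of $s-1$ items and charge an extra $1/(8s)$ for each selected item outside it.
\item Keep the planted advantage $\Delta$ on a candidate active only while the accumulated surcharge $C_{t-1}$ is below $\Delta T/16$. This history-dependent switch is where adaptivity enters.
\end{itemize}
Then one of two things happens. Either the baseline learner pays $C_T\ge\Delta T/16$ with probability at least $\delta$, which already gives regret $\Omega(\sqrt{dT\log(1/\delta)})$. Or the active non-core pulls total at most $T+s\Delta T/2+s$. In the second case, among $N\ge d/2$ candidates, one has $O(T/N)$ expected active pulls while its regret event keeps baseline probability at least $1/16$ (Lemma~\ref{lem:select-good-index}). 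Tilting that candidate by $\Delta\asymp\sqrt{N\log(1/\delta)/T}$ costs KL at most $2^{-8}\log(1/\delta)$. Lemma~\ref{lem:probability-comparison} then transfers the event, giving regret $\Delta T/8\asymp\sqrt{dT\log(1/\delta)}$ with probability at least $\delta$.
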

Together with \cref{thm:main}, this identifies the minimax high-probability regret up to universal constants for every $d$ and $m$ in the stated confidence and large-horizon regime.
The special case $m=1$ closes the remaining $\sqrt{\log K}$ gap between the confidence-tuned upper and lower bounds for ordinary adversarial bandits against adaptive non-anticipating adversaries.

\begin{table}[!htp]
\centering
\small
\renewcommand{\arraystretch}{1.3}
\begin{tabular}{@{}
>{\raggedright\arraybackslash}p{0.4\textwidth}
>{\raggedright\arraybackslash}p{0.2\textwidth}
>{\raggedright\arraybackslash}p{0.3\textwidth}@{}}
\toprule
Work & Regret bound & Computational complexity \\
\midrule
\citet{zimmert2022return} (Theorem~6 and Algorithm~3)
& $\widetilde O(\sqrt{dmT})$
& Exponential \\

\citet{maiti2025efficient} (Appendix~E.6)
& $\widetilde O(d\sqrt{mT})$
& Polynomial \\

\citet{maiti2025efficient}
(Theorem~7 applied here to the DAG of Appendix~E.6)
& $\widetilde O(m\sqrt{dT})$
& Polynomial  \\

\textbf{This work}
& $\boldsymbol{\widetilde O(\sqrt{dmT})}$
& \textbf{Polynomial} \\
\bottomrule
\end{tabular}
\caption{Statistical and computational comparison for the regime $m\le d/2$. All upper bounds hold with high probability against adaptive non-anticipating adversaries.}
\label{tab:comparison}
\end{table}

\paragraph{Challenges and techniques.}
\label{sec:techniques}

In their finite-action EXP3--KW algorithm, \citet[Algorithm~3]{zimmert2022return} obtain a high-probability regret bound using the biased surrogate loss
\[
    \langle x,\widehat{\ell}_t\rangle - \eta x^\top M_t^{-1}x,
\]
where $\eta>0$ is the learning rate,
\[
    M_t
=
    \E_{X\sim p_t}[XX^\top]
\]
is the second-moment matrix of the sampling distribution $p_t$, and
\[
    \widehat{\ell}_t
:=
    M_t^{-1}X_t\langle X_t,\ell_t\rangle
\]
is the loss estimate constructed after drawing $X_t\sim p_t$ and observing the scalar loss $\langle X_t,\ell_t\rangle$.

The main obstacle to an efficient implementation of this approach is that the leverage score is quadratic in $x$.
Even if we start from a weighted $m$-set distribution and we insert this term directly into an action-level multiplicative-weights update, the updated distribution need not remain in the weighted $m$-set family and, consequently, it may require one weight for each of the $K=\binom{d}{m}$ actions.

Our first step is to replace the leverage score with an affine upper bound.
Starting from the covariance inequality of \citet[Corollary~1.3]{cesari2026effective}, we derive
\begin{equation}
\label{eq:affine-leverage-bound}
    x^\top M_t^{-1}x
\leq
    \phi_t(x)
:=
    1+2\sum_{i=1}^d \frac{(x_i-\mu_{t,i})^2} {\mu_{t,i}(1-\mu_{t,i})}.
\end{equation}
Although the expression defining $\phi_t$ appears quadratic, it is affine on $\cX$.
Indeed, every $x\in\cX$ satisfies $x_i^2=x_i$ and $\sum_{i=1}^d x_i=m$.
For later convenience, we scale this affine majorant by four and write
\[
    c_t^\top x
=
    4\phi_t(x)
\qquad
    \text{for every }x\in\cX.
\]
We then define the surrogate loss
\begin{equation}
\label{eq:biased-omd-loss}
    Z_t(x)
:=
    \langle x,\widehat{\ell}_t\rangle - \eta c_t^\top x.
\end{equation}
The correction in \Cref{eq:biased-omd-loss} plays the same role as the quadratic correction used by \citet[Algorithm~3]{zimmert2022return}, but remains affine in $x$.
Consequently, the multiplicative-weights step maps a weighted $m$-set distribution to another weighted $m$-set distribution.

A further difficulty is that the affine leverage majorant can become arbitrarily large when a marginal probability approaches $0$ or $1$.
Consequently, the estimated action losses $\langle x,\widehat{\ell}_t\rangle$ can have arbitrarily large one-step magnitude and conditional second moment, preventing both the second-order exponential-weights analysis and the required high-probability concentration arguments.
To obtain the uniform leverage bound needed in the analysis, we project the updates toward the set $\cP_\lambda$ introduced in \Cref{sec:setting}.

More precisely, the exact constrained OMD iterate is the KL projection of the unprojected exponential-weights distribution $\widetilde p_{t+1}$ onto $\cP_\lambda$.
Our polynomial-time approximation is instead guaranteed to belong to $\cP_{\lambda/2}$, where the extra slack serves to accommodate the finite accuracy of the projection procedure.
The approximate KL update also preserves the weighted $m$-set family, as shown in \Cref{lem:approximate-projection}. Hence, every iterate can be represented as $p_{\theta_t}$ using the $d$ parameters
\(
    \theta_{t,1},\ldots,\theta_{t,d},
\)
rather than one weight for each of the $K$ actions.

The comparator in the OMD analysis must also belong to $\cP_\lambda$.
However, the distribution $\delta_x$, which puts all its probability on $x$, has marginal probabilities in $\{0,1\}$ and does not belong to $\cP_\lambda$.
We therefore use the smoothed comparator
\begin{equation}
\label{eq:smoothed-comparator}
    q_x
:=
    (1-\lambda)\delta_x+\lambda U,
\end{equation}
where $U$ is the uniform distribution over $\cX$.
By construction, $q_x\in\cP_\lambda$.
Since the losses are action-normalized, using $q_x$ in place of $\delta_x$ adds at most $2\lambda T$ to the regret.

The remaining analytical challenge is to control the estimation error under the smoothed comparator $q_x$, namely,
\[
    \sum_{t=1}^T \left( \E_{X\sim q_x}[\langle X,\widehat{\ell}_t\rangle] - \E_{X\sim q_x}[\langle X,\ell_t\rangle] \right).
\]
Applying a concentration bound to this estimation error introduces the positive term
\[
    \frac{\eta}{4} \sum_{t=1}^T \E_{X\sim q_x}[c_t^\top X].
\]
Controlling this term separately would not give the desired regret bound.
However, the affine correction in \Cref{eq:biased-omd-loss} contributes
\[
    -\eta \sum_{t=1}^T \E_{X\sim q_x}[c_t^\top X]
\]
to the OMD bound.
Since $c_t^\top X=4\phi_t(X)\geq0$, this negative contribution dominates the positive concentration term.
Their sum is therefore non-positive and may be dropped from the regret upper bound.

Finally, the elementary-symmetric-polynomial recurrence of \citet[Section~2, Method~2]{chen1997statistical} allows us to compute the marginal probabilities and second-moment matrix of $p_t$ in polynomial time.
The associated conditional-Bernoulli procedure gives an exact sample from $p_t$ without enumerating $\cX$; see \citet[Section~4, Procedure~3]{chen1997statistical}, as well as the alternative backward-path construction in Procedure~5.
This sampling rule is also the $L=\diag(w)$ specialization of \citet[Section~3.1 and Algorithm~2]{kulesza2011kdpps}.
Together with the fact that both the affine update and the approximate KL update preserve the family of weighted $m$-set distributions, these routines establish the polynomial-time implementation in \Cref{prop:efficient-implementation}.

\section{Related work}
\label{sec:related-work}

\paragraph{Optimal design and bandit linear optimization.}
\citet{kiefer1959optimum} introduced the general framework of approximate
optimal design, and \citet{kiefer1960equivalence} proved the equivalence
between $D$- and $G$-optimality.
For a finite action set $\mathcal A\subset\R^d$,
\citet[Theorem~4]{bubeck2012towards} combined exponential weights with
John's exploration and obtained
$O(\sqrt{dT\log|\mathcal A|})$ expected regret under action-normalized
losses.
For general convex action sets,
\citet[Theorem~1]{abernethy2008competing} used self-concordant barriers
to obtain an efficient expected-regret bound of
$\widetilde O(d^{3/2}\sqrt T)$.

Early general high-probability guarantees with $\sqrt T$ dependence
against adaptive adversaries had a larger polynomial dependence on the
dimension.
\citet[Theorem~1]{bartlett2008highprobability} obtained
$\widetilde O(d^{3/2}\sqrt T)$ regret.
\citet[Theorem~4 and Section~5.4]{abernethy2009beating} then gave a
general high-probability reduction whose efficient instantiation requires
problem-specific geometric subroutines.
Later, \citet[Theorem~3.1]{lee2020bias} gave the first general
polynomial-time high-probability algorithm, with worst-case regret
$\widetilde O(d^{7/2}\sqrt T)$.
\citet[Theorem~4]{zimmert2022return} improved this rate to
$\widetilde O(d^2\sqrt T)$.

In the finite-action setting,
\citet[Theorem~6 and Algorithm~3]{zimmert2022return} combined EXP3 with
Kiefer--Wolfowitz exploration and obtained
$
    O(
        \sqrt{dT\log{(|\mathcal A|/\delta}}
    )
$
regret with high probability.
We use this regret bound as our statistical benchmark.
When applied to $\cX=\cX_{d,m}$, however, its direct implementation
maintains one weight for each of the
$K=\binom dm$ actions, and may therefore require exponential space.

\paragraph{High-probability adversarial multi-armed bandits.}
For ordinary $K$-armed adversarial bandits, the standard confidence-tuned
upper bound recorded in \citet[Eq.~(1)]{gerchinovitz2016refined} is
\(
    O(\sqrt{KT\log(K/\delta)}).
\)

\citet[Theorem~1]{gerchinovitz2016refined} proved a lower bound of order
$\sqrt{KT\log(1/\delta)}$, with probability at least $\delta/2$, even for
a loss sequence fixed in advance.
Thus, at constant confidence, the known upper and lower bounds differed
by a factor of $\sqrt{\log K}$.
The $m=1$ specialization of our lower bound closes this gap against
adaptive non-anticipating adversaries.

\paragraph{Expectation-based regret in combinatorial bandits.}
Early adversarial bandit algorithms for structured action sets were given
by \citet{mcmahan2004online} and \citet{awerbuch2004adaptive}.
\citet{cesabianchi2012combinatorial} subsequently introduced ComBand, a
general framework for adversarial combinatorial bandits.

The expectation-based statistical picture was further clarified by
\citet[Section~4, after Theorem~5]{audibert2014regret}, who considered
full-information, semi-bandit, and full-bandit feedback and established
several lower bounds.
Their results use coordinatewise-bounded item losses,
$\ell_t\in[0,1]^d$, so an action selecting $m$ items may incur
instantaneous loss as large as $m$.
Under this normalization, they conjectured that the minimax bandit regret
in expectation should scale as $m\sqrt{dT}$.

Under the same coordinatewise normalization,
\citet[Theorems~1 and~5]{cohen2017tight} disproved this conjecture.
Using strongly correlated losses, they proved an expected-regret lower
bound of
\[
    \Omega\left(\sqrt{\frac{dm^3T}{\log T}}\right)
\]
for a multitask action class in which every action selects $m$ items.
After rescaling the losses by $1/m$ to match our action-level
normalization, this lower bound becomes
\[
    \Omega\left(\sqrt{\frac{dmT}{\log T}}\right).
\]

\citet[Theorem~2]{ito2019improved} later removed the
$1/\sqrt{\log T}$ factor using i.i.d.\ binary correlated losses.
Crucially, their multiple-play action class is exactly
\(
    \left\{x\in\{0,1\}^d:\|x\|_1=m\right\},
\)
the $m$-set action class considered here.
In the regime $d=\Omega(m)$ and $T=\Omega(dm^{3/2})$, their
expected-regret lower bound is $\Omega(\sqrt{dm^3T})$ under
coordinatewise normalization, and hence $\Omega(\sqrt{dmT})$ after
rescaling to our action-level normalization.

These results identify the polynomial expected-regret hardness of the
$m$-set problem, but they do not provide a fixed-confidence tail
characterization.
Indeed, after fixing a randomized hard instance by averaging, an expected
lower bound $\E[R_T]\ge r_T$ together with the deterministic bound
$R_T=O(T)$ certifies, by boundedness alone, only
\[
    \Pp\left(R_T\ge\frac{r_T}{2}\right)
    =
    \Omega\left(\frac{r_T}{T}\right).
\]
At the rate of \citet{ito2019improved}, this yields only a probability of
order $\sqrt{dm/T}$, which may vanish as the horizon grows.
In contrast, for every sufficiently small prescribed confidence level
$\delta$ and all sufficiently large horizons, our lower bound gives
\[
    R_T
    =
    \Omega\left(
        \sqrt{dT\left(
            \log\binom dm+\log\frac1\delta
        \right)}
    \right)
\]
with probability at least $\delta$, while our efficient algorithm attains
the matching upper bound with probability at least $1-\delta$.
Thus, we characterize the minimax fixed-confidence rate, rather than only
its expectation.

\paragraph{Efficient algorithms for structured action sets.}
A separate line of work studies how to avoid explicitly enumerating
exponentially large action sets.
\citet[Theorems~6 and~7]{combes2015combinatorial} introduced CombEXP,
which retains the regret scaling of earlier combinatorial-bandit
algorithms while using approximate projections and decompositions.
Their implementation is polynomial-time when the action polytope admits
a polynomial-size description by linear equalities and inequalities.

For general bandit linear optimization under expectation-based regret,
\citet[Corollary~35]{hazan2016volumetric} use volumetric spanners, while
\citet[Theorem~1]{ito2019oracle} use a linear-optimization oracle.
Under their respective access models, both approaches achieve
$\widetilde O(\sqrt T)$ expected regret with polynomial dependence on
the dimension.

Earlier efficient high-probability guarantees with $T^{2/3}$ dependence were obtained
by \citet[Theorem~3.1]{braun2016efficient}, using a linear-optimization oracle
over the action polytope, and by Sakaue et al.~\cite[Theorem 1]{sakaue2018efficient}, using a
zero-suppressed binary decision diagram. These approaches are efficient when the
required oracle or compact representation is available, but their high-probability
guarantees do not attain $\sqrt T$ dependence.

Another recent approach is the kernelized payoff-based framework of
Kontogiannis et al.~\cite{kontogiannis2025kernelized}.
Specializing their general high-probability guarantee
\cite[Theorem 3.2]{kontogiannis2025kernelized} to $m$-sets, for which
\cite[Appendix G.5]{kontogiannis2025kernelized} provides an efficient implementation, gives
$\widetilde O(d^{2/3}m^{4/3}T^{2/3})$
under coordinatewise-bounded item losses. After rescaling by $1/m$ to our
action-level normalization, this becomes
$\widetilde O(d^{2/3}m^{1/3}T^{2/3})$.
In either normalization, their horizon dependence is $T^{2/3}$, whereas ours is
$\sqrt T$.

The work most closely related to ours is Maiti et al.~\cite[Appendix E.6]{maiti2025efficient}.
Under action-normalized losses, they represent $m$-sets as paths in a DAG and obtain a
polynomial-time high-probability algorithm against adaptive adversaries with regret
$\widetilde O(d\sqrt{mT})$. Moreover, applying their general DAG guarantee
\cite[Theorem 7]{maiti2025efficient} to the $m$-set DAG of
\cite[Appendix E.6]{maiti2025efficient} yields $\widetilde O(m\sqrt{dT})$ for $m \le d/2$.
They ask whether the finite-action rate $\widetilde O(\sqrt{dmT})$ can be achieved
efficiently. Our algorithm answers this question affirmatively: it matches the
finite-action EXP3--KW rate while running in polynomial time without enumerating the
action set.

\paragraph{Weighted $m$-set distributions.}
The weighted $m$-set distribution introduced in \Cref{sec:setting} is
also known as a conditional Bernoulli or rejective-sampling distribution
\citep{hajek1964asymptotic,chen1994weighted,chen1997statistical}.
Classical methods based on elementary symmetric polynomials allow us to
compute its normalizing constant and marginal probabilities, and to
sample from it efficiently
\citep{chen1997statistical,kulesza2011kdpps}.

Beyond these computational properties, weighted $m$-set distributions
have a covariance structure central to our analysis:
\citet[Corollary~1.3]{cesari2026effective} prove the covariance
inequality with constant $1/2$ used below.

\section{An efficient minimax-optimal algorithm for \texorpdfstring{$m$}{m}-set bandits}
\label{sec:main-results}

\subsection{Affine upper bound on the leverage score}
\label{sec:affine-leverage}

In this section, we derive the affine upper bound on the leverage score $x^\top M^{-1}x$ used by our algorithm.
Complete proofs of the consequences derived here are in \Cref{app:structural-proofs}.
The covariance bound itself is imported from \citet[Corollary~1.3]{cesari2026effective} and is not reproved here.

\begin{lemma}[Cesari--Colomboni covariance bound]
\label{lem:cov-dom}
Let $p\in\Delta(\cX)$ be a weighted $m$-set distribution with positive weights, where $1\le m\le d-1$, and let $\Sigma$ be its covariance matrix.
For every $i\in[d]$, define $v_i:=\Sigma_{ii}.$
Then
\[
    \Sigma
\succeq
    \frac12\left(D-\frac{vv^\top}{V}\right),
\]
where
$
    v
:=
    (v_1,\ldots,v_d)^\top$, $D:=\diag(v)
$
, and
$
    V
:=
    \sum_{i=1}^d v_i$
.
\end{lemma}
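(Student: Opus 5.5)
The plan is to reduce the matrix inequality to a comparison of edge weights between two graph Laplacians on $[d]$. First, both sides annihilate $\mathbf 1$. Since $\sum_i X_i=m$ almost surely, we have $\Sigma\mathbf 1=\Cov(X,\sum_i X_i)=0$. Similarly, $(D-vv^\top/V)\mathbf 1=v-v\,V/V=0$. So it suffices to compare the two quadratic forms.

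Write $\Sigma=D+C$, where $C$ has zero diagonal and $C_{ij}=\Cov(X_i,X_j)$ for $i\ne j$. Weighted $m$-set (conditional Bernoulli) distributions are strongly Rayleigh, since their generating polynomial $e_m(w_1z_1,\ldots,w_dz_d)$ is real stable. Hence they are negatively associated, and $C_{ij}\le 0$ for $i\ne j$. Combined with $\Sigma\mathbf 1=0$, this gives $\sum_{j\ne i}|C_{ij}|=v_i$. Therefore
\[
    z^\top\Sigma z=\frac12\sum_{i\ne j}|C_{ij}|\,(z_i-z_j)^2 .
\]
The same computation applied to the right-hand matrix, whose off-diagonal entries are $-v_iv_j/V$ and whose rows sum to zero, gives
\[
    z^\top\Bigl(D-\frac{vv^\top}{V}\Bigr)z=\frac12\sum_{i\ne j}\frac{v_iv_j}{V}\,(z_i-z_j)^2 .
\]
The lemma therefore follows from the entrywise bound
\[
    -\Cov(X_i,X_j)\;\ge\;\frac12\,\frac{v_iv_j}{V}\qquad\text{for all }i\ne j .
\]
As a sanity check, for the uniform distribution we have $v_i=r(1-r)$, $V=dr(1-r)$ and $-C_{ij}=r(1-r)/(d-1)$. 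The bound then holds with room to spare.

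The main obstacle is this pairwise negative-covariance bound. I would first express everything through elementary symmetric polynomials. Write $\pi_i=w_ie_{m-1}(w_{-i})/e_m(w)$ and $\pi_{ij}=w_iw_je_{m-2}(w_{-ij})/e_m(w)$, where $w_{-i}$ and $w_{-ij}$ denote $w$ with the indicated coordinates removed. Then $-C_{ij}=\pi_i\pi_j-\pi_{ij}$.

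The key fact I would try to use is the pair of identities $\sum_{j\ne i}(\pi_i\pi_j-\pi_{ij})=v_i$, together with the observation that, as a function of $j$, the quantity $(\pi_i\pi_j-\pi_{ij})/v_j$ should be comparable across $j$ up to a factor $2$. Concretely, I would aim to show that $(\pi_i\pi_j-\pi_{ij})/(v_iv_j)$ lies within a factor $2$ of its $v$-weighted average over $j$, which is $1/V$ up to the diagonal term. Summing then yields the constant $1/2$.

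To prove that comparability I would exploit the log-concavity (Newton) inequalities for elementary symmetric polynomials and the stability of the conditioned measures $X\mid X_i=1$ and $X\mid X_i=0$. These are again weighted $(m-1)$- and $m$-set distributions on $[d]\setminus\{i\}$. Conditioning and comparing their marginals expresses $-C_{ij}/v_i$ as a difference of marginals $\Pp(X_j=1\mid X_i=0)-\Pp(X_j=1\mid X_i=1)$. I would then bound this difference from below by something proportional to $v_j/V$, using a monotone coupling between the two conditional measures.

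If the entrywise route fails, the fallback is to prove the quadratic-form inequality directly. I would induct on $d$ via this conditioning decomposition, using the law of total covariance: $\Sigma$ dominates the covariance of the conditional mean given $X_i$, plus the conditional covariances.
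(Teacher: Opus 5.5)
The paper gives no proof of this lemma to compare against: it imports it from Cesari--Colomboni (Corollary~1.3), so your proposal has to stand on its own, and it does not. Your preliminary steps are sound. Weighted $m$-set measures are strongly Rayleigh, so $C_{ij}\le 0$. Both sides are then Laplacians, and edgewise domination would indeed suffice. The gap is that the edgewise bound you set out to prove, $-\Cov(X_i,X_j)\ge\frac12 v_iv_j/V$, is false, already for $m=1$.

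For $m=1$ we have $-\Cov(X_i,X_j)=\pi_i\pi_j$ and $v_i=\pi_i(1-\pi_i)$. So your bound is equivalent to $2V\ge(1-\pi_i)(1-\pi_j)$, whereas $V=1-\sum_k\pi_k^2\le 2(1-\pi_1)$. Concretely, take $d=3$ and $\pi=(1-\epsilon,\epsilon/2,\epsilon/2)$. Then $-\Cov(X_2,X_3)=\epsilon^2/4$, while $V<2\epsilon$ gives $v_2v_3/(2V)>\frac{\epsilon}{16}(1-\frac{\epsilon}{2})^2$. The ratio of the two sides is therefore $O(\epsilon)$, and no constant can rescue the edgewise route.

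Your two intermediate claims fail for the same reason. For $i=2$, the ratio $(\pi_i\pi_j-\pi_{ij})/(v_iv_j)$ is about $1/\epsilon$ at $j=1$ but about $1$ at $j=3$, so it is not within a factor $2$ of its weighted average. Likewise, $\Pp(X_3=1\mid X_2=0)-\Pp(X_3=1\mid X_2=1)=\pi_3/(1-\pi_2)\approx\epsilon/2$, so no coupling can bound it below by a constant times $v_3/V\approx 1/4$. Yet the lemma holds in this example, and for every $m=1$ measure: on $H_0$ one has $y^\top\Sigma^+y=\sum_i y_i^2/\pi_i\le\sum_i y_i^2/v_i$. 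The graph of $\Sigma$ is star-like, and the conductance between the two light items is carried by paths through the heavy one.

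Structurally, set $Q:=I-D^{-1}\Sigma$. By negative correlation and $\Sigma\mathbf 1=0$, this is a Markov kernel with $Q_{ij}=-\Sigma_{ij}/v_i$ off the diagonal, reversible with respect to $\nu:=v/V$. Moreover, $D-vv^\top/V=D(I-\mathbf 1\nu^\top)$. The lemma is therefore exactly a Poincar\'e inequality for $Q$ (spectral gap at least $1/2$). Your edgewise bound is instead the one-step minorization $Q_{ij}\ge\frac12\nu_j$. That is a much stronger property, and it fails whenever the chain must pass through a hub: in the example, from item $2$ the chain jumps to item $1$ with probability close to $1$ and reaches item $3$ only in two steps.

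The missing ingredient is a genuinely global argument. One option is a canonical-path or flow argument that routes the demands of $D-vv^\top/V$ along paths in the graph of $\Sigma$ with bounded congestion. Equivalently, by Thomson's principle, one exhibits flows certifying $y^\top\Sigma^+y\le 2\sum_i y_i^2/v_i$ on $H_0$. A direct spectral argument would also do.

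Your fallback does not supply this; it is only a sketch, and it faces a concrete obstruction. The map $v\mapsto\diag(v)-vv^\top/\mathbf 1^\top v$ is matrix-concave, since its quadratic forms are linear minus quadratic-over-linear. Hence averaging inductive bounds for $X\mid X_i=1$ and $X\mid X_i=0$ lands below, not above, the analogous matrix built from the averaged conditional variances, and those variances differ from $v$ anyway. The rank-one term $\pi_i(1-\pi_i)\Delta\Delta^\top$, with $\Delta:=\E[X\mid X_i=1]-\E[X\mid X_i=0]$, would have to absorb this defect as well as the whole $i$-th row and column. You give no argument that it does so with constant $1/2$, and that is where the real difficulty lies.
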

Using \Cref{lem:cov-dom}, we prove the following bound on the
Moore--Penrose pseudoinverse $\Sigma^+$ of $\Sigma$.

\begin{restatable}
{lemma}{covariancePseudoinverseBound}
\label{lem:covariance-pseudoinverse-bound}
Let $p\in\Delta(\cX)$ be a weighted $m$-set distribution with positive weights, where $1\le m\le d-1$.
Let $\Sigma$ be its covariance matrix, and define $v_i:=\Sigma_{ii}$ for every $i\in[d]$.
Then,
$
    \ker(\Sigma)
=
    \operatorname{span}\{\mathbf1\},
$
and, for every
$
    y\in H_0
:=
    \{z\in\R^d:\langle z,\mathbf1\rangle=0\}
$,
we have
\[
    y^\top\Sigma^+y
\le
    2\sum_{i=1}^d\frac{y_i^2}{v_i}.
\]
\end{restatable}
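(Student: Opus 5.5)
The argument has three steps: identify the kernel of $\Sigma$, compare $\Sigma$ with the matrix $B:=D-vv^\top/V$ from \Cref{lem:cov-dom} via the order-reversing property of pseudoinversion, and then compute $y^\top B^+y$ directly.

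\textbf{Kernel of $\Sigma$.} Since $\langle X,\mathbf 1\rangle=m$ almost surely, $\mathbf 1^\top\Sigma\mathbf 1=\Var(\langle X,\mathbf 1\rangle)=0$, so $\mathbf 1\in\ker(\Sigma)$. Positive weights give $0<\mu_i<1$ for every $i$, hence $v_i=\mu_i(1-\mu_i)>0$. Now $B$ is a diagonal matrix with positive entries minus a rank-one term, and $B\mathbf 1=v-v\,V/V=0$. For $z\in\R^d$, Cauchy--Schwarz weighted by $v$ gives
\[
    (v^\top z)^2=\Bigl(\sum_i v_iz_i\Bigr)^2\le V\sum_i v_iz_i^2=V\,z^\top Dz,
\]
with equality iff $z$ is constant. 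Hence $z^\top Bz\ge0$, with equality exactly on $\operatorname{span}\{\mathbf 1\}$, so $\ker(B)=\operatorname{span}\{\mathbf 1\}$. By \Cref{lem:cov-dom}, $\Sigma\succeq\frac12B$, so $z^\top\Sigma z=0$ forces $z^\top Bz=0$. This gives $\ker(\Sigma)\subseteq\operatorname{span}\{\mathbf 1\}$, and therefore equality.

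\textbf{Reduction to $B$.} Both $\Sigma$ and $B$ are positive semidefinite with kernel $\operatorname{span}\{\mathbf 1\}$, and $\Sigma\succeq\frac12B$. The order-reversing property recalled in \Cref{sec:setting}, applied with $c=1/2$, yields $y^\top\Sigma^+y\le 2\,y^\top B^+y$ for every $y\in H_0$. It remains to show $y^\top B^+y\le\sum_i y_i^2/v_i$.

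\textbf{Bounding $y^\top B^+y$.} This is the only computational step. For $y\in H_0$, let $u:=B^+y$, the unique solution in $H_0$ of $Bu=y$. The general solution of $Bu=y$ is $u'=u+a\mathbf 1$, and every such $u'$ satisfies $y^\top u'=y^\top u=y^\top B^+y$, because $y\perp\mathbf 1$. I would try the candidate $u'=D^{-1}y$:
\[
    Bu'=DD^{-1}y-\frac{v\,v^\top D^{-1}y}{V}=y-\frac{v\,\langle\mathbf 1,y\rangle}{V}=y,
\]
using $v^\top D^{-1}=\mathbf 1^\top$ and $\langle\mathbf 1,y\rangle=0$. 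Hence
\[
    y^\top B^+y=y^\top D^{-1}y=\sum_i\frac{y_i^2}{v_i},
\]
which is in fact an equality. Combining the three steps gives the claim.

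The main obstacle is keeping the pseudoinverse bookkeeping on $H_0$ rigorous. This is handled by the kernel identification above together with the ``any solution'' argument, which avoids computing $B^+$ explicitly.
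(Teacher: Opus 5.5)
Your proof is correct and follows the paper's argument. You compare $\Sigma$ with $D-vv^\top/V$ via the pseudoinverse order property, then note that $D^{-1}y$ and the pseudoinverse solution differ by a multiple of $\mathbf 1$, which is orthogonal to $y\in H_0$. The one minor difference is the kernel step: you derive $\ker(\Sigma)\subseteq\operatorname{span}\{\mathbf 1\}$ from $\Sigma\succeq\frac12(D-vv^\top/V)$, whereas the paper proves it directly from full support by swapping two items between actions, which keeps that step independent of the imported covariance bound.
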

\begin{proofsketch}
Let
$
    P
:=
    D-{vv^\top}/{V}
$.
A direct calculation, given in \Cref{app:proof-covariance-pseudoinverse-bound}, shows that
$
    \ker(P)
=
    \ker(\Sigma)
=
    \operatorname{span}\{\mathbf 1\}
$.
Therefore, both matrices are positive definite, and hence invertible, on $H_0$.
On $H_0$, the bound in \Cref{lem:cov-dom} gives
\[
    \Sigma
\succeq
    \frac12 P.
\]
Thus, on $H_0$, we have
\begin{equation}
\label{eq:covariance-pseudoinverse-reduction-1}
    y^\top\Sigma^+y
\le
    2y^\top P^+y
\qquad
    \text{for every }y\in H_0.
\end{equation}
We now relate $P^+$ to $D^{-1}$.
Since $y\in H_0$, we have $\sum_{i=1}^d y_i=0$.
Therefore,
\begin{align}
    PD^{-1}y
=
    \left(D-\frac{vv^\top}{V}\right)D^{-1}y 
=
    y-\frac{v}{V}\sum_{i=1}^d y_i
=
    y.
\label{eq:P-D-inverse-identity-1}
\end{align}
We also have
\begin{equation}
\label{eq:P-pseudoinverse-identity-1}
    P(P^+y)
=
    y,
\end{equation}
because $P^+$ coincides with the inverse of $P$ on $H_0$.
Subtracting \Cref{eq:P-pseudoinverse-identity-1} from \Cref{eq:P-D-inverse-identity-1} gives
$
    P(D^{-1}y-P^+y)
=
    0
$.
It follows that
\[
    D^{-1}y-P^+y
\in
    \ker(P)
=
    \operatorname{span}\{\mathbf 1\}.
\]
Since $y\in H_0$, this difference is orthogonal to $y$.
Thus,
\begin{equation}
\label{eq:P-pseudoinverse-quadratic-form-1}
    y^\top P^+y
=
    y^\top D^{-1}y
=
    \sum_{i=1}^d\frac{y_i^2}{v_i}.
\end{equation}
Combining \eqref{eq:covariance-pseudoinverse-reduction-1} and \eqref{eq:P-pseudoinverse-quadratic-form-1} proves the result.
See \Cref{app:proof-covariance-pseudoinverse-bound} for the complete proof.
\end{proofsketch}

The next lemma relates the pseudoinverse $\Sigma^+$ of the covariance matrix of a weighted $m$-set distribution to the inverse $M^{-1}$ of its second-moment matrix.

\begin{restatable}
{lemma}{inverseCovarianceIdentity}
\label{lem:inverse-covariance-identity}
Let $p \in \Delta(\cX)$ be a weighted $m$-set distribution with positive weights and $1\le m\le d-1$.
Then its second-moment matrix $M$ is positive definite and, for every $y \in H_0$,
\[
    y^\top M^{-1}y
=
    y^\top \Sigma^+ y.
\]
Consequently, for every $x\in\cX$,
\[
    x^\top M^{-1}x
=
    1+(x-\mu)^\top\Sigma^+(x-\mu),
\]
where $\mu=\mu(p)$ is the marginal vector of $p$.
\end{restatable}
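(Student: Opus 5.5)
We will use the decomposition $M=\Sigma+\mu\mu^\top$ together with the fact that every $x\in\cX$ satisfies $\langle x,\mathbf 1\rangle=m$. Consequently $\langle\mu,\mathbf 1\rangle=m$, and $\Sigma\mathbf 1=\Cov(X)\mathbf 1=\Cov(X,\langle X,\mathbf1\rangle)=0$.

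\emph{Positive definiteness.} By \Cref{lem:covariance-pseudoinverse-bound}, $\ker(\Sigma)=\operatorname{span}\{\mathbf 1\}$. Take $z$ with $z^\top Mz=z^\top\Sigma z+\langle\mu,z\rangle^2=0$. Both terms must vanish, so $z=a\mathbf 1$ and $0=\langle\mu,a\mathbf1\rangle=am$. Since $m\ge1$, we get $a=0$, hence $M\succ0$.

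\emph{The identity on $H_0$.} Fix $y\in H_0$ and set $u:=\Sigma^+y\in H_0$, so that $\Sigma u=y$. We look for $z$ with $Mz=y$ of the form $z=u+b\mathbf1$. Using $\Sigma\mathbf1=0$, we compute
\[
Mz=\Sigma u+\mu\bigl(\langle\mu,u\rangle+bm\bigr)=y+\mu\bigl(\langle\mu,u\rangle+bm\bigr).
\]
Choosing $b:=-\langle\mu,u\rangle/m$ gives $Mz=y$, so $M^{-1}y=u+b\mathbf1$. Then $y^\top M^{-1}y=y^\top u+b\langle y,\mathbf1\rangle=y^\top\Sigma^+y$, because $\langle y,\mathbf1\rangle=0$.

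\emph{The leverage formula.} For $x\in\cX$ we need $M^{-1}\mu$. Write $\mu=\Sigma w+c\mathbf1$ with $w\in H_0$; this is possible since $\mathbb R^d=\operatorname{range}(\Sigma)\oplus\operatorname{span}\{\mathbf1\}$ and $\operatorname{range}(\Sigma)=H_0$. Pairing with $\mathbf1$ gives $c=m/d$. Rather than building $M^{-1}\mu$ directly, we use a cleaner route.

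Write $x=\mu+y$ with $y:=x-\mu\in H_0$, and look for $g:=M^{-1}\mu$. We try $g=\alpha\mathbf1$, which gives $Mg=\alpha(\Sigma\mathbf1+\mu\langle\mu,\mathbf1\rangle)=\alpha m\mu$. Taking $\alpha=1/m$ yields $M^{-1}\mu=\mathbf1/m$. Hence
\[
\mu^\top M^{-1}\mu=\frac{\langle\mu,\mathbf1\rangle}{m}=1
\qquad\text{and}\qquad
y^\top M^{-1}\mu=\frac{\langle y,\mathbf1\rangle}{m}=0.
\]
Expanding,
\[
x^\top M^{-1}x=\mu^\top M^{-1}\mu+2y^\top M^{-1}\mu+y^\top M^{-1}y=1+y^\top\Sigma^+y,
\]
where the last term is rewritten by the $H_0$ identity proved above.

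The only delicate point is the identity $M^{-1}\mu=\mathbf1/m$, which relies on $\Sigma\mathbf1=0$ and $\langle\mu,\mathbf1\rangle=m$; everything else is routine linear algebra.
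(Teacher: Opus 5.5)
Your proof is correct and follows essentially the same route as the paper: both rest on $M\mathbf 1=m\mu$ (so $M^{-1}\mu=\mathbf 1/m$) and on showing that $M^{-1}y-\Sigma^+y\in\ker(\Sigma)=\operatorname{span}\{\mathbf 1\}$, which is orthogonal to $y\in H_0$. The differences are cosmetic. You build $M^{-1}y=\Sigma^+y+b\mathbf 1$ directly instead of checking that $M^{-1}y$ solves $\Sigma z=y$, you get $M\succ 0$ from $M=\Sigma+\mu\mu^\top$ and the kernel of $\Sigma$ rather than redoing the item-swap argument, and the abandoned paragraph decomposing $\mu=\Sigma w+c\mathbf 1$ is unused and can be deleted.
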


\begin{proofsketch}
If $a^\top Ma=0$, full support implies $a^\top x=0$ for every $x\in\cX$.
Comparing two actions that differ by swapping items $i$ and $j$ forces $a_i=a_j$.
Hence $a=c\mathbf1$, and then $0=a^\top x=cm$ gives $c=0$.
Thus $M\succ0$.
Hence,
$
    M\mathbf 1
=
    \E_{X\sim p}[XX^\top\mathbf 1]
=
    m\mu
$.
In addition, since $M$ is invertible,
\begin{equation}
\label{eq:M-inverse-mu-1}
    M^{-1}\mu
=
    \frac{\mathbf 1}{m}.
\end{equation}
Now fix $y\in H_0$.
By \Cref{eq:M-inverse-mu-1},
\[
    \mu^\top M^{-1}y= (M^{-1} \mu)^\top y
=
    \frac{1}{m}\mathbf 1^\top y
=
    0.
\]
Therefore,
\[
    \Sigma M^{-1}y
=
    (M-\mu\mu^\top)M^{-1}y
=
    y.
\]
Since $\Sigma^+$ is the inverse of $\Sigma$ on $H_0$, the vectors $M^{-1}y$ and $\Sigma^+y$ differ by an element of $\ker(\Sigma)=\operatorname{span}\{\mathbf 1\}$.
This difference is orthogonal to $y$, and thus
\begin{equation}
\label{eq:inverse-covariance-quadratic-form-1}
    y^\top M^{-1}y
=
    y^\top\Sigma^+y.
\end{equation}
Finally, fix $x\in\cX$.
Since both $x$ and $\mu$ have coordinates that sum to $m$, we have $x-\mu\in H_0$.
Expanding $x=\mu+(x-\mu)$ and using \Cref{eq:M-inverse-mu-1} and \Cref{eq:inverse-covariance-quadratic-form-1} gives
\[
    x^\top M^{-1}x
=
    1+(x-\mu)^\top\Sigma^+(x-\mu).
\]
See \Cref{app:proof-inverse-covariance} for the complete proof.
\end{proofsketch}
We now combine \Cref{lem:covariance-pseudoinverse-bound,lem:inverse-covariance-identity} to obtain the affine upper bound on the leverage score used by our algorithm.

\begin{restatable}
{lemma}{linearKWMajorant}
\label{lem:linear-majorant-final}
Let $p\in\cP_\lambda$ be a weighted $m$-set distribution, where $1\le m\le d-1$, and let $\mu$ and $M$ be its marginal vector and
second-moment matrix.
For every $x\in\cX$, define
\[
    \phi_p(x)
:=
    1+2\sum_{i=1}^d \frac{(x_i-\mu_i)^2}{\mu_i(1-\mu_i)}.
\]
Then, for every $x\in\cX$,
$
    x^\top M^{-1}x
\le
    \phi_p(x)
$.
Moreover,
$
    4\phi_p(x)
=
    c_p^\top x
$,
where
\[
    c_{p,i}
=
    4\left( \frac{ 1+2\sum_{j=1}^d\frac{\mu_j}{1-\mu_j} }{m} + 2\frac{1-2\mu_i}{\mu_i(1-\mu_i)} \right)
\qquad
    \text{for every }i\in[d].
\]
Finally,
$
    \E_{X\sim p}[\phi_p(X)]
=
    2d+1
$,
and, for every $x\in\cX$,
$
    0
\le
    \phi_p(x)
\le
    1+{4d}/{\lambda}
$.
\end{restatable}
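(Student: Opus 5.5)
The plan has four parts: the leverage bound, the affine rewriting, the mean identity, and the range bound.

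\emph{Leverage bound.} Fix $x\in\cX$ and set $y:=x-\mu$. Since both $x$ and $\mu$ have coordinates summing to $m$, we have $y\in H_0$. By \Cref{lem:inverse-covariance-identity},
\[
x^\top M^{-1}x=1+y^\top\Sigma^+y .
\]
By \Cref{lem:covariance-pseudoinverse-bound},
\[
y^\top\Sigma^+y\le 2\sum_i y_i^2/v_i .
\]
It remains to identify $v_i=\Sigma_{ii}=\Var(X_i)=\mu_i(1-\mu_i)$, because $X_i$ is Bernoulli$(\mu_i)$. This gives $x^\top M^{-1}x\le\phi_p(x)$. Membership in $\cP_\lambda$ ensures $0<\mu_i<1$, so the denominators are nonzero. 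A weighted distribution with positive weights already gives this, since $1\le m\le d-1$.

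\emph{Affine representation.} For $x\in\cX$ we use $x_i^2=x_i$ to expand
\[
(x_i-\mu_i)^2=x_i(1-2\mu_i)+\mu_i^2 .
\]
Then
\[
\phi_p(x)=1+2\sum_i\frac{\mu_i}{1-\mu_i}+2\sum_i x_i\frac{1-2\mu_i}{\mu_i(1-\mu_i)} .
\]
The constant term is converted into a linear one through $\sum_i x_i/m=1$. Multiplying by $4$ yields exactly the stated $c_{p,i}$, so $4\phi_p(x)=c_p^\top x$.

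\emph{Mean identity.} We have $\E_p[(X_i-\mu_i)^2]=\mu_i(1-\mu_i)$. Each summand therefore has expectation $1$, and $\E_p[\phi_p(X)]=1+2d$.

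\emph{Range bound.} Nonnegativity is immediate, since every term of $\phi_p$ is nonnegative. For the upper bound, only the terms with $x_i\ne\mu_i$ contribute, with
\[
\frac{(x_i-\mu_i)^2}{\mu_i(1-\mu_i)}=
\begin{cases}
(1-\mu_i)/\mu_i, & x_i=1,\\
\mu_i/(1-\mu_i), & x_i=0.
\end{cases}
\]
A per-coordinate bound $1/(\lambda r)$ summed over $d$ items gives $d^2/(\lambda m)$, which is too large. This step is the main obstacle. The fix is to split by selected and unselected items.

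\emph{Selected items.} There are $m$ of them. Using $\mu_i\ge\lambda r=\lambda m/d$, we get $(1-\mu_i)/\mu_i\le 1/\mu_i\le d/(\lambda m)$. The sum over selected items is at most $d/\lambda$.

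\emph{Unselected items.} There are $d-m$ of them. Using $1-\mu_i\ge\lambda(1-r)=\lambda(d-m)/d$, we get $\mu_i/(1-\mu_i)\le d/(\lambda(d-m))$. The sum over unselected items is at most $d/\lambda$.

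Hence $\sum_i(\cdot)\le 2d/\lambda$, and so $\phi_p(x)\le 1+4d/\lambda$, as claimed.
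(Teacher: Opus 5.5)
Your proposal is correct and follows essentially the same route as the paper: the leverage bound via \Cref{lem:inverse-covariance-identity} and \Cref{lem:covariance-pseudoinverse-bound} with $v_i=\mu_i(1-\mu_i)$; the expansion $(x_i-\mu_i)^2=x_i(1-2\mu_i)+\mu_i^2$, with the constant redistributed through $\sum_i x_i/m=1$; the mean identity from $\E[(X_i-\mu_i)^2]=\mu_i(1-\mu_i)$; and the range bound from splitting the $m$ selected coordinates (using $\mu_i\ge\lambda r$) from the $d-m$ unselected ones (using $1-\mu_i\ge\lambda(1-r)$). One cosmetic point: the remark ``only the terms with $x_i\ne\mu_i$ contribute'' is vacuous, since $\mu_i\in(0,1)$ makes every term strictly positive, but this does not affect the argument.
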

\begin{proofsketch}
Fix $x\in\cX$.
Since both $x$ and $\mu$ have coordinates that sum to $m$, we have $x-\mu\in H_0$.
Therefore, \Cref{lem:inverse-covariance-identity,lem:covariance-pseudoinverse-bound} give
\[
    x^\top M^{-1}x
=
    1+(x-\mu)^\top\Sigma^+(x-\mu)
\le
    1+2\sum_{i=1}^d \frac{(x_i-\mu_i)^2}{\mu_i(1-\mu_i)}
=
    \phi_p(x).
\]
Since $\E_{X\sim p}[(X_i-\mu_i)^2]=\mu_i(1-\mu_i)$, we have
$
    \E_{X\sim p}[\phi_p(X)]
=
    1+2d
$. Moreover, $p\in\cP_\lambda$ implies $ \mu_i\ge\lambda r$ and $1-\mu_i\ge\lambda(1-r)$, where $r=m/d$.
Separating the $m$ coordinates for which $x_i=1$ from the $d-m$ coordinates for which $x_i=0$ gives
\[
    \sum_{i=1}^d \frac{(x_i-\mu_i)^2}{\mu_i(1-\mu_i)}
\le
    \frac{m}{\lambda r} + \frac{d-m}{\lambda(1-r)}
=
    \frac{2d}{\lambda}.
\]
Hence,
\[
    0\le\phi_p(x)
\le
    1+\frac{4d}{\lambda}.
\]
Finally, using $x_i^2=x_i$, we obtain
\[
    \phi_p(x)
=
    1 + 2\sum_{j=1}^d\frac{\mu_j}{1-\mu_j} + 2\sum_{i=1}^d \frac{1-2\mu_i}{\mu_i(1-\mu_i)}x_i.
\]
Since $\sum_i x_i=m$, we can write the constant term as a linear function of $x$.
Multiplying the resulting affine representation by four gives $4\phi_p(x)=c_p^\top x$, with $c_p$ defined in the
statement.
See \Cref{app:proof-linear-majorant} for the complete proof.
\end{proofsketch}
\subsection{Algorithm and high-probability regret}
\label{sec:algorithm-regret}

In this section, we present \Cref{alg:exp3-kw} and prove its high-probability regret guarantee.
At each round $t$, the learner draws an action $X_t$ from a weighted $m$-set distribution $p_t\in\cP_{\lambda/2}$ and
observes only the scalar loss $\langle X_t,\ell_t\rangle$.
It uses this observation to construct the KW loss estimate $\widehat{\ell}_t$.

At round $t$, we set $\phi_t:=\phi_{p_t}$ and $c_t:=c_{p_t}$, where $\phi_p$ and $c_p$ are defined in \Cref{lem:linear-majorant-final}.
The learner forms a biased surrogate loss by subtracting the affine correction $\eta c_t^\top x$ from
$\langle x,\widehat{\ell}_t\rangle$.
Finally, the learner computes an approximate entropic OMD update toward $\cP_\lambda$ (defined in \cref{eq:P_lambda_set}).
The update is accurate enough for the OMD analysis and returns a distribution in $\cP_{\lambda/2}$, so
its marginal probabilities stay away from $0$ and $1$.
We state the resulting regret bound in \Cref{thm:main}.

\begin{algorithm}[t]
\caption{Affine KW--OMD with approximate entropic updates toward $\cP_\lambda$}
\label{alg:exp3-kw}
\begin{algorithmic}[1]
\Require $T\ge1, \delta \in (0,1)$
\vspace{1mm}
\State Set $\eta
\gets\min \left \{
\frac{1}{256d},\sqrt{\frac{\log(12K/\delta)}
{320dT}} \right\}$ \label{line:alg-learning-rate}
\vspace{2mm}
\State Set $\lambda\gets128\eta d$ and
$\varepsilon_{\rm p}\gets\eta/T$ \label{line:alg-smoothing}
\State Initialize $\theta_1\gets0$ and $p_1\gets p_{\theta_1}=U$
\Comment{{ $U$ is a uniform distribution}} \label{line:alg-initialize}

\For{$t=1,\ldots,T$} \label{line:alg-loop}
    \State Compute \label{line:alg-moments}
    $
    \displaystyle
    \mu_t\gets\mu(p_t),
    $ $
    M_t\gets M(p_t)
    =
    \E_{X\sim p_t}[XX^\top]
    $

    \State Set $\phi_t\gets\phi_{p_t}$ and
    $c_t\gets c_{p_t}$, so that, for every $x\in\cX$,
    \label{line:alg-affine}
    \begin{equation*}
    \phi_t(x)
    \gets
    1+2\sum_{i=1}^d
    \frac{(x_i-\mu_{t,i})^2}
    {\mu_{t,i}(1-\mu_{t,i})}
    \quad \textnormal{and} \quad
    c_t^\top x
    =
    4\phi_t(x)
    \end{equation*}
    \State Draw $X_t\sim p_t$ \label{line:alg-sample}
    \State Observe $Y_t=\langle X_t,\ell_t\rangle$ \label{line:alg-feedback}
    \vspace{1mm}
    \State Compute \label{line:alg-estimate}
    $
    \widehat{\ell}_t
    \gets
    M_t^{-1}X_tY_t
    $
    \vspace{1mm}
    \State Define \label{line:alg-surrogate}
    $
    Z_t(x)
    \gets
    \langle x,\widehat{\ell}_t\rangle
    -
    \eta c_t^\top x,
    $ for every $x\in\cX$
    \vspace{2mm}
    \State Set $z_t\gets\widehat\ell_t-\eta c_t$,
    $\widetilde\theta_{t+1}\gets\theta_t-\eta z_t$, and
    $\widetilde p_{t+1}\gets p_{\widetilde\theta_{t+1}}$
    \label{line:alg-unprojected}
    \State Apply a fixed deterministic implementation of the procedure in
    \Cref{lem:approximate-projection} to obtain $p_{t+1}=p_{\theta_{t+1}}$ satisfying
    \label{line:alg-projection}
    \[
    p_{t+1}\in\cP_{\lambda/2}
    \]
    and, simultaneously for every $q\in\cP_\lambda$,
    \[
    \KL(q\|p_{t+1})
    +\KL(p_{t+1}\|\widetilde p_{t+1})
    \le
    \KL(q\|\widetilde p_{t+1})
    +\varepsilon_{\rm p}.
    \]
\EndFor
\end{algorithmic}
\end{algorithm}

\begin{restatable}[High-probability KW bound]
{theorem}{mainRegretTheorem}
\label{thm:main}
There exists a universal constant $C>0$ such that the following holds.
Let $T\ge1$ be an integer, let $\delta\in(0,1)$, and assume
$1\le m\le d-1$.
Run \Cref{alg:exp3-kw}.
Then, with probability at least $1-\delta$,
\[
    R_T
\le
    C\sqrt{ dT\left( \log K+\log\frac1\delta \right) },
\]
where $K=\binom{d}{m}$.
One may take $C=160$.
\end{restatable}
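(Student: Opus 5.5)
We follow the standard high-probability OMD template, adapted to the affine correction. First, if $\eta=1/(256d)$ (the first term in the minimum), then $T\le 320\cdot 256^{-2}\cdot d^{-1}\log(12K/\delta)\cdot 256^2 d^2/ (320 d)$; concretely $\sqrt{\log(12K/\delta)/(320dT)}\ge 1/(256d)$ gives $T\lesssim d\log(K/\delta)$, so the trivial bound $R_T\le 2T\le C\sqrt{dT\log(K/\delta)}$ applies. Hence we may assume $\eta=\sqrt{\log(12K/\delta)/(320dT)}$ and $\lambda=128\eta d\le 1/2$.

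Fix $x\in\cX$ and the smoothed comparator $q_x\in\cP_\lambda$ from \eqref{eq:smoothed-comparator}. We decompose
\[
R_T\le \underbrace{\sum_t\bigl(\langle X_t,\ell_t\rangle-\langle\mu_t,\ell_t\rangle\bigr)}_{(\mathrm{A})}
+\underbrace{\sum_t\bigl(\langle\mu_t,\ell_t\rangle-\E_{p_t}Z_t+\E_{q_x}Z_t-\E_{q_x}\langle X,\ell_t\rangle\bigr)}_{(\mathrm{B})}
+\underbrace{\sum_t\bigl(\E_{p_t}Z_t-\E_{q_x}Z_t\bigr)}_{(\mathrm{C})}+2\lambda T .
\]
Term (A) is a bounded martingale difference sum, so Azuma gives $O(\sqrt{T\log(1/\delta)})$.

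For (C), we use the approximate three-point inequality of \Cref{lem:approximate-projection} together with the exponential-weights form of the unprojected step, $\widetilde p_{t+1}(y)\propto p_t(y)e^{-\eta Z_t(y)}$. This yields
\[
\eta\bigl(\E_{p_t}Z_t-\E_{q_x}Z_t\bigr)\le \KL(q_x\|p_t)-\KL(q_x\|p_{t+1})+\log\E_{p_t}e^{-\eta (Z_t-\E_{p_t}Z_t)}+\varepsilon_{\rm p}.
\]
Telescoping gives $\KL(q_x\|U)\le\log K$, plus $T\varepsilon_{\rm p}=\eta$. For the log-MGF we need $\eta|Z_t(y)|\le1$. \Cref{lem:linear-majorant-final} gives $y^\top M_t^{-1}y\le\phi_t(y)\le 1+8d/\lambda$ on $\cP_{\lambda/2}$. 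By Cauchy--Schwarz in the $M_t^{-1}$ geometry, $|\langle y,\widehat\ell_t\rangle|\le\sqrt{\phi_t(y)\phi_t(X_t)}\le 1+8d/\lambda$, and $\eta(1+8d/\lambda)\le 1/16+\eta$ by the choice $\lambda=128\eta d$. The correction satisfies $\eta^2 c_t^\top y\le 4\eta^2(1+8d/\lambda)$, which is also small. Thus $e^{-u}\le1-u+u^2$ applies, and the log-MGF is at most $\eta^2\E_{p_t}[Z_t^2]\lesssim\eta^2\bigl(\E_{p_t}\langle X,\widehat\ell_t\rangle^2+\eta^2\E_{p_t}(c_t^\top X)^2\bigr)$. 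We then compute
\[
\E_{p_t}\langle X,\widehat\ell_t\rangle^2=Y_t^2X_t^\top M_t^{-1}X_t\le\phi_t(X_t),
\]
whose conditional mean is $2d+1$. The $\eta^2(c_t^\top X)^2$ part is bounded by $\eta\cdot O(1)\cdot\E\phi_t=O(\eta d)$. The remaining sum $\eta\sum_t\phi_t(X_t)$ concentrates around $(2d+1)\eta T$ by Freedman, since the increments are bounded by $\eta(1+8d/\lambda)=O(1)$. This contributes $O(\eta dT+\log(1/\delta)/\eta\cdot\text{small})$ to the bound after dividing by $\eta$.

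For (B), note $\E_t\widehat\ell_t=\ell_t$, so the $\widehat\ell$ parts are martingale differences. The $p_t$-part $\langle\mu_t,\ell_t-\widehat\ell_t\rangle$ has conditional variance at most $\E\phi_t\le 2d+1$ and is handled by Freedman. The key step is the $q_x$-part $\E_{q_x}\langle X,\widehat\ell_t-\ell_t\rangle$. We apply the exponential supermartingale inequality $\E_t\exp(\gamma\xi_t-\gamma^2\E_t\xi_t^2)\le1$ with $\gamma=\eta/4$, where $\xi_t$ is this term. Its conditional second moment is at most
\[
\E_{q_x}\bigl[X^\top M_t^{-1}X\bigr]\le\E_{q_x}\phi_t(X)=\tfrac14\E_{q_x}c_t^\top X,
\]
using Jensen, $\E_t[\widehat\ell\widehat\ell^\top]\preceq M_t^{-1}$, and \Cref{lem:linear-majorant-final}. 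Hence with probability $1-\delta/3$,
\[
\sum_t\xi_t\le\tfrac{\eta}{4}\sum_t\tfrac14\E_{q_x}c_t^\top X\cdot O(1)+\tfrac{4\log(3K/\delta)}{\eta}
\]
after a union bound over the $K$ comparators $x$. The $q_x$-part of $Z_t$ contributes exactly $-\eta\sum_t\E_{q_x}c_t^\top X$, which dominates the positive term. So that piece is nonpositive, as anticipated in the techniques paragraph. The $+\eta\E_{p_t}c_t^\top X=4\eta(2d+1)$ from the $p_t$-part adds $O(\eta dT)$.

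Collecting terms gives
\[
R_T\lesssim \frac{\log(K/\delta)}{\eta}+\eta dT+\lambda T+\sqrt{T\log(1/\delta)}.
\]
Since $\lambda T=128\eta dT$, the choice of $\eta$ balances everything to $O(\sqrt{dT\log(12K/\delta)})$, and the union bound over the events yields probability $1-\delta$. The main obstacle is (B), the $q_x$-estimation term. It is the only place where the $\sqrt{d\log K}$ rate, rather than $d\sqrt{\log K}$, is at stake, and it requires matching the concentration constant to the factor $4$ in $c_t$ and ensuring the range condition $\gamma|\xi_t|\le1$ holds via the $\cP_{\lambda/2}$ bound. If the constants do not close, I would first increase the scaling in $c_t$ or shrink $\gamma$ accordingly.
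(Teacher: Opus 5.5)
Your proposal is correct and follows essentially the same route as the paper. Both use the smoothed comparator $q_x\in\cP_\lambda$, the approximate-projection OMD inequality with the second-order term reduced to $\phi_t(X_t)$ and then concentrated, KW concentration for $q_x$ with a union bound over the $K$ actions, whose variance term is absorbed by the $-\eta\sum_t\E_{q_x}[c_t^\top X]$ coming from the affine correction, Azuma for the realized loss, and the trivial bound $R_T\le 2T$ when $\eta$ is clipped.

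The differences are cosmetic: you use Freedman in place of the paper's hand-built exponential supermartingales, and $\gamma=\eta/4$ rather than $\eta$ in the comparator concentration. Because you track constants only up to $O(\cdot)$, you obtain some universal $C$ but do not verify the explicit value $C=160$. That value needs the paper's constant bookkeeping, including the conversion $\log(12K/\delta)\le\frac{23}{5}\left(\log K+\log\frac1\delta\right)$.
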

\begin{proofsketch}
    For every $x\in\cX$, define the smoothed comparator
\[
    q_x
:=
    (1-\lambda)\delta_x+\lambda U,
\]
where $\delta_x$ puts all its probability on $x$ and $U$ is the uniform distribution over $\cX$.
By construction, $q_x\in\cP_\lambda$.
The concentration bounds below hold simultaneously for every $x\in\cX$, so we can then choose $x$ to be a best action.

By adding and subtracting the expected losses under $p_t$ and $q_x$, and applying \Cref{lem:realized-loss-martingale}, we obtain, with high probability,
\begin{align}
\label{eq:main-regret-decomposition}
    R_T
\leq
    \underbrace{ \sum_{t=1}^T \left( \E_{X\sim p_t}[\langle X,\ell_t\rangle] - \E_{X\sim q_x}[\langle X,\ell_t\rangle] \right) }_{=: (\star)} + O\left( \sqrt{T\log\frac{1}{\delta}} \right) + O(\lambda T).
\end{align}
The square-root term controls the difference between the learner's realized and expected losses.
The term $O(\lambda T)$ is the cost of using $q_x$ in place of $x$. It remains to bound $(\star)$.
Adding and subtracting the expectations of the estimated losses gives
\begin{align}
    (\star)
&=
    \sum_{t=1}^T \left( \E_{X\sim p_t}[\langle X,\ell_t\rangle] - \E_{X\sim p_t}[\langle X,\widehat{\ell}_t\rangle] \right) \nonumber
\\
&\quad+
    \sum_{t=1}^T\left(\E_{X\sim p_t}[\langle X,\widehat{\ell}_t\rangle] - \E_{X\sim q_x}[\langle X,\widehat{\ell}_t\rangle] \right) + \sum_{t=1}^T \left(\E_{X\sim q_x}[\langle X,\widehat{\ell}_t\rangle] - \E_{X\sim q_x}[\langle X,\ell_t\rangle]\right).
\label{eq:main-central-decomposition}
\end{align}
The first and third terms in \eqref{eq:main-central-decomposition} can be bounded by applying \Cref{lem:kw-concentration} under $p_t$ and $q_x$, respectively.
To bound the second term, recall the surrogate loss
\[
    Z_t(x)
:=
    \langle x,\widehat{\ell}_t\rangle - \eta c_t^\top x,
\qquad
    x\in\cX.
\]
Since
\[
    \langle x,\widehat{\ell}_t\rangle
=
    Z_t(x)+\eta c_t^\top x,
\]
we can apply \Cref{lem:omd-stability} and \Cref{lem:leverage} to the surrogate losses $Z_t$.
Moreover, \Cref{lem:linear-majorant-final} gives
\[
    \E_{X\sim p_t}[c_t^\top X]
=
    4(2d+1)
=
    O(d).
\]
Combining these results, we have
\begin{equation}
\label{eq:main-central-bound}
    (\star)
\leq
    O\left( \frac{\log K+\log(1/\delta)}{\eta} + \eta dT +1 \right) + \underbrace{ \left( -\eta+\frac{\eta}{4} \right) \sum_{t=1}^T \E_{X\sim q_x}[c_t^\top X] }_{=: (\star\star)\leq0}.
\end{equation}
The coefficient $-\eta$ in $(\star\star)$ comes from the affine correction $-\eta c_t^\top x$ in $Z_t$, while the coefficient $\eta/4$ comes from the concentration bound for the third term in \eqref{eq:main-central-decomposition}.
Since
\[
    c_t^\top X
=
    4\phi_t(X)
\geq
    0,
\]
the term $(\star\star)$ is nonpositive.
The additive constant in \eqref{eq:main-central-bound} is the accumulated error of the approximate KL updates: by the choice $\varepsilon_{\rm p}=\eta/T$, its total contribution is $T\varepsilon_{\rm p}/\eta=1$.
Substituting \eqref{eq:main-central-bound} into \eqref{eq:main-regret-decomposition} gives
\begin{equation}
\label{eq:main-regret-before-parameters}
    R_T
\leq
    O\left( \frac{\log K+\log(1/\delta)}{\eta} + \eta dT + 1 + \lambda T + \sqrt{T\log\frac{1}{\delta}} \right).
\end{equation}
Finally, substituting $\lambda=128\eta d$ and the value of $\eta$, and combining the resulting high-probability estimate with the deterministic bound $R_T\le2T$, gives the stated result with $C=160$.
The complete two-case calculation is given in \Cref{app:proof-main}.
\end{proofsketch}

Let $s:=\min\{m,d-m\}$.
Since
$
    \log K
\le
    s\log\left({ed}/{s}\right),
$
the bound in \Cref{thm:main} is $\widetilde O(\sqrt{dsT})$.
In particular, when $m\le d/2$, it is $\widetilde O(\sqrt{dmT})$.

\subsection{Polynomial-time implementation}
\label{sec:implementation}

We now show that \Cref{alg:exp3-kw} can be implemented in polynomial time without enumerating $\cX$.

The key idea is to maintain $p_t$ in the weighted $m$-set form $p_{\theta_t}$, with $\theta_t\in\mathbb R^d$, thereby allowing all steps of the algorithm to be performed without enumerating $\cX$. 
The convex problem used in the update is solved to the explicit inverse-polynomial objective accuracy established in
\Cref{lem:approximate-projection}.

\begin{restatable}{proposition}{efficientimplementation}
\label{prop:efficient-implementation}
\Cref{alg:exp3-kw} can be implemented in $T\cdot\poly(d,m,\log T)$ time and $\poly(d,m)$ space, without enumerating
$\cX$. At every round, $p_t$ is a weighted $m$-set distribution $p_{\theta_t}$ represented by the $d$ parameters
$\theta_{t,1},\ldots,\theta_{t,d}$.
\end{restatable}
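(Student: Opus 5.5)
The plan is to walk through one round of \Cref{alg:exp3-kw} and show that each line costs $\poly(d,m,\log T)$ time and $\poly(d,m)$ space when $p_t$ is stored as $p_{\theta_t}$. By induction on $t$, the key structural fact is that $p_t=p_{\theta_t}$ for some $\theta_t\in\R^d$. The base case is $\theta_1=0$, which gives $p_1=U$. For the inductive step there are two pieces. First, the unprojected update $\widetilde\theta_{t+1}=\theta_t-\eta z_t$ stays in the family, because the surrogate $Z_t(x)=\langle x,z_t\rangle$ is linear on $\cX$ by \Cref{lem:linear-majorant-final}; hence multiplying $p_{\theta_t}(x)$ by $\exp(-\eta Z_t(x))$ and renormalizing gives exactly $p_{\widetilde\theta_{t+1}}$. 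Second, the projection step returns some $p_{\theta_{t+1}}$ by \Cref{lem:approximate-projection}.

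For the per-round costs, I would first compute $e_k$ of the weight vector $w=e^{\theta_t}$ and of the leave-one-out and leave-two-out vectors $w_{-i}$ and $w_{-\{i,j\}}$, using the standard $O(dm)$ dynamic programme (the recurrence of \citet{chen1997statistical}). These give
\[
\mu_{t,i}=\frac{w_i\,e_{m-1}(w_{-i})}{e_m(w)}
\]
and
\[
M_{t,ij}=\frac{w_iw_j\,e_{m-2}(w_{-\{i,j\}})}{e_m(w)} \quad (i\neq j),
\]
with $M_{t,ii}=\mu_{t,i}$. The total cost is $O(d^3m)$ arithmetic operations. From $\mu_t$ the vector $c_t$ comes in $O(d)$ time through the closed form in \Cref{lem:linear-majorant-final}. \Cref{lem:inverse-covariance-identity} guarantees that $M_t\succ0$, so computing $\widehat\ell_t=M_t^{-1}X_tY_t$ is a single $d\times d$ linear solve. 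Sampling $X_t\sim p_t$ uses the sequential conditional-Bernoulli procedure \citep[Procedure~3]{chen1997statistical}. That procedure decides each item $i$ in turn with probability $w_i e_{r-1}(w_{i+1:d})/e_r(w_{i:d})$, where $r$ is the number of items still to be chosen, and it runs in $O(dm)$ time after the suffix tables are built. Throughout, the state kept across rounds is only $\theta_t\in\R^d$ together with $O(dm)$ tables, which gives $\poly(d,m)$ space and no enumeration of $\cX$.

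The main obstacle is the projection step, and I would defer it to \Cref{lem:approximate-projection}. The KL projection of $p_{\widetilde\theta}$ onto $\cP_\lambda$ has linear moment constraints on the marginals. Its Lagrangian dual therefore optimizes over $2d$ multipliers, and the primal solution is $p_{\widetilde\theta+\alpha-\beta}$. This is why the projection stays in the family. The dual objective is a concave function built from the log-partition $F$. Its value, gradient ($\mu$) and Hessian ($\Sigma$) are computable with the routines above, so an ellipsoid or interior-point method reaches accuracy $\varepsilon_{\rm p}=\eta/T$ in $\poly(d,m,\log(1/\varepsilon_{\rm p}))=\poly(d,m,\log T)$ iterations, since $\eta\ge 1/\poly(d,T)$.

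The delicate points are twofold. The multipliers must have a polynomially bounded domain; I would get this from the slack between $\cP_\lambda$ and $\cP_{\lambda/2}$ together with $\lambda\ge 1/\poly$. The bit-complexity of $e^{\theta}$ must also be controlled; I would handle this by working in log-space, noting that $\|\theta_t\|$ grows at most polynomially because $|z_t|\le\poly(d)/\lambda$. Summing over $T$ rounds then gives the claimed $T\cdot\poly(d,m,\log T)$ running time.
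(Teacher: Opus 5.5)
Your handling of the moments, the sampling routine, the positive definiteness of $M_t$, and the fact that the affine surrogate keeps $\widetilde p_{t+1}=p_{\widetilde\theta_{t+1}}$ in the family is correct and matches the paper. The gap is in the projection step.

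\textbf{The unverified hypothesis.} You defer the projection to \Cref{lem:approximate-projection}, but that lemma has a hypothesis, $\KL(U\|p_{\widetilde\theta})\le T$, which you never check. This hypothesis is what puts the exact dual minimizer inside $\mathcal D_R$ with $R=2dT$. The multiplier bound comes from evaluating the Lagrangian at the Slater point $U$:
\[
\|\alpha^\star\|_1+\|\beta^\star\|_1\le \frac{\KL(U\|p_{\widetilde\theta})}{(1-\lambda)\min\{r,1-r\}}.
\]
It does not come from the slack between $\cP_\lambda$ and $\cP_{\lambda/2}$. That slack enters only through $h$ and the requirement $\tau\le 2h^2$, which ensures that an approximate dual solution still gives $p^+\in\cP_{\lambda/2}$.

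\textbf{The fallback argument is circular.} You argue that $\|\theta_t\|$ grows polynomially because $|z_t|\le\poly(d)/\lambda$. This accounts only for the step $\theta_t\mapsto\widetilde\theta_{t+1}$. It ignores the projection shift $\alpha_t-\beta_t$, whose size is exactly what you are trying to bound. If you bound that shift via the Slater estimate in terms of the spread of $\widetilde\theta_{t+1}$ and iterate, the naive recursion gives a bound that grows geometrically in $t$. That yields $\log R=\Theta(T)$, not the claimed $\poly(\log T)$ cost per round.

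\textbf{The missing idea: a KL potential carried through the induction.} The paper proves jointly that $p_t\in\cP_{\lambda/2}$ and $\KL(U\|p_t)\le (t-1)(\tfrac12+\varepsilon_{\rm p})$ for every $t$.
\begin{enumerate}
\item Given this at round $t$, \Cref{lem:bounded-increments} yields $|\eta Z_t(x)|\le 1/4$ for all $x\in\cX$.
\item Substituting this into the identity
\[
\KL(U\|\widetilde p_{t+1})=\KL(U\|p_t)+\eta\E_{X\sim U}[Z_t(X)]+\log\E_{X\sim p_t}[e^{-\eta Z_t(X)}]
\]
gives $\KL(U\|\widetilde p_{t+1})\le\KL(U\|p_t)+\tfrac12\le T$, so \Cref{lem:approximate-projection} applies.
\item The lemma's approximate Pythagorean inequality with $q=U\in\cP_\lambda$ then gives $\KL(U\|p_{t+1})\le\KL(U\|\widetilde p_{t+1})+\varepsilon_{\rm p}$, which closes the induction.
\end{enumerate}

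\textbf{Accuracy accounting.} The ellipsoid method must reach dual objective accuracy $\tau=\min\{2h^2,\varepsilon_{\rm p}/2,\varepsilon_{\rm p}^2/(2R^2)\}$, not $\varepsilon_{\rm p}$. It is $\eta\ge 1/(256d\sqrt T)$ together with $R=2dT$ that gives $\tau\ge 1/(2^{19}d^4T^5)$, and hence $\log R+\log(1/\tau)=O(\log(dT))$.

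\textbf{A possible alternative.} You could instead try to bound the spread of $\theta_{t+1}$ directly from $p_{t+1}\in\cP_{\lambda/2}$. One route is the swap identity $w_i/w_j=\Pp(i\in S,\,j\notin S)/\Pp(j\in S,\,i\notin S)$ combined with negative correlation of weighted $m$-sets. But that quantitative statement would need its own proof, and your proposal does not supply one.
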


\begin{proofsketch}
We observe that if $p_t$ has a weighted $m$-set representation, its moments can be computed and a sample can be drawn in polynomial time using the recurrences in \Cref{lem:esp-routines}.

The strategy update in \Cref{alg:exp3-kw} can be decomposed into two steps.
First, define the exponential update
\[
    \widetilde p_{t+1}(x)
\propto
    p_t(x)\exp\bigl(-\eta Z_t(x)\bigr).
\]
The exact update toward which we compute is the KL projection
\[
    p^\star_{t+1}
\in
    \argmin_{p\in\cP_\lambda} \KL(p\|\widetilde p_{t+1}).
\]
Since $Z_t$ is affine, the exponential update only changes the $d$ parameters, and $\widetilde p_{t+1}$ remains a weighted $m$-set distribution.
The exact projection is characterized by a convex problem in $2d$ variables.
Solving this problem to the objective accuracy in \Cref{lem:approximate-projection} returns a weighted $m$-set distribution $p_{t+1}\in\cP_{\lambda/2}$ and introduces at most $\varepsilon_{\rm p}=\eta/T$ error in the KL inequality used by OMD.
The required objective accuracy is inverse-polynomial, and the ellipsoid method obtains it using polynomially many evaluations of the objective and its gradient.
Since the initial distribution $p_1$ is uniform and admits a weighted $m$-set representation, this representation is preserved at every round.
Thus, all steps of the algorithm can be performed in polynomial time without enumerating $\cX$.
The complete proof is given in \Cref{app:proof-efficient-implementation}.
\end{proofsketch}

\subsection{Matching high-probability lower bound}
\label{sec:lower-bound}

We complement the upper bound with a matching high-probability lower bound.

\begin{restatable}
{theorem}{matchingLowerBound}
\label{thm:lower-bound}
There exist universal constants $c>0$ and $\delta_0\in(0,1)$ and a threshold $\mathfrak T(d,m,\delta)$ bounded by a polynomial in $d$, $m$, and $\log(1/\delta)$ such that the following holds.
Let $d\ge2$, $1\le m\le d-1$, $\delta\in(0,\delta_0]$, and $T\ge\mathfrak T(d,m,\delta)$.
For every randomized policy on $\cX_{d,m}$, there exists a deterministic adaptive non-anticipating loss process whose action losses belong to $[0,1]$ and such that
\[
    \Pp \left(
    R_T\ge
    c\sqrt{dT\left(\log\binom dm+\log\frac1\delta\right)}
    \right)\ge\delta.
\]
\end{restatable}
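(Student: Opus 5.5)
The plan is to split the target rate into its two components and prove each as a separate high-probability lower bound:
\[
    \sqrt{dT\log\tbinom dm}
    \quad\text{and}\quad
    \sqrt{dT\log(1/\delta)} .
\]
Their maximum is at least half their sum, which gives the statement up to constants.

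\textbf{Confidence term.} For $\sqrt{dT\log(1/\delta)}$, adapt the argument of \citet[Theorem~1]{gerchinovitz2016refined} to $m$-sets. Pick a base $m$-set $S_0$ and a family of about $d$ alternatives obtained by swapping one item. Use stochastic Bernoulli item losses whose action losses lie in $[0,1]$, and give the favoured alternative an advantage $\varepsilon\asymp\sqrt{d\log(1/\delta)/T}$. A change-of-measure (Bretagnolle--Huber) argument then shows that some instance forces regret $\Omega(\varepsilon T)$ with probability at least $\delta$. Here the $\log(1/\delta)$ comes from the KL budget $\varepsilon^2 T/d\asymp\log(1/\delta)$. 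A deterministic adaptive adversary is obtained by fixing the best realization of the randomness, which preserves the tail probability for at least one realization.

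\textbf{Combinatorial term.} For $\sqrt{dT\log K}$, the adaptivity is essential. Oblivious constructions only give $\sqrt{\log(1/\delta)}$, and the expectation lower bound of \citet{ito2019improved} only gives $\sqrt{dmT}$ in expectation, without the logarithm. I would use an epoch-based adaptive adversary. Split $[T]$ into $N\asymp\log K$ (more precisely $\asymp m\log(d/m)$) epochs of length $T/N$. In each epoch, the adversary reveals, via a hard expected-regret instance of the Ito type (up to $\Omega(\sqrt{d(T/N)})$ with constant probability), a shift of one coordinate block of the hidden optimal set. Because it is adaptive, it can choose the next epoch's hidden structure depending on the learner's past actions, so that regrets accumulate across epochs in a compatible direction: the same comparator $x^\star$ must be good for all epochs. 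The optimum is built coordinate by coordinate, with each epoch "fixing" about $O(1)$ bits of $x^\star$.

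Summing $N$ epochs, each contributing $\Omega(\sqrt{dT/N})$ with probability bounded below, gives regret $\Omega(N\sqrt{dT/N})=\Omega(\sqrt{dTN})$. This requires a constant fraction of the epochs to succeed simultaneously, which I would get from a martingale/Azuma argument on the per-epoch success indicators, conditionally independent given the past. That yields the bound with probability at least $1/2\ge\delta$.

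\textbf{Main obstacle.} The hardest step is making the epochs' comparators consistent while keeping losses in $[0,1]$. Concretely, I must ensure that a fixed $m$-set gains $\Omega(\sqrt{dT/N})$ in each epoch over the learner, rather than different epochs favouring incompatible sets. I would first try restricting each epoch to a disjoint pair of items (in/out swap) determined adaptively, so the gains add up on a single final set. The horizon threshold $\mathfrak T$ is needed so that each epoch is long enough for the per-epoch bound, i.e.\ $T/N\gtrsim d\,\mathrm{poly}(m)$, and so that $\varepsilon\le 1/4$ in the confidence construction.
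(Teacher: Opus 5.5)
There is a genuine gap in the action-count term, which is the core of the theorem. You flag comparator consistency as the main obstacle, but your sketch contains no mechanism that resolves it, and it is not a technicality. Since $\min_x\sum_e L_e(x)\ge\sum_e\min_x L_e(x)$, per-epoch regrets against per-epoch optima only \emph{upper} bound the global regret. Nothing in your argument explains why the number of epochs must stop at $N\asymp\log K$. If each of $N$ epochs could really contribute $\Omega(\sqrt{dT/N})$ against one common comparator, taking $N\gg\log K$ would contradict the upper bound of \Cref{thm:main}. So the missing ingredient is exactly the one that caps $N$.

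The fix you propose does not work. An epoch built on one in/out pair is essentially a two-armed problem, so the adversary can extract only $O(\sqrt{T/N})$ from it rather than $\Omega(\sqrt{dT/N})$. Moreover, an $m$-set can accommodate at most $s=\min\{m,d-m\}$ independent choices from disjoint pairs, while you need $N\asymp s\log(d/s)$ epochs. The Azuma step also ignores that unsuccessful epochs may contribute negative regret with respect to the common comparator.

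The paper uses adaptivity only to cap information, and gets consistency by taking the best in hindsight among exponentially many candidates. After reducing to $s$-sets by complementation (\Cref{lem:lower-complement}), for large $d/s$ it splits the signal items into $s$ groups of $r\approx d/s$ items, processed in parallel over only $B=\lfloor\log r\rfloor$ blocks. In each block every item carries an independent hidden sign $U_{b,i}$ on top of a shared noise $Z_t$, which permits a feedback-preserving change of variables. Item $i$'s signal is switched off after $c_0$ pulls in that block. This cap makes the learner's own score nearly mean-zero. Meanwhile, within each group the best item in hindsight still has score of order $T$. The reason is that the uniform mixture over $j$ of the laws in which all of $j$'s signs are tilted upward stays close to the base law as long as $B\lesssim\log r$ (\Cref{lem:changed-law-maximum} with the chi-square bound \eqref{eq:lower-mixture-tv}). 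When $d/s$ is bounded, a paired Assouad argument suffices instead.

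Your confidence-term sketch is close to \Cref{prop:lower-confidence}, but it leaves out three ingredients:
\begin{itemize}
\item The noise must be shared across items. Independent item noise shrinks the action-loss variance with the set size and costs a square-root factor.
\item Every candidate beyond the first must pay a surcharge (a core of $s-1$ items, plus $1/(8s)$ per extra candidate). Otherwise the learner can test many candidates per round for free.
\item The signal must be switched off once the accumulated surcharge reaches its budget, with a case split on whether the learner pays that budget under the reference law with probability at least $\delta$. This is what guarantees that some candidate has few expected pulls.
\end{itemize}
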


\begin{proofsketch}
	Let $s:=\min\{m,d-m\}\le d/2$.
	By complementation, it is enough to prove the result for $s$-set actions.
	We establish separately the contributions depending on the number of actions and on the confidence level.
	
	For the action-count term, suppose first that $d/s$ is large.
	We divide the items into $s$ groups and construct the losses so that one item in each group is slightly better than the others.
	The item losses share a common random component, which can be adjusted when the favorable item is changed so that the learner observes the same aggregate feedback.
	Thus, the learner cannot determine which item is favorable in each group, yielding the $\log\binom dm$ term.
	When $d/s$ is small, we instead arrange the items into pairs and make one item in each pair slightly better.
	The aggregate feedback does not reveal enough information to identify all these pairwise choices, which gives the same action-count term.
	
	For the confidence term, we fix $s-1$ core items.
	Selecting multiple items outside the core produces an immediate additional loss.
	Otherwise, the learner can test only one candidate per round, so some candidate is selected in only a small fraction of the rounds.
	We make this candidate favorable.
	Since the learner observes different feedback only when it selects the candidate, the favorable environment remains difficult to distinguish from the baseline one, yielding the full $\log(1/\delta)$ dependence.
	
	We use the action-count or confidence construction depending on which logarithmic term is larger.
	This gives the claimed lower bound up to a universal constant and requires only a polynomial lower threshold on $T$.
	All action losses belong to $[0,1]$, and complementation preserves the learner's feedback and regret.
	The complete proof is given in the appendix.
\end{proofsketch}

Together, \Cref{thm:main,thm:lower-bound} show that for every $d$ and $m$, for all sufficiently small $\delta$, and for horizons above a polynomial threshold, the minimax high-probability regret is
\[
    \Theta \left(
    \sqrt{dT\left(\log\binom dm+\log\frac1\delta\right)}
    \right).
\]
For a strict comparison of the two tail probabilities, apply the lower bound
with confidence level $2\delta$.  This changes
$\log(1/\delta)$ only by the additive constant $\log2$.

When \(m=1\) and \(d=K\), our problem reduces to the standard adversarial \(K\)-armed bandit problem.

\begin{corollary}
\label{cor:ordinary-bandits-lower-bound}
There exist universal constants $c>0$ and $\delta_0\in(0,1)$ and a threshold $\mathfrak T_{\mathrm{MAB}}(K,\delta)$ bounded by a polynomial in $K$ and $\log(1/\delta)$ such that the following holds.
Let $K\ge2$, $\delta\in(0,\delta_0]$, and $T\ge\mathfrak T_{\mathrm{MAB}}(K,\delta)$.
For every randomized $K$-armed bandit policy, there exists a deterministic adaptive non-anticipating loss process $\ell_t\in[0,1]^K$ such that
\[
    \Pp \left(
    R_T\ge
    c\sqrt{KT\left(\log K+\log\frac1\delta\right)}
    \right)\ge\delta.
\]
\end{corollary}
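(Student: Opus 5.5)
The corollary should follow by specializing \Cref{thm:lower-bound} to $m=1$ and $d=K$. The plan has three steps.

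First, I identify the two problems. For $m=1$ the action set $\cX_{K,1}$ is exactly the set of standard basis vectors $e_1,\ldots,e_K$ of $\R^K$. Choosing $X_t=e_{I_t}$ is the same as pulling arm $I_t$. The observed feedback $Y_t=\langle e_{I_t},\ell_t\rangle=\ell_{t,I_t}$ is the bandit feedback of the pulled arm. The realized regret $R_T$ coincides with the usual $K$-armed realized regret against the best fixed arm. Hence every randomized $K$-armed bandit policy is a randomized policy on $\cX_{K,1}$, with the same filtration and the same distribution of the interaction. Conversely, every adaptive non-anticipating loss process for $\cX_{K,1}$ is one for the bandit problem.

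Second, I check the loss range. \Cref{thm:lower-bound} guarantees that all \emph{action} losses lie in $[0,1]$, that is, $\langle x,\ell_t\rangle\in[0,1]$ for every $x\in\cX_{K,1}$. Since the actions are the basis vectors, this says exactly that $\ell_{t,i}\in[0,1]$ for all $i$, i.e.\ $\ell_t\in[0,1]^K$. This is the only place where the normalization needs a check, and it is immediate.

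Third, I substitute into the bound. Using $\binom{K}{1}=K$, the event in \Cref{thm:lower-bound} becomes $R_T\ge c\sqrt{KT(\log K+\log(1/\delta))}$, holding with probability at least $\delta$. I take the same universal constants $c$ and $\delta_0$, and set $\mathfrak T_{\mathrm{MAB}}(K,\delta):=\mathfrak T(K,1,\delta)$. This threshold is bounded by a polynomial in $K$, $1$, and $\log(1/\delta)$, hence by a polynomial in $K$ and $\log(1/\delta)$. The hypotheses $d=K\ge2$ and $1\le m=1\le d-1$ are satisfied.

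No step is a genuine obstacle. The only point needing care is the first one: one must confirm that the correspondence between policies and adversaries is exact, including the information structure and the randomization. This is what makes the adversary produced by \Cref{thm:lower-bound} a valid deterministic adaptive non-anticipating loss process for the given bandit policy.
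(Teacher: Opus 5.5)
Your proposal is correct and matches the paper's proof: specialize \Cref{thm:lower-bound} to $d=K$, $m=1$, keep the same constants, and set $\mathfrak T_{\mathrm{MAB}}(K,\delta):=\mathfrak T(K,1,\delta)$. Your additional checks, that for $m=1$ the action-loss range $[0,1]$ is exactly $\ell_t\in[0,1]^K$ and that the two protocols coincide, are the right details to confirm, and the paper leaves them implicit.
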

\begin{proof}
Apply \Cref{thm:lower-bound} with $d=K$ and $m=1$, and set $\mathfrak T_{\mathrm{MAB}}(K,\delta):=\mathfrak T(K,1,\delta)$.
\end{proof}

When $d=K$ and $m=1$, our protocol coincides with the standard \(K\)-armed bandit protocol considered by \citet{gerchinovitz2016refined}, in which the adversary may observe the learner's distribution before choosing the loss vector.
Indeed, in both models, the learner first selects a distribution \(p_t\), after which the adversary chooses \(\ell_t\), possibly as a function of \(p_t\), but before the learner draws an arm from \(p_t\).

Equation~(1) of \citet{gerchinovitz2016refined} states that, if \(\delta\) is fixed in advance and used to set the parameters of the algorithm, then, with probability at least \(1-\delta\), the regret is at most
\(
O(\sqrt{KT\log(K/\delta)}).
\)
Their Theorem~1 instead shows that, even for a loss sequence fixed in advance, every randomized algorithm can suffer regret
\(
\Omega(\sqrt{KT\log(1/\delta)})
\)
with probability of order \(\delta\).
For fixed \(\delta\), these bounds differ by a factor of \(\sqrt{\log K}\).

Our corollary closes this gap when the adversary is adaptive and non-anticipating.
For every randomized \(K\)-armed bandit policy and all sufficiently large horizons, it constructs such an adversary for which
\[
\Pp \left(
R_T
\ge
c\sqrt{KT\left(\log K+\log\frac1\delta\right)}
\right)
\ge\delta.
\]
This matches the existing upper bound up to universal constants.

\section{Conclusion and future work}
\label{sec:conclusion}

We introduced an algorithm for adversarial $m$-set bandits whose high-probability regret matches the finite-action EXP3--KW rate up to logarithmic factors, while running in polynomial time without enumerating the action set.
We complemented this upper bound with a matching lower bound showing that
\[
    \sqrt{dT\left(\log\binom dm+\log\frac1\delta\right)}
\]
is the minimax high-probability rate up to universal constants for every $d$ and $m$, for all sufficiently small $\delta$, and for horizons above a polynomial threshold.

The special case $m=1$ closes the remaining gap between the known upper and
lower bounds for ordinary adversarial bandits against adaptive
non-anticipating adversaries.
More precisely, \citet[Eq.~(1) and Theorem~1]{gerchinovitz2016refined}
left a $\sqrt{\log K}$ gap at constant confidence between the standard
high-probability upper bound and their lower bound.
Our lower bound shows that, the $\log K$ factor is necessary for fixed sufficiently small confidence levels and sufficiently large horizons.

As further directions, it would be interesting to determine whether our approach extends beyond $m$-sets.
A natural test case is that of matroid-base action sets, since $m$-sets are precisely the bases of a uniform matroid.
Although affine exponential-weights updates still preserve weighted matroid-base distributions, our analysis also requires an efficiently computable affine leverage majorant with controlled expectation and range, as well as efficient moment and KL-projection routines.
These properties do not follow from our present covariance-based construction, so extending the near-optimal high-probability guarantee to broader matroid classes would require additional structural and computational ideas.

\clearpage
\appendix
\section*{Appendix}
\section{Proofs from Section \ref{sec:affine-leverage}}
\label{app:structural-proofs}
\subsection{Proof of Lemma \ref{lem:covariance-pseudoinverse-bound}}
\label{app:proof-covariance-pseudoinverse-bound}

\covariancePseudoinverseBound*
\begin{proof}
Let
\[
P
:=
D-\frac{vv^\top}{V}.
\]
We first identify the kernels of $P$ and $\Sigma$. Since $p$ has
positive weights, it has full support on $\cX$. Therefore,
\[
z^\top\Sigma z
=
\Var_{X\sim p}(\langle z,X\rangle)
=
0
\]
only if $\langle z,x\rangle$ is the same for every $x\in\cX$.

Fix $i\neq j$. Since $1\le m\le d-1$, we can choose two actions
that differ only by exchanging item $i$ with item $j$. The equality
of their inner products with $z$ gives $z_i=z_j$. Thus, all the
coordinates of $z$ are equal. Conversely,
$\langle\mathbf 1,X\rangle=m$ for every $X\in\cX$, so
$\Sigma\mathbf 1=0$. Hence,
\begin{equation}
\label{eq:kernel-sigma}
\ker(\Sigma)
=
\operatorname{span}\{\mathbf 1\}.
\end{equation}
Moreover, for every $z\in\R^d$,
\[
z^\top Pz
=
\sum_{i=1}^d v_i z_i^2
-
\frac{\left(\sum_{i=1}^d v_i z_i\right)^2}{V}
=
\sum_{i=1}^d v_i(z_i-\bar z_v)^2,
\]
where
\[
\bar z_v
:=
\frac{\sum_{i=1}^d v_i z_i}{V}.
\]
Full support also gives
\[
v_i
=
\Var(X_i)
=
\mu_i(1-\mu_i)
>
0
\qquad
\text{for every }i\in[d].
\]
It follows that $z^\top Pz=0$ if and only if all the coordinates of
$z$ are equal. Therefore,
\begin{equation}
\label{eq:kernel-P}
\ker(P)
=
\operatorname{span}\{\mathbf 1\}.
\end{equation}

By \eqref{eq:kernel-sigma} and \eqref{eq:kernel-P}, the pseudoinverse
order property stated in the preliminaries applies to the bound in
\Cref{lem:cov-dom}. Hence, for every $y\in H_0$,
\begin{equation}
\label{eq:covariance-pseudoinverse-order}
y^\top\Sigma^+y
\le
2y^\top P^+y.
\end{equation}

It remains to compute $y^\top P^+y$. Since $D=\diag(v)$, we have
$v^\top D^{-1}=\mathbf 1^\top$. Thus, for every $y\in H_0$,
\begin{equation}
\label{eq:P-D-inverse}
PD^{-1}y
=
y-\frac{v}{V}\mathbf 1^\top y
=
y.
\end{equation}
Moreover, $\operatorname{range}(P)=H_0$, so
\begin{equation}
\label{eq:P-pseudoinverse}
PP^+y
=
y.
\end{equation}
Equations \eqref{eq:P-D-inverse} and \eqref{eq:P-pseudoinverse} show
that $D^{-1}y$ and $P^+y$ both solve the equation $Pz=y$.
Therefore,
\[
D^{-1}y-P^+y
\in
\ker(P)
=
\operatorname{span}\{\mathbf 1\}.
\]
Since $y\in H_0$, it is orthogonal to this difference. Consequently,
\begin{equation}
\label{eq:P-pseudoinverse-quadratic-form}
y^\top P^+y
=
y^\top D^{-1}y
=
\sum_{i=1}^d\frac{y_i^2}{v_i}.
\end{equation}
Combining \eqref{eq:covariance-pseudoinverse-order} and
\eqref{eq:P-pseudoinverse-quadratic-form} proves the result.
\end{proof}

\subsection{Proof of Lemma \ref{lem:inverse-covariance-identity}}
\label{app:proof-inverse-covariance}

\inverseCovarianceIdentity*
\begin{proof}
We first show that $M$ is positive definite. Suppose that
$a^\top Ma=0$. Since
\[
a^\top Ma
=
\E_{X\sim p}\bigl[(a^\top X)^2\bigr],
\]
we have $a^\top x=0$ for every $x\in\cX$, because $p$ has full
support.

Fix $i\neq j$. Since $1\le m\le d-1$, we can choose two actions
that differ only by exchanging item $i$ with item $j$. Comparing
their inner products with $a$ gives $a_i=a_j$. Thus,
$a=c\mathbf1$ for some $c\in\R$. Since every $x\in\cX$ has
$m$ nonzero coordinates,
\[
0
=
a^\top x
=
cm.
\]
Therefore, $c=0$, and hence $a=0$. This proves that $M\succ0$. By \Cref{lem:covariance-pseudoinverse-bound},
$
\ker(\Sigma)
=
\operatorname{span}\{\mathbf1\}.
$
Since $\Sigma$ is symmetric, this also gives
$\operatorname{range}(\Sigma)=H_0$.
Every $X\in\cX$ satisfies
$\langle X,\mathbf1\rangle=m$. Therefore,
\[
M\mathbf1
=
\E_{X\sim p}
\bigl[X\langle X,\mathbf1\rangle\bigr]
=
m\mu.
\]
Since $M$ is invertible,
\begin{equation}
\label{eq:M-inverse-mu}
M^{-1}\mu
=
\frac{1}{m}\mathbf1.
\end{equation}
Now fix $y\in H_0$ and let $z:=M^{-1}y$. By
\eqref{eq:M-inverse-mu},
\[
\mu^\top z
=
(M^{-1}\mu)^\top y
=
\frac{1}{m}\mathbf1^\top y
=
0.
\]
It follows that
\[
\Sigma z
=
(M-\mu\mu^\top)z
=
y.
\]
Moreover, $\Sigma^+y$ also solves the equation $\Sigma u=y$.
Therefore,
\[
z-\Sigma^+y
\in
\ker(\Sigma)
=
\operatorname{span}\{\mathbf1\}.
\]
Since $y\in H_0$, it is orthogonal to this difference. Hence,
\begin{equation}
\label{eq:inverse-covariance-quadratic-form}
y^\top M^{-1}y
=
y^\top\Sigma^+y.
\end{equation}
Finally, fix $x\in\cX$ and let $y:=x-\mu$. Since
$
\mathbf1^\top x
=
\mathbf1^\top\mu
=
m,
$
we have $y\in H_0$. Expanding $x=\mu+y$ gives
\[
\begin{aligned}
x^\top M^{-1}x
&=
\mu^\top M^{-1}\mu
+
2y^\top M^{-1}\mu
+
y^\top M^{-1}y \\
&=
1+y^\top\Sigma^+y,
\end{aligned}
\]
where the second equality follows from
\eqref{eq:M-inverse-mu} and
\eqref{eq:inverse-covariance-quadratic-form}. Substituting
$y=x-\mu$ proves that
\[
x^\top M^{-1}x
=
1+(x-\mu)^\top\Sigma^+(x-\mu).
\]
This concludes the proof.
\end{proof}

\subsection{Proof of Lemma \ref{lem:linear-majorant-final}}
\label{app:proof-linear-majorant}

\linearKWMajorant*

\begin{proof}
For $x\in\cX$, put $y=x-\mu\in H_0$.  By
\Cref{lem:inverse-covariance-identity},
\[
x^\top M^{-1}x=1+y^\top\Sigma^+y.
\]
\Cref{lem:covariance-pseudoinverse-bound} gives
\[
y^\top\Sigma^+y
\le2\sum_i\frac{y_i^2}{\mu_i(1-\mu_i)},
\]
and hence $x^\top M^{-1}x\le\phi_p(x)$.
Taking expectations and using
$\E[(X_i-\mu_i)^2]=\mu_i(1-\mu_i)$ yields
\[
\E_{X\sim p}[\phi_p(X)]=2d+1.
\]
For the range bound, if $x_i=1$, then
\[
2\frac{(x_i-\mu_i)^2}{\mu_i(1-\mu_i)}
=2\frac{1-\mu_i}{\mu_i}\le\frac{2}{\lambda r};
\]
if $x_i=0$, then
\[
2\frac{(x_i-\mu_i)^2}{\mu_i(1-\mu_i)}
=2\frac{\mu_i}{1-\mu_i}\le\frac{2}{\lambda(1-r)}.
\]
Since $x$ has $m$ ones and $d-m$ zeros,
\[
\phi_p(x)
\le1+\frac{2m}{\lambda r}
+\frac{2(d-m)}{\lambda(1-r)}
=1+\frac{4d}{\lambda}.
\]
Finally, because $x_i\in\{0,1\}$,
\[
2\frac{(x_i-\mu_i)^2}{\mu_i(1-\mu_i)}
=2\frac{\mu_i}{1-\mu_i}
+2x_i\frac{1-2\mu_i}{\mu_i(1-\mu_i)}.
\]
Thus
\[
\phi_p(x)
=1+2\sum_i\frac{\mu_i}{1-\mu_i}
+2\sum_i x_i\frac{1-2\mu_i}{\mu_i(1-\mu_i)}.
\]
Distribute the constant term evenly over the selected coordinates using
$\sum_i x_i=m$, and multiply the resulting affine representation by
four, to obtain $4\phi_p(x)=c_p^\top x$.
\end{proof}
\section{Proofs from Section \ref{sec:algorithm-regret}}
\label{app:adaptive-analysis}

\begin{assumption}
\label{ass:analytic-regime}
Throughout this appendix, $T\ge1$ is an integer,
$\delta\in(0,1)$,
\[
0<\eta\le\frac{1}{64},
\qquad
\lambda=128\eta d\le\frac{1}{2}.
\]
The loss sequence $(\ell_t)_{t=1}^T$ is predictable, that is,
$\ell_t$ is $\mathcal F_{t-1}$-measurable for every $t\in[T]$,
and satisfies the action-normalized condition in \Cref{sec:setting}.

The initialization and the update in \Cref{alg:exp3-kw} ensure that, at
every round $t$, the sampling distribution $p_t$ is a weighted
$m$-set distribution with positive weights and
$p_t\in\cP_{\lambda/2}$. Hence, the assumptions of
\Cref{lem:cov-dom,lem:linear-majorant-final} hold at every round.
\end{assumption}
\noindent Recall from \Cref{alg:exp3-kw} that
\[
\phi_t:=\phi_{p_t},
\qquad
c_t:=c_{p_t},
\qquad
c_t^\top x=4\phi_t(x),
\qquad
Z_t(x):
=
\widehat y_t(x)-\eta c_t^\top x.
\]
We also write
\[
\widehat y_t(x)
:=
\langle x,\widehat\ell_t\rangle
=
x^\top M_t^{-1}X_tY_t.
\]

\noindent Set
\[
B
:=
1+\frac{8d}{\lambda}.
\]
Applying \Cref{lem:linear-majorant-final} with $\lambda/2$, for every $x\in\cX$, we have
\[
0\le c_t^\top x\le4B,
\qquad
\E_{X\sim p_t}[c_t^\top X]
=
4(2d+1),
\qquad
x^\top M_t^{-1}x
\le
\frac{c_t^\top x}{4}.
\]

\begin{lemma}
\label{lem:bounded-increments}
Fix $t\in[T]$. Suppose that
$0<\eta\le1/64$, $\lambda=128\eta d\le1/2$, the loss at round
$t$ satisfies the action-normalized condition, and $p_t$ is a
weighted $m$-set distribution with positive weights in
$\cP_{\lambda/2}$. Then,
for every $x\in\cX$,
$
    \eta |Z_t(x)|\le 1/4.
$
\end{lemma}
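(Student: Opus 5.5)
Split $Z_t(x)=\widehat y_t(x)-\eta c_t^\top x$ and use the triangle inequality,
\[
\eta|Z_t(x)|\le \eta|\widehat y_t(x)|+\eta^2 c_t^\top x ,
\]
bounding each piece with the range bound $0\le c_t^\top x\le 4B$ recorded before the lemma (from \Cref{lem:linear-majorant-final} applied with $\lambda/2$). Here $B=1+8d/\lambda = 1+1/(16\eta)$ because $\lambda=128\eta d$.

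For the estimator term, write $\widehat y_t(x)=x^\top M_t^{-1}X_t\,Y_t$ with $|Y_t|=|\langle X_t,\ell_t\rangle|\le1$ by the action-level normalization. Since $M_t\succ0$ by \Cref{lem:inverse-covariance-identity}, Cauchy--Schwarz in the $M_t^{-1}$ inner product gives
\[
|x^\top M_t^{-1}X_t|\le\sqrt{x^\top M_t^{-1}x}\,\sqrt{X_t^\top M_t^{-1}X_t}\le \max_{y\in\cX} y^\top M_t^{-1}y\le \max_{y\in\cX}\phi_t(y)\le B,
\]
using \Cref{lem:linear-majorant-final} (with $\lambda/2$) for the last two steps. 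Hence
\[
\eta|\widehat y_t(x)|\le \eta B=\eta+\tfrac1{16}\le \tfrac1{64}+\tfrac1{16}.
\]

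For the correction term,
\[
\eta^2 c_t^\top x\le 4\eta^2B=4\eta^2+\tfrac{\eta}{4}\le \tfrac{4}{64^2}+\tfrac1{256}.
\]
Summing, $\eta|Z_t(x)|\le \tfrac5{64}+\tfrac{5}{1024}<\tfrac14$, which is the claim.

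The only step needing care is the leverage bound. Since $\phi_t$ is evaluated with marginals in $\cP_{\lambda/2}$ rather than $\cP_\lambda$, one must apply the range bound with $\lambda/2$; this is why the constant is $8d/\lambda$ rather than $4d/\lambda$. The argument also uses the positivity of the weights, which is part of the hypotheses, so that \Cref{lem:inverse-covariance-identity} applies. Beyond that, the proof is arithmetic with $\lambda=128\eta d$ and $\eta\le1/64$.
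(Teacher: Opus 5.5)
Your proof is correct and follows the paper's argument essentially step for step. You use Cauchy--Schwarz in the $M_t^{-1}$ inner product, the leverage bound $\phi_t\le B=1+8d/\lambda$ from \Cref{lem:linear-majorant-final} applied at level $\lambda/2$, the range bound $0\le c_t^\top x\le 4B$, and the identity $\eta B=\eta+1/16\le 5/64$, which gives the same final constant $85/1024<1/4$.
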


\begin{proof}
Since $|Y_t|\le1$, the Cauchy--Schwarz inequality in the norm induced by $M_t^{-1}$ gives
\[
\begin{aligned}
    |\widehat y_t(x)|
    = |x^\top M_t^{-1}X_tY_t|  
    \le
    \sqrt{x^\top M_t^{-1}x}
    \sqrt{X_t^\top M_t^{-1}X_t}.
\end{aligned}
\]
Applying \Cref{lem:linear-majorant-final} to
$p_t\in\cP_{\lambda/2}$, both leverage scores in the last inequality
are at most $\phi_t(x)$ and $\phi_t(X_t)$, respectively, and both
$\phi_t$-values are at most
\[
    B=1+\frac{8d}{\lambda}.
\]
Thus
\[
    |\widehat y_t(x)|\le B.
\]
Also $0\le c_t^\top x\le 4B$. Hence
\[
    |Z_t(x)|
    =|\widehat y_t(x)-\eta c_t^\top x|
    \le B + 4\eta B.
\]
The parameter choice implies
\[
    \eta B
    =\eta+\frac{8\eta d}{\lambda}
    =\eta+\frac1{16}
    \le\frac5{64}.
\]
Therefore
\[
    \eta|Z_t(x)|
    \le (1+4\eta)\eta B
    \le\left(1+\frac1{16}\right)\frac5{64}
    =\frac{85}{1024}
    <\frac14,
\]
where we used $\eta\le1/64$. This proves the claim.
\end{proof}

\begin{lemma}
\label{lem:omd-stability}
Under \Cref{ass:analytic-regime}, for every distribution
$q\in\cP_\lambda$,
\[
\sum_{t=1}^T
\left(
\E_{X\sim p_t}[Z_t(X)]
-
\E_{X\sim q}[Z_t(X)]
\right)
\le
\frac{\KL(q\|p_1)}{\eta}
+1
+
2\eta
\sum_{t=1}^T
\E_{X\sim p_t}[Z_t(X)^2].
\]
\end{lemma}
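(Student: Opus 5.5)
The plan is the standard one-step entropic mirror-descent (exponential-weights) analysis, made robust to the approximate projection, and then telescoped. Fix $q\in\cP_\lambda$ and a round $t$. By line~\ref{line:alg-unprojected}, $\widetilde\theta_{t+1}=\theta_t-\eta z_t$. Because $Z_t(x)=\langle x,z_t\rangle$ on $\cX$, this gives
\[
\widetilde p_{t+1}(x)=\frac{p_t(x)e^{-\eta Z_t(x)}}{W_t},
\qquad
W_t:=\E_{X\sim p_t}\bigl[e^{-\eta Z_t(X)}\bigr].
\]

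First, we compute the KL change. Taking logarithms and integrating against $q$ gives the exact identity
\[
\KL(q\|\widetilde p_{t+1})
=\KL(q\|p_t)+\eta\,\E_{X\sim q}[Z_t(X)]+\log W_t .
\]
Since $q$ has full support only where $p_t$ does, and $p_t$ has full support, all terms are finite.

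Next, we bound $\log W_t$ with a second-order expansion. By \Cref{lem:bounded-increments}, $|\eta Z_t(x)|\le 1/4$ for all $x$. On $|u|\le1/4$ we have $e^{-u}\le 1-u+u^2$, and we also use $\log(1+y)\le y$. Together these yield
\[
\log W_t\le -\eta\,\E_{p_t}[Z_t]+\eta^2\,\E_{p_t}[Z_t^2].
\]
The elementary inequality only needs $e^{-u}\le1-u+u^2$ for $u\ge -1$ or so, which holds comfortably at $1/4$. This already gives a constant $1$ in front of $\eta\E[Z_t^2]$, better than the stated $2$, so the slack is harmless.

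Then we invoke the approximate-projection guarantee of line~\ref{line:alg-projection}. Since $q\in\cP_\lambda$, it gives
\[
\KL(q\|p_{t+1})\le \KL(q\|p_{t+1})+\KL(p_{t+1}\|\widetilde p_{t+1})\le \KL(q\|\widetilde p_{t+1})+\varepsilon_{\rm p},
\]
where the first step uses nonnegativity of KL. Combining the three displays gives
\[
\eta\bigl(\E_{p_t}[Z_t]-\E_q[Z_t]\bigr)\le \KL(q\|p_t)-\KL(q\|p_{t+1})+\varepsilon_{\rm p}+\eta^2\E_{p_t}[Z_t^2].
\]

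Finally, we sum over $t\in[T]$ and divide by $\eta$. The KL terms telescope to $\KL(q\|p_1)-\KL(q\|p_{T+1})\le \KL(q\|p_1)$. The projection errors contribute $T\varepsilon_{\rm p}/\eta=1$, because $\varepsilon_{\rm p}=\eta/T$. The quadratic terms give $\eta\sum_t\E_{p_t}[Z_t^2]\le 2\eta\sum_t\E_{p_t}[Z_t^2]$.

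There is no real obstacle here. The only points needing care are two. First, we need the uniform bound $|\eta Z_t|\le1/4$, which \Cref{lem:bounded-increments} supplies because $p_t\in\cP_{\lambda/2}$ under \Cref{ass:analytic-regime}. Second, the projection inequality must hold for the same comparator $q$ at every round, and the algorithm guarantees it simultaneously for all $q\in\cP_\lambda$.
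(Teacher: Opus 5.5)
Your proposal is correct and follows the same route as the paper's proof. Both use the exact KL identity for the unprojected exponential-weights step, apply the approximate-projection guarantee after dropping the nonnegative term $\KL(p_{t+1}\|\widetilde p_{t+1})$, bound $\log\E_{X\sim p_t}[e^{-\eta Z_t(X)}]$ to second order using $|\eta Z_t|\le 1/4$ and $\log(1+s)\le s$, and telescope with $T\varepsilon_{\rm p}/\eta=1$. The only difference is that you use $e^{-u}\le 1-u+u^2$ (valid for $|u|\le 1$) where the paper uses $e^{-u}\le 1-u+2u^2$, which gives a slightly sharper constant that you then loosen to the stated factor $2$.
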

\begin{proof}
Fix $t\in[T]$ and define the unprojected multiplicative-weights
distribution
\[
\widetilde p_{t+1}(x)
:=
\frac{
p_t(x)\exp(-\eta Z_t(x))
}{
\E_{X\sim p_t}[\exp(-\eta Z_t(X))]
},
\qquad
x\in\cX.
\]
Since $Z_t(x)=z_t^\top x$ and $p_t=p_{\theta_t}$, the distribution
above is precisely $p_{\widetilde\theta_{t+1}}$ from
\Cref{line:alg-unprojected}. The guarantee in
\Cref{line:alg-projection} therefore gives, for every
$q\in\cP_\lambda$,
\begin{equation}
\label{eq:omd-kl-projection}
\KL(q\|p_{t+1})
\le
\KL(q\|\widetilde p_{t+1})
+\varepsilon_{\rm p},
\end{equation}
where we dropped the nonnegative term
$\KL(p_{t+1}\|\widetilde p_{t+1})$.
By the definition of $\widetilde p_{t+1}$,
\[
\KL(q\|\widetilde p_{t+1})
=
\KL(q\|p_t)
+
\eta\E_{X\sim q}[Z_t(X)]
+
\log
\E_{X\sim p_t}[\exp(-\eta Z_t(X))].
\]
Combining this identity with \eqref{eq:omd-kl-projection} and
rearranging gives
\[
\begin{aligned}
\eta\left(
\E_{X\sim p_t}[Z_t(X)]
-
\E_{X\sim q}[Z_t(X)]
\right)
&\le
\KL(q\|p_t)
-
\KL(q\|p_{t+1})
\\
&\quad+
\eta\E_{X\sim p_t}[Z_t(X)]
+
\log
\E_{X\sim p_t}[\exp(-\eta Z_t(X))]
+\varepsilon_{\rm p}.
\end{aligned}
\]
By \Cref{lem:bounded-increments},
\[
|\eta Z_t(x)|
\le
\frac14
\qquad
\text{for every }x\in\cX.
\]
Using
\[
e^{-u}
\le
1-u+2u^2
\qquad
\text{for }|u|\le\frac14,
\]
we obtain
\[
\E_{X\sim p_t}[\exp(-\eta Z_t(X))]
\le
1
-
\eta\E_{X\sim p_t}[Z_t(X)]
+
2\eta^2\E_{X\sim p_t}[Z_t(X)^2].
\]
The inequality $\log(1+s)\le s$ then gives
\[
\eta\E_{X\sim p_t}[Z_t(X)]
+
\log
\E_{X\sim p_t}[\exp(-\eta Z_t(X))]
\le
2\eta^2
\E_{X\sim p_t}[Z_t(X)^2].
\]
Therefore,
\[
\begin{aligned}
\E_{X\sim p_t}[Z_t(X)]
-
\E_{X\sim q}[Z_t(X)]
&\le
\frac{
\KL(q\|p_t)-\KL(q\|p_{t+1})
}{\eta}
+
2\eta
\E_{X\sim p_t}[Z_t(X)^2]
+\frac{\varepsilon_{\rm p}}{\eta}.
\end{aligned}
\]
Summing over $t$ telescopes the KL terms. Finally,
$\KL(q\|p_{T+1})\ge0$, so
\[
\sum_{t=1}^T
\left(
\E_{X\sim p_t}[Z_t(X)]
-
\E_{X\sim q}[Z_t(X)]
\right)
\le
\frac{\KL(q\|p_1)}{\eta}
+
2\eta
\sum_{t=1}^T
\E_{X\sim p_t}[Z_t(X)^2]
+\frac{T\varepsilon_{\rm p}}{\eta}.
\]
Since $\varepsilon_{\rm p}=\eta/T$, the last term equals one.
\end{proof}

\begin{lemma}
\label{lem:kw-concentration}
Under \Cref{ass:analytic-regime}, let $\rho\in(0,1)$, and let
$(q_t)_{t=1}^T$ be a predictable sequence of distributions in
$\Delta(\cX)$. For every $t\in[T]$, define
\[
X_t(q_t)
:=
\E_{X\sim q_t}[\langle X,\ell_t\rangle],
\qquad
\widehat X_t(q_t)
:=
\E_{X\sim q_t}[\widehat y_t(X)].
\]
Then, with probability at least $1-\rho$,
\[
\left|
\sum_{t=1}^T
\left(
\widehat X_t(q_t)-X_t(q_t)
\right)
\right|
\le
\frac{\eta}{4}
\sum_{t=1}^T
\E_{X\sim q_t}[c_t^\top X]
+
\frac{\log(2/\rho)}{\eta}.
\]
\end{lemma}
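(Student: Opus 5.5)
The plan is to run a Freedman/Bernstein-type exponential supermartingale argument, applied separately to the increments and to their negatives, and to combine the two one-sided bounds with a union bound at level $\rho/2$ each.

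For $t\in[T]$, set
\[
D_t:=\widehat X_t(q_t)-X_t(q_t).
\]
\emph{Unbiasedness.} Since $q_t$, $p_t$ and $\ell_t$ are $\mathcal F_{t-1}$-measurable,
\[
\E_t[\widehat\ell_t]=M_t^{-1}\E_t[X_tX_t^\top]\ell_t=\ell_t .
\]
Hence $\E_t[\widehat y_t(x)]=\langle x,\ell_t\rangle$ for every $x$, and therefore $\E_t[D_t]=0$.

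\emph{Conditional second moment.} By Jensen over $X\sim q_t$ and unbiasedness,
\[
\E_t[D_t^2]\le\E_t\bigl[\widehat X_t(q_t)^2\bigr]\le\E_{X\sim q_t}\E_t\bigl[\widehat y_t(X)^2\bigr].
\]
Using $|Y_t|\le1$, for fixed $x$,
\[
\E_t[\widehat y_t(x)^2]\le \E_t\bigl[x^\top M_t^{-1}X_tX_t^\top M_t^{-1}x\bigr]=x^\top M_t^{-1}x\le \tfrac14 c_t^\top x,
\]
where the last step is \Cref{lem:linear-majorant-final}. So
\[
\E_t[D_t^2]\le\tfrac14\E_{X\sim q_t}[c_t^\top X].
\]

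\emph{Range.} From the proof of \Cref{lem:bounded-increments}, $|\widehat y_t(x)|\le B$ with $\eta B\le 5/64$. Also $|\langle x,\ell_t\rangle|\le1$, so $\eta|D_t|\le \eta(B+1)\le 1/8<1$.

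\emph{Supermartingale step.} For $|u|\le1$ we have $e^{u}\le1+u+u^2$. With $u=\eta D_t$ this gives
\[
\E_t\bigl[e^{\eta D_t}\bigr]\le1+\eta^2\E_t[D_t^2]\le\exp\Bigl(\tfrac{\eta^2}{4}\E_{X\sim q_t}[c_t^\top X]\Bigr).
\]
The same holds with $-D_t$ in place of $D_t$. Consequently
\[
W_s:=\exp\Bigl(\eta\sum_{t\le s}D_t-\tfrac{\eta^2}{4}\sum_{t\le s}\E_{X\sim q_t}[c_t^\top X]\Bigr)
\]
is a nonnegative supermartingale with $\E[W_T]\le1$. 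The compensator term is predictable, because $c_t$ and $q_t$ are $\mathcal F_{t-1}$-measurable.

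\emph{Conclusion.} Markov's inequality gives $\Pp(W_T\ge 2/\rho)\le\rho/2$. On the complementary event, taking logarithms and dividing by $\eta$ yields
\[
\sum_t D_t\le \tfrac{\eta}{4}\sum_t\E_{X\sim q_t}[c_t^\top X]+\frac{\log(2/\rho)}{\eta}.
\]
Repeating the argument for $-D_t$ and taking a union bound gives the two-sided statement with probability at least $1-\rho$.

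The only delicate step is the second-moment bound. The simplification of the quadratic form relies on $\E_t[X_tX_t^\top]=M_t$; this cancels one factor of $M_t^{-1}$ and turns the bound into a leverage score, which is then handled by the affine majorant. The range bound must also leave $\eta|D_t|\le1$, and the constants from \Cref{lem:bounded-increments} provide ample slack for this.
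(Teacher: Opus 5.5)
Your proposal is correct and follows essentially the same route as the paper. Both arguments use centered increments, a conditional second-moment bound through the leverage score and the affine majorant $x^\top M_t^{-1}x\le c_t^\top x/4$, a uniform range bound ensuring $\eta|D_t|<1$, the inequality $e^u\le 1+u+u^2$, an exponential supermartingale with Markov's inequality, and a union bound over the two signs; the only cosmetic difference is that you apply Jensen to $\bigl(\E_{X\sim q_t}[\widehat y_t(X)]\bigr)^2$ rather than to the quadratic form $\bar x_t^\top M_t^{-1}\bar x_t$ of the mean action, which gives the same bound.
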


\begin{proof}
For every $t\in[T]$, define
\[
\Delta_t
:=
\widehat X_t(q_t)-X_t(q_t).
\]
Recall that
$\E_t[\cdot]=\E[\cdot\mid\mathcal F_{t-1}]$. Since $p_t$,
$q_t$, and $\ell_t$ are predictable, they are fixed conditionally
on $\mathcal F_{t-1}$. The only randomness at round $t$ comes from
$X_t\sim p_t$.
We first show that $\Delta_t$ has conditional mean zero. By the
definition of the KW loss estimate,
\[
\begin{aligned}
\E_t[\widehat\ell_t]
=
M_t^{-1}
\E_t[X_tX_t^\top]\ell_t 
=
M_t^{-1}M_t\ell_t 
=
\ell_t.
\end{aligned}
\]
Therefore,
\[
\E_t[\widehat X_t(q_t)]
=
X_t(q_t),
\qquad
\E_t[\Delta_t]
=
0.
\]
Define
$
\bar x_t
:=
\E_{X\sim q_t}[X].
$
Then
\[
\widehat X_t(q_t)
=
\bar x_t^\top M_t^{-1}X_tY_t.
\]
Since $|Y_t|\le1$,
\[
\begin{aligned}
\E_t[\widehat X_t(q_t)^2]
&\le
\bar x_t^\top M_t^{-1}
\E_t[X_tX_t^\top]
M_t^{-1}\bar x_t =
\bar x_t^\top M_t^{-1}\bar x_t.
\end{aligned}
\]
The matrix $M_t^{-1}$ is positive semidefinite. Hence, Jensen's
inequality gives
\[
\bar x_t^\top M_t^{-1}\bar x_t
\le
\E_{X\sim q_t}
\left[
X^\top M_t^{-1}X
\right].
\]
By \Cref{lem:linear-majorant-final} and
$c_t^\top X=4\phi_t(X)$,
\[
X^\top M_t^{-1}X
\le
\frac{c_t^\top X}{4}.
\]
It follows that
\[
\E_t[\widehat X_t(q_t)^2]
\le
\frac14
\E_{X\sim q_t}[c_t^\top X].
\]
Since
$\E_t[\widehat X_t(q_t)]=X_t(q_t)$, we obtain
\begin{equation}
\label{eq:kw-concentration-variance}
\E_t[\Delta_t^2]
\le
\frac14
\E_{X\sim q_t}[c_t^\top X].
\end{equation}
We next bound $\Delta_t$ uniformly. Since
$c_t^\top x\le4B$ for every $x\in\cX$, the previous leverage bound
gives
\[
\bar x_t^\top M_t^{-1}\bar x_t
\le
B,
\qquad
X_t^\top M_t^{-1}X_t
\le
B.
\]
Using Cauchy--Schwarz in the norm induced by $M_t^{-1}$, we obtain
\[
\begin{aligned}
|\widehat X_t(q_t)|
=
\left|
\bar x_t^\top M_t^{-1}X_tY_t
\right| 
\le
\sqrt{\bar x_t^\top M_t^{-1}\bar x_t}
\sqrt{X_t^\top M_t^{-1}X_t} 
\le
B.
\end{aligned}
\]
The action-normalized condition gives
$
|X_t(q_t)|
\le
1.
$
Since $B\ge1$,
\[
|\Delta_t|
\le
B+1
\le
2B.
\]
Moreover,
\[
\eta B
=
\eta+\frac{8\eta d}{\lambda}
=
\eta+\frac{1}{16}
\le
\frac{5}{64}.
\]
Therefore,
\begin{equation}
\label{eq:kw-concentration-increment}
|\eta\Delta_t|
\le
2\eta B
\le
\frac{5}{32}
<
1.
\end{equation}
For every $u$ with $|u|\le1$,
\[
e^u
\le
1+u+u^2.
\]
Using \eqref{eq:kw-concentration-increment},
$\E_t[\Delta_t]=0$, and
\eqref{eq:kw-concentration-variance}, we obtain
\[
\begin{aligned}
\E_t[\exp(\eta\Delta_t)]
&\le
1+\eta^2\E_t[\Delta_t^2] \\
&\le
\exp\left(
\eta^2\E_t[\Delta_t^2]
\right) \\
&\le
\exp\left(
\frac{\eta^2}{4}
\E_{X\sim q_t}[c_t^\top X]
\right).
\end{aligned}
\]
For $s\in\{0,\ldots,T\}$, define
\[
\mathcal M_s
:=
\exp\left(
\eta\sum_{t=1}^s\Delta_t
-
\frac{\eta^2}{4}
\sum_{t=1}^s
\E_{X\sim q_t}[c_t^\top X]
\right).
\]
The previous conditional moment bound shows that
$(\mathcal M_s)_{s=0}^T$ is a nonnegative supermartingale with
$\mathcal M_0=1$. Hence,
$
\E[\mathcal M_T]
\le
1.
$
By Markov's inequality, with probability at least $1-\rho/2$,
\[
\sum_{t=1}^T\Delta_t
\le
\frac{\eta}{4}
\sum_{t=1}^T
\E_{X\sim q_t}[c_t^\top X]
+
\frac{\log(2/\rho)}{\eta}.
\]
The same argument applied to $-\Delta_t$ gives, with probability at
least $1-\rho/2$,
\[
-\sum_{t=1}^T\Delta_t
\le
\frac{\eta}{4}
\sum_{t=1}^T
\E_{X\sim q_t}[c_t^\top X]
+
\frac{\log(2/\rho)}{\eta}.
\]
A union bound over the two events proves the result.
\end{proof}

\begin{lemma}
\label{lem:leverage}
Under \Cref{ass:analytic-regime}, for every $\rho\in(0,1)$, with
probability at least $1-\rho$,
\[
\sum_{t=1}^T\phi_t(X_t)
\le
2(2d+1)T
+
2\left(
1+\frac{8d}{\lambda}
\right)
\log\frac{1}{\rho}.
\]
\end{lemma}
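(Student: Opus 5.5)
The plan is a standard Freedman-type (exponential supermartingale) argument applied to the nonnegative, bounded random variables $W_t:=\phi_t(X_t)$. The facts needed come directly from \Cref{lem:linear-majorant-final}, applied with $\lambda/2$ (since $p_t\in\cP_{\lambda/2}$):
\[
0\le W_t\le B:=1+\frac{8d}{\lambda},
\qquad
\E_t[W_t]=\E_{X\sim p_t}[\phi_t(X)]=2d+1.
\]
The first bound holds deterministically. The conditional mean is exact because $\phi_t$ is $\mathcal F_{t-1}$-measurable (it depends only on $p_t$) and $X_t\sim p_t$ conditionally on $\mathcal F_{t-1}$.

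The key step is a one-sided moment generating function bound. Set $\gamma:=1/(2B)$. For $u\in[0,1/2]$, convexity gives $e^{u}\le 1+2(e^{1/2}-1)u\le 1+2u$. Applying this with $u=\gamma W_t\in[0,1/2]$ yields
\[
\E_t\bigl[e^{\gamma W_t}\bigr]\le 1+2\gamma\,\E_t[W_t]\le \exp\bigl(2\gamma(2d+1)\bigr).
\]
Hence
\[
\mathcal M_s:=\exp\Bigl(\gamma\sum_{t\le s}W_t-2\gamma(2d+1)s\Bigr)
\]
is a nonnegative supermartingale with $\mathcal M_0=1$, exactly as in the proof of \Cref{lem:kw-concentration}. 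By Markov's inequality, $\Pp(\mathcal M_T\ge1/\rho)\le\rho$. So, with probability at least $1-\rho$,
\[
\sum_{t=1}^T W_t\le 2(2d+1)T+\frac{\log(1/\rho)}{\gamma}=2(2d+1)T+2B\log\frac1\rho,
\]
which is exactly the claimed bound.

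The only delicate points are the following. First, we must confirm that the range bound uses $\lambda/2$ (giving $B=1+8d/\lambda$) rather than $\lambda$. Second, we must check the elementary inequality $e^{u}\le1+2u$ on $[0,1/2]$. Since $e^{u}$ is convex, it suffices to check the endpoints: $e^{1/2}\approx1.6487\le 2$. The factor $2$ multiplying $(2d+1)T$ in the statement shows that the authors accept this crude multiplicative loss, which makes the argument go through without any variance term. No union bound is needed, because the statement is one-sided.
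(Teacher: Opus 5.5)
Your proof is correct and is essentially the paper's argument. Both build an exponential supermartingale from the chord (convexity) bound on $e^u$, the exact conditional mean $\E_t[\phi_t(X_t)]=2d+1$, the range bound $0\le\phi_t(X_t)\le B=1+8d/\lambda$ from \Cref{lem:linear-majorant-final} at level $\lambda/2$, and Markov's inequality. The paper takes $\gamma=1/B$ with $e^u\le1+(e-1)u$ on $[0,1]$, which gives the slightly sharper term $B\log(1/\rho)$ before loosening, whereas your choice $\gamma=1/(2B)$ lands exactly on the stated constants.
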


\begin{proof}
Recall that
$
B
=
1+{8d}/{\lambda}.
$
Conditionally on $\mathcal F_{t-1}$, the distribution $p_t$ and the
function $\phi_t$ are fixed, while $X_t\sim p_t$. Therefore,
\Cref{lem:linear-majorant-final} gives
\[
\E_t[\phi_t(X_t)]
=
\E_{X\sim p_t}[\phi_t(X)]
=
2d+1,
\qquad
0\le\phi_t(X_t)\le B.
\]
Since $\phi_t(X_t)/B\in[0,1]$, convexity of the exponential gives
\[
e^u
\le
1+(e-1)u,
\qquad
u\in[0,1].
\]
It follows that
\[
\begin{aligned}
\E_t\left[
\exp\left(\frac{\phi_t(X_t)}{B}\right)
\right]
&\le
1+
\frac{(e-1)(2d+1)}{B} \le
\exp\left(
\frac{(e-1)(2d+1)}{B}
\right).
\end{aligned}
\]
For $s\in\{0,\ldots,T\}$, define
\[
\mathcal M_s
:=
\exp\left(
\frac{1}{B}\sum_{t=1}^s\phi_t(X_t)
-
\frac{(e-1)(2d+1)s}{B}
\right).
\]
The previous conditional moment bound shows that
$(\mathcal M_s)_{s=0}^T$ is a nonnegative supermartingale with
$\mathcal M_0=1$. Hence,
\[
\E[\mathcal M_T]\le1.
\]
By Markov's inequality, with probability at least $1-\rho$,
\[
\sum_{t=1}^T\phi_t(X_t)
\le
(e-1)(2d+1)T
+
B\log\frac{1}{\rho}.
\]
Using $e-1\le2$ and substituting the definition of $B$ proves the
result.
\end{proof}

\begin{lemma}
\label{lem:realized-loss-martingale}
Under \Cref{ass:analytic-regime}, for every $\rho\in(0,1)$, with
probability at least $1-\rho$,
\[
\sum_{t=1}^T
\left(
\langle X_t,\ell_t\rangle
-
\E_{X\sim p_t}[\langle X,\ell_t\rangle]
\right)
\le
\sqrt{2T\log\frac{1}{\rho}}.
\]
\end{lemma}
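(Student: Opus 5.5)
This is Azuma--Hoeffding applied to a bounded martingale difference sequence, carried out through the same exponential supermartingale template used in \Cref{lem:kw-concentration,lem:leverage}. For every $t\in[T]$, define
\[
    D_t
:=
    \langle X_t,\ell_t\rangle-\E_{X\sim p_t}[\langle X,\ell_t\rangle].
\]
Conditionally on $\mathcal F_{t-1}$, both $p_t$ and $\ell_t$ are fixed by \Cref{ass:analytic-regime}, and $X_t\sim p_t$. Hence $\E_t[D_t]=0$. Moreover, $\langle X_t,\ell_t\rangle\in[-1,1]$ by the action-normalized condition, so conditionally on $\mathcal F_{t-1}$ the variable $D_t$ is a centred shift of a random variable supported in an interval of length $2$.

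\textbf{Conditional moment bound.} For every fixed $s>0$, Hoeffding's lemma applied conditionally gives
\[
    \E_t[e^{sD_t}]
\le
    \exp\left(\frac{s^2\cdot 2^2}{8}\right)
=
    e^{s^2/2}.
\]

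\textbf{Supermartingale and Markov.} Define
\[
    \mathcal M_k
:=
    \exp\left(s\sum_{t=1}^k D_t-\frac{ks^2}{2}\right).
\]
By the previous bound, $(\mathcal M_k)_{k=0}^T$ is a nonnegative supermartingale with $\mathcal M_0=1$, so $\E[\mathcal M_T]\le1$. Markov's inequality then gives, with probability at least $1-\rho$,
\[
    \sum_{t=1}^T D_t
\le
    \frac{Ts}{2}+\frac{\log(1/\rho)}{s}.
\]

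\textbf{Choice of $s$.} The parameter $s$ must be deterministic, which it is here since $T$ and $\rho$ are fixed in advance. Taking $s=\sqrt{2\log(1/\rho)/T}$ yields
\[
    \sum_{t=1}^T D_t
\le
    \sqrt{\frac{T\log(1/\rho)}{2}}+\sqrt{\frac{T\log(1/\rho)}{2}}
=
    \sqrt{2T\log\frac1\rho},
\]
which is exactly the claimed bound.

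There is no real obstacle. The only points to check are that the range of $D_t$ has length $2$, which comes from $|\langle x,\ell_t\rangle|\le1$ (so that Hoeffding's constant is $1/2$), and that predictability of $\ell_t$ makes $p_t$ and $\ell_t$ constant given $\mathcal F_{t-1}$.
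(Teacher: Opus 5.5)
Your proposal is correct and is essentially the paper's argument. The paper invokes Azuma--Hoeffding directly for the martingale differences $D_t$, which have conditional range of length at most $2$, and takes $u=\sqrt{2T\log(1/\rho)}$; you instead unroll that inequality through Hoeffding's lemma, the exponential supermartingale, Markov's inequality, and the optimized deterministic $s$, and arrive at the identical bound.
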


\begin{proof}
Define
\[
D_t
:=
\langle X_t,\ell_t\rangle
-
\E_{X\sim p_t}[\langle X,\ell_t\rangle].
\]
Since $p_t$ and $\ell_t$ are determined before $X_t$ is sampled,
\[
\E[D_t\mid\mathcal F_{t-1}]=0.
\]
Thus, $(D_t)_{t=1}^T$ is a martingale difference sequence.

By the action-normalized condition,
$\langle X_t,\ell_t\rangle\in[-1,1]$. Therefore, conditionally on
$\mathcal F_{t-1}$, $D_t$ belongs to an interval of length at most
$2$. The Azuma--Hoeffding inequality then gives, for every $u>0$,
\[
\Pp\left(
\sum_{t=1}^T D_t\geq u
\right)
\leq
\exp\left(-\frac{u^2}{2T}\right).
\]
Taking
$
u=\sqrt{2T\log({1}/{\rho})}
$
proves the result.
\end{proof}

\subsection{Proof of the main regret theorem}
\label{app:proof-main}

\mainRegretTheorem*

\begin{proof}
For every $x\in\cX$, define the smoothed comparator
\[
q_x
:=
(1-\lambda)\delta_x+\lambda U,
\]
where $\delta_x$ assigns probability one to $x$, and $U$ is the
uniform distribution over $\cX$. If $x_i=1$, the $i$-th marginal
of $q_x$ is
\[
1-\lambda(1-r),
\]
while, if $x_i=0$, it is $\lambda r$. Therefore,
$q_x\in\cP_\lambda$. Moreover, since $p_1=U$,
\[
\KL(q_x\|p_1)
=
\KL(q_x\|U)
=
\log K-H(q_x)
\le
\log K.
\]
Applying \Cref{lem:omd-stability} with $q=q_x$ gives
\begin{equation}
\label{eq:main-proof-omd}
\sum_{t=1}^T
\left(
\E_{X\sim p_t}[Z_t(X)]
-
\E_{X\sim q_x}[Z_t(X)]
\right)
\le
\frac{\log K}{\eta}
+1
+
2\eta
\sum_{t=1}^T
\E_{X\sim p_t}[Z_t(X)^2].
\end{equation}
We first bound the second-order term. Since
$
Z_t(X)
=
\widehat y_t(X)-\eta c_t^\top X,
$
we have
\[
Z_t(X)^2
\le
2\widehat y_t(X)^2
+
2\eta^2(c_t^\top X)^2.
\]
Furthermore, we have
\[
\begin{aligned}
\E_{X\sim p_t}[\widehat y_t(X)^2]
&=
Y_t^2
X_t^\top M_t^{-1}
\E_{X\sim p_t}[XX^\top]
M_t^{-1}X_t \\
&=
Y_t^2X_t^\top M_t^{-1}X_t \\
&\le
\phi_t(X_t),
\end{aligned}
\]
where the last inequality follows from $|Y_t|\le1$ and
\Cref{lem:linear-majorant-final}.
Recall that
\[
0\le c_t^\top X\le4B,
\qquad
\E_{X\sim p_t}[c_t^\top X]
=
4(2d+1).
\]
Hence,
\[
\E_{X\sim p_t}[(c_t^\top X)^2]
\le
4B\E_{X\sim p_t}[c_t^\top X]
=
16B(2d+1).
\]
It follows that
\begin{equation}
\label{eq:main-proof-second-order}
2\eta\E_{X\sim p_t}[Z_t(X)^2]
\le
4\eta\phi_t(X_t)
+
64\eta^3B(2d+1).
\end{equation}
Using
$
Z_t(X)
=
\widehat y_t(X)-\eta c_t^\top X
$
in \eqref{eq:main-proof-omd}, and then applying
\eqref{eq:main-proof-second-order}, gives
\begin{align}
\label{eq:main-proof-estimated-loss}
&\sum_{t=1}^T
\left(
\E_{X\sim p_t}[\widehat y_t(X)]
-
\E_{X\sim q_x}[\widehat y_t(X)]
\right)
\nonumber\\
&\quad\le
\frac{\log K}{\eta}
+1
+
4\eta\sum_{t=1}^T\phi_t(X_t)
+
64\eta^3B(2d+1)T
+
4\eta(2d+1)T
-
\eta
\sum_{t=1}^T
\E_{X\sim q_x}[c_t^\top X].
\end{align}
We now apply \Cref{lem:leverage} with $\rho=\delta/4$. With
probability at least $1-\delta/4$,
\[
\begin{aligned}
4\eta\sum_{t=1}^T\phi_t(X_t)
&\le
8\eta(2d+1)T
+
8\eta B\log\frac{4}{\delta}.
\end{aligned}
\]
Since
\[
\eta B
=
\eta+\frac{8\eta d}{\lambda}
=
\eta+\frac{1}{16}
\le
\frac{5}{64},
\]
we have $8\eta B\le5/8$. Therefore,
\begin{equation}
\label{eq:main-proof-leverage}
4\eta\sum_{t=1}^T\phi_t(X_t)
\le
8\eta(2d+1)T
+
\log\frac{4}{\delta}.
\end{equation}
Next, apply \Cref{lem:kw-concentration} to the predictable sequence
$q_t=p_t$, with $\rho=\delta/4$. With probability at least
$1-\delta/4$,
\begin{align}
\label{eq:main-proof-learner-concentration}
&\sum_{t=1}^T
\left(
\E_{X\sim p_t}[\langle X,\ell_t\rangle]
-
\E_{X\sim p_t}[\widehat y_t(X)]
\right)
\le
\eta(2d+1)T
+
\frac{\log(8/\delta)}{\eta}.
\end{align}

For every $x\in\cX$, apply the same lemma to the constant predictable
sequence $q_t=q_x$, with $\rho=\delta/(4K)$. A union bound over the
$K$ actions shows that, with probability at least $1-\delta/4$,
simultaneously for every $x\in\cX$,
\begin{align}
\label{eq:main-proof-comparator-concentration}
&\sum_{t=1}^T
\left(
\E_{X\sim q_x}[\widehat y_t(X)]
-
\E_{X\sim q_x}[\langle X,\ell_t\rangle]
\right)\le
\frac{\eta}{4}
\sum_{t=1}^T
\E_{X\sim q_x}[c_t^\top X]
+
\frac{\log(8K/\delta)}{\eta}.
\end{align}
Combining
\eqref{eq:main-proof-estimated-loss},
\eqref{eq:main-proof-leverage},
\eqref{eq:main-proof-learner-concentration}, and
\eqref{eq:main-proof-comparator-concentration}, we obtain,
simultaneously for every $x\in\cX$,
\begin{align}
\label{eq:main-proof-combined}
&\sum_{t=1}^T
\left(
\E_{X\sim p_t}[\langle X,\ell_t\rangle]
-
\E_{X\sim q_x}[\langle X,\ell_t\rangle]
\right)
 \le
\frac{
\log K+\log(8/\delta)+\log(8K/\delta)
}{\eta}
\nonumber\\
&\qquad+
1
+
13\eta(2d+1)T
+
64\eta^3B(2d+1)T
+
\log\frac{4}{\delta}
-
\eta
\sum_{t=1}^T
\E_{X\sim q_x}[c_t^\top X]
+
\frac{\eta}{4}
\sum_{t=1}^T
\E_{X\sim q_x}[c_t^\top X].
\end{align}
The last two terms satisfy
\[
-\eta
\sum_{t=1}^T
\E_{X\sim q_x}[c_t^\top X]
+
\frac{\eta}{4}
\sum_{t=1}^T
\E_{X\sim q_x}[c_t^\top X]
=
-\frac{3\eta}{4}
\sum_{t=1}^T
\E_{X\sim q_x}[c_t^\top X]
\le
0,
\]
because $c_t^\top X\ge0$. Moreover,
$
64\eta^3B
=
64\eta^2(\eta B)
\le
5\eta^2
\le
5\eta.
$
Therefore, \eqref{eq:main-proof-combined} gives
\[
\begin{aligned}
&\sum_{t=1}^T
\left(
\E_{X\sim p_t}[\langle X,\ell_t\rangle]
-
\E_{X\sim q_x}[\langle X,\ell_t\rangle]
\right)
\le
\frac{
\log K+\log(8/\delta)+\log(8K/\delta)
}{\eta}
+
1
+
18\eta(2d+1)T
+
\log\frac{4}{\delta}.
\end{aligned}
\]
Using $2d+1\le3d$ and enlarging the logarithmic terms, we obtain
\begin{equation}
\label{eq:main-proof-smoothed-comparator}
\begin{aligned}
&\sum_{t=1}^T
\left(
\E_{X\sim p_t}[\langle X,\ell_t\rangle]
-
\E_{X\sim q_x}[\langle X,\ell_t\rangle]
\right)
\le
\frac{
3\log K+3\log(12/\delta)
}{\eta}
+
1
+
54\eta dT
+
\log\frac{4}{\delta}.
\end{aligned}
\end{equation}
Let $x^\star$ be a best action in hindsight. Since
$
q_{x^\star}
=
(1-\lambda)\delta_{x^\star}+\lambda U
$
and every action loss belongs to $[-1,1]$,
\[
\E_{X\sim q_{x^\star}}[\langle X,\ell_t\rangle]
\le
\langle x^\star,\ell_t\rangle
+
2\lambda.
\]
Applying \eqref{eq:main-proof-smoothed-comparator} with $x=x^\star$
gives
\[
\begin{aligned}
&\sum_{t=1}^T
\left(
\E_{X\sim p_t}[\langle X,\ell_t\rangle]
-
\langle x^\star,\ell_t\rangle
\right)
\\
&\quad\le
\frac{
3\log K+3\log(12/\delta)
}{\eta}
+
1
+
54\eta dT
+
2\lambda T
+
\log\frac{4}{\delta}
\\
&\quad\le
\frac{
3\log K+3\log(12/\delta)
}{\eta}
+
1
+
310\eta dT
+
\log\frac{4}{\delta},
\end{aligned}
\]
where the last inequality uses $\lambda=128\eta d$.
Finally, apply \Cref{lem:realized-loss-martingale} with
$\rho=\delta/4$. With probability at least $1-\delta/4$,
\[
\begin{aligned}
&\sum_{t=1}^T
\left(
\langle X_t,\ell_t\rangle
-
\E_{X\sim p_t}[\langle X,\ell_t\rangle]
\right)
\le
\sqrt{2T\log\frac{4}{\delta}}
\le
2\sqrt{T\log\frac{4}{\delta}}.
\end{aligned}
\]
Since $310\le320$ and
$1+\log(4/\delta)\le4\log(12/\delta)$, a union bound over the four
failure events gives
\begin{equation}
\label{eq:main-proof-final-parameters}
R_T
\le
\frac{
3\log K+3\log(12/\delta)
}{\eta}
+
320\eta dT
+
4\log\frac{12}{\delta}
+
2\sqrt{T\log\frac{12}{\delta}}.
\end{equation}
It remains to substitute the value of $\eta$. Define
\[
A
:=
\log K+\log\frac{12}{\delta}.
\]
Also set
\[
S
:=
\log K+\log\frac1\delta.
\]
Since $K=\binom dm\ge2$, we have $S\ge\log2$, and therefore
\begin{equation}
\label{eq:A-to-S}
A
=
S+\log12
\le
\left(1+\frac{\log12}{\log2}\right)S
\le
\frac{23}{5}S,
\end{equation}
where the last inequality follows from
$12^5<2^{18}$.
If
$
\eta
=
\sqrt{{A}/{(320dT)}},
$
then the first two terms in
\eqref{eq:main-proof-final-parameters} are exactly
\[
\frac{3A}{\eta}+320\eta dT
=
4\sqrt{320dTA}.
\]
Moreover,
\[
4\log\frac{12}{\delta}
\le
4A,
\qquad
2\sqrt{T\log\frac{12}{\delta}}
\le
2\sqrt{dTA}.
\]
It remains to consider the case in which
$
\eta
={1}/{(256d)}.
$
In this case,
\[
\sqrt{\frac{A}{320dT}}
\ge
\frac{1}{256d},
\]
which implies
\[
T
\le
\frac{256^2}{320}\,dA.
\]
Since the action-normalized condition gives the deterministic bound
$R_T\le2T$, we also have
\[
R_T
\le
2\sqrt{\frac{256^2}{320}}\sqrt{dTA}
\le
29\sqrt{dTA}.
\]
In the non-clipped case,
\Cref{eq:main-proof-final-parameters} is at most
$74\sqrt{dTA}+4A$.
Since $29<74$ and $A>0$, the same bound also holds in the clipped
case. Hence, by \eqref{eq:A-to-S},
\begin{equation}
\label{eq:main-proof-almost-final}
R_T
\le
74\sqrt{\frac{23}{5}}\sqrt{dTS}
+
\frac{92}{5}S.
\end{equation}

Set
\[
\gamma
:=
\sqrt{\frac{S}{dT}}
>0.
\]
Since
\[
S
=
\gamma\sqrt{dTS},
\]
if $\gamma\le1/160$, then
\begin{align*}
R_T
&\le
\left(
74\sqrt{\frac{23}{5}}
+
\frac{92}{5}\gamma
\right)
\sqrt{dTS}
\\
&\le
\left(
74\frac{43}{20}
+
\frac{92}{5}\frac{1}{160}
\right)
\sqrt{dTS}
\\
&=
\frac{31843}{200}\sqrt{dTS}
<
160\sqrt{dTS},
\end{align*}
where we used
\[
\sqrt{\frac{23}{5}}
<
\frac{43}{20}.
\]

If instead $\gamma>1/160$, the deterministic bound $R_T\le2T$ and
$d\ge2$ give
\[
R_T
\le
2T
=
\frac{2}{d\gamma}\sqrt{dTS}
\le
\frac{1}{\gamma}\sqrt{dTS}
<
160\sqrt{dTS}.
\]
Thus, in all cases,
\[
R_T
\le
160\sqrt{
dT\left(
\log K+\log\frac1\delta
\right)
},
\]
which proves the theorem with $C=160$.
\end{proof}

\section{Polynomial-time implementation}
\label{app:implementation}

This appendix proves \Cref{prop:efficient-implementation}. We show that our
algorithm can be implemented in polynomial time without enumerating
$\mathcal X$. It maintains each $p_t$ as a weighted $m$-set distribution,
computes the required moments and draws samples in polynomial time, and reduces
the target KL projection to a convex problem in $2d$ variables. The
convex problem is solved only to the inverse-polynomial accuracy specified
below.

\subsection{Updates and KL projection}

We first identify the exact KL projection toward which the algorithm
computes and show that it reduces to a convex problem in $2d$ variables.
We then prove that solving this problem to an explicit objective accuracy
is sufficient for the approximate update in \Cref{alg:exp3-kw}.

Suppose that $p_t=p_{\theta_t}$. Define the unprojected
multiplicative-weights update
\[
    \widetilde p_{t+1}(x)
    :=
    \frac{p_t(x)\exp(-\eta Z_t(x))}
         {\E_{X\sim p_t}[\exp(-\eta Z_t(X))]}.
\]
For every $p\in\mathcal P_\lambda$,
\[
    \eta\E_{X\sim p}[Z_t(X)]
    +\KL(p\Vert p_t)
    =
    \KL(p\Vert\widetilde p_{t+1})
    -\log \E_{X\sim p_t}[\exp(-\eta Z_t(X))].
\]
The last term does not depend on $p$. Therefore, the exact constrained OMD iterate is the KL projection of $\widetilde p_{t+1}$ onto $\mathcal P_\lambda$, namely,
\[
    p^\star_{t+1}
    \in
    \argmin_{p\in\mathcal P_\lambda}
    \KL(p\Vert\widetilde p_{t+1}).
\]
The algorithm computes the approximate replacement specified in
\Cref{line:alg-projection} rather than assuming access to this exact
projection.

We next show that the first step preserves the weighted $m$-set form. Since
$c_t^\top x$ is affine in $x$, the surrogate loss is affine on
$\mathcal X$:
\[
    Z_t(x)
    =
    \langle x,\widehat\ell_t\rangle-\eta c_t^\top x
    =
    z_t^\top x,
    \qquad
    z_t:=\widehat\ell_t-\eta c_t.
\]
Consequently,
\begin{equation}\label{eq:unprojected-parameter-update}
    \widetilde p_{t+1}(x)
    =
    \frac{p_{\theta_t}(x)\exp(-\eta z_t^\top x)}
         {\E_{X\sim p_{\theta_t}}[\exp(-\eta z_t^\top X)]}
    =
    p_{\widetilde\theta_{t+1}}(x),
    \qquad
    \widetilde\theta_{t+1}:=\theta_t-\eta z_t.
\end{equation}
Hence, if $p_t$ is a weighted $m$-set distribution, then so is
$\widetilde p_{t+1}$.

It remains to analyze the target KL projection. The next lemma shows that the
projection of a weighted $m$-set distribution onto
$\mathcal P_\lambda$ is again a weighted $m$-set distribution. It also
shows that the projection can be obtained by solving a convex problem in
$2d$ variables.

\begin{lemma}\label{lem:projection-dual}
Define
\[
    a_i:=\lambda r,
    \qquad
    b_i:=1-\lambda(1-r),
    \qquad i\in[d].
\]
For every $\widetilde\theta\in\R^d$, the KL projection
\[
    \argmin_{p\in\mathcal P_\lambda}
    \KL(p\Vert p_{\widetilde\theta})
\]
is a weighted $m$-set distribution $p_{\theta^+}$, where
$
    \theta^+
    =
    \widetilde\theta+\alpha^\star-\beta^\star.
$
Here, $(\alpha^\star,\beta^\star)\in\R_+^d\times\R_+^d$ minimizes
the convex function
\[
    \Psi_{\widetilde\theta}(\alpha,\beta)
    :=
    F(\widetilde\theta+\alpha-\beta)
    -\alpha^\top a+\beta^\top b.
\]
Moreover,
\[
    \frac{\partial\Psi_{\widetilde\theta}}{\partial\alpha_i}
    =
    \mu_i(\widetilde\theta+\alpha-\beta)-a_i,
    \qquad
    \frac{\partial\Psi_{\widetilde\theta}}{\partial\beta_i}
    =
    b_i-\mu_i(\widetilde\theta+\alpha-\beta).
\]
\end{lemma}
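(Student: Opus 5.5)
This is a Lagrangian-duality argument for a linearly constrained KL projection. The primal problem is to minimize $\KL(p\|p_{\widetilde\theta})$ over $p\in\Delta(\cX)$, subject to the $2d$ linear constraints $\mu_i(p)\ge a_i$ and $\mu_i(p)\le b_i$. The objective is strictly convex and lower semicontinuous on the compact simplex, and the feasible set $\cP_\lambda$ is nonempty (it contains $U$), so a unique minimizer exists.

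Slater's condition holds. Take $q:=(1-\lambda/2)\cdot$ (any feasible point) mixed appropriately, or more simply $U$ itself: since $\lambda<1$, we have $a_i=\lambda r<r<1-\lambda(1-r)=b_i$, so all inequality constraints are strict at $U$. Strong duality therefore holds.

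\textbf{Step 1: the Lagrangian.} Introduce multipliers $\alpha,\beta\in\R_+^d$ and write
\[
L(p,\alpha,\beta)=\KL(p\|p_{\widetilde\theta})-\alpha^\top(\mu(p)-a)+\beta^\top(\mu(p)-b).
\]
For fixed $(\alpha,\beta)$, minimizing over $p\in\Delta(\cX)$ is the Gibbs variational principle with linear tilt $(\alpha-\beta)^\top x$. The minimizer is
\[
p(x)\propto p_{\widetilde\theta}(x)\,e^{(\alpha-\beta)^\top x}=p_{\widetilde\theta+\alpha-\beta}(x).
\]
The minimum value is
\[
-\log\E_{p_{\widetilde\theta}}\bigl[e^{(\alpha-\beta)^\top X}\bigr]+\alpha^\top a-\beta^\top b=-\bigl(F(\widetilde\theta+\alpha-\beta)-F(\widetilde\theta)\bigr)+\alpha^\top a-\beta^\top b.
\]
Hence the dual problem is to maximize $-\Psi_{\widetilde\theta}(\alpha,\beta)+F(\widetilde\theta)$ over $\R_+^{2d}$, which is the same as minimizing $\Psi_{\widetilde\theta}$.

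\textbf{Step 2: convexity and gradients.} Convexity of $\Psi_{\widetilde\theta}$ follows because $F$ is a log-sum-exp composed with a linear map, and the remaining terms are linear. The gradient formulas come from the chain rule together with $\nabla F=\mu$.

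\textbf{Step 3: existence of a dual minimizer.} This is the main obstacle, since $\Psi_{\widetilde\theta}$ is not strictly convex: it is invariant along directions where $\alpha-\beta$ is constant, and $F$ shifts by $cm$ when its argument moves along $\mathbf 1$. I would argue existence of a dual optimum via Slater's condition, using the standard convex-duality result that, for a finite-valued convex primal with strictly feasible affine inequality constraints, the dual optimum is attained. Alternatively, one can show coercivity directly. Write $\Psi=F(\widetilde\theta+\alpha-\beta)-\alpha^\top a+\beta^\top b$ and use the Slater point: $F(\theta)\ge\langle\mu(U),\theta\rangle-\KL$-type bounds give $\Psi\ge\alpha^\top(r\mathbf 1-a)+\beta^\top(b-r\mathbf 1)-C$, where both coefficient vectors are strictly positive. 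This makes $\Psi$ coercive on $\R_+^{2d}$, so a minimizer exists.

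\textbf{Step 4: recovering the primal solution.} Given the dual minimizer $(\alpha^\star,\beta^\star)$, the KKT conditions hold: the gradient inequalities give primal feasibility of $p_{\theta^+}$, and complementary slackness holds. The Lagrangian minimizer at $(\alpha^\star,\beta^\star)$ is $p_{\theta^+}$, and by strong duality with zero duality gap it is primal optimal. Uniqueness of the primal minimizer (strict convexity of KL) then identifies the projection as $p_{\theta^+}$. A direct verification is also available: for any $q\in\cP_\lambda$, one checks
\[
\KL(q\|p_{\widetilde\theta})-\KL(p_{\theta^+}\|p_{\widetilde\theta})\ge(\alpha^\star-\beta^\star)^\top(\mu(q)-\mu(p_{\theta^+}))\ge0,
\]
where the last inequality uses complementary slackness and the sign constraints.
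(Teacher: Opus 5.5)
Your proposal is correct and follows essentially the same route as the paper: Slater's condition at $U$, the Gibbs variational principle giving the tilt $p_{\widetilde\theta+\alpha-\beta}$, identification of the dual with minimizing $\Psi_{\widetilde\theta}$, and the gradients from $\nabla F=\mu$. Your coercivity bound for dual attainment (via $F(\theta)\ge\log K+r\mathbf 1^\top\theta$) and your direct three-point check of the KKT conditions are valid extras: the paper leaves these to the cited strong-duality theorem and uses the same ingredients again in \Cref{lem:approximate-projection}. The only slip is your aside that $\Psi_{\widetilde\theta}$ is invariant when $\alpha-\beta$ is fixed; it is affine in those directions, with slope $b_i-a_i>0$ per coordinate, and none of your argument depends on that remark.
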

\begin{proof}
The projection minimizes $\KL(p\|p_{\widetilde\theta})$ over
$p\in\Delta(\cX)$, subject to
\[
a_i-\E_{X\sim p}[X_i]\leq0,
\qquad
\E_{X\sim p}[X_i]-b_i\leq0.
\]
The uniform distribution has all marginals equal to $r$, strictly between
$a_i$ and $b_i$. Hence, Slater's condition holds. We may therefore use
the standard strong-duality and KKT results of
\citet[Sections~5.2.3 and~5.5.3]{boyd2004convex}.

Introduce multipliers $\alpha_i,\beta_i\geq0$. The Lagrangian is
\[
\begin{aligned}
\mathcal L(p,\alpha,\beta)
&=
\KL(p\|p_{\widetilde\theta})
+\alpha^\top a-\beta^\top b
+\E_{X\sim p}[(\beta-\alpha)^\top X].
\end{aligned}
\]
For fixed $(\alpha,\beta)$, minimizing over $p$ gives
\[
p(x)
\propto
p_{\widetilde\theta}(x)
e^{(\alpha-\beta)^\top x}
=
p_{\widetilde\theta+\alpha-\beta}(x).
\]
The resulting dual maximization is equivalent to minimizing
$\Psi_{\widetilde\theta}$. Strong duality proves the first claim, and
the derivative identities follow from $\nabla F=\mu$.
\end{proof}

\begin{lemma}
\label{lem:approximate-projection}
Let $0<\lambda\le1/2$, let $\varepsilon>0$, and let
$\widetilde\theta\in\R^d$ satisfy
\[
\KL(U\|p_{\widetilde\theta})\le T.
\]
Define
\[
h
:=
\frac{\lambda}{2}\min\{r,1-r\}
=
\frac{\lambda s}{2d},
\qquad
R:=2dT,
\]
and
\begin{equation}
\label{eq:dual-required-accuracy}
\tau
:=
\min\left\{
2h^2,
\frac{\varepsilon}{2},
\frac{\varepsilon^2}{2R^2}
\right\}.
\end{equation}
Let
\[
\mathcal D_R
:=
\left\{
(\alpha,\beta)\in\R_+^d\times\R_+^d:
\|\alpha\|_1+\|\beta\|_1\le R
\right\}.
\]
Suppose that $(\alpha,\beta)\in\mathcal D_R$ satisfies
\begin{equation}
\label{eq:dual-objective-accuracy}
\Psi_{\widetilde\theta}(\alpha,\beta)
\le
\min_{(u,v)\in\mathcal D_R}
\Psi_{\widetilde\theta}(u,v)
+\tau.
\end{equation}
Set
\[
\theta^+
:=
\widetilde\theta+\alpha-\beta,
\qquad
p^+
:=
p_{\theta^+}.
\]
Then $p^+\in\cP_{\lambda/2}$, and, for every
$q\in\cP_\lambda$,
\begin{equation}
\label{eq:approximate-pythagorean}
\KL(q\|p^+)
+\KL(p^+\|p_{\widetilde\theta})
\le
\KL(q\|p_{\widetilde\theta})
+\varepsilon.
\end{equation}
Moreover, a pair satisfying \eqref{eq:dual-objective-accuracy} can be
computed by the ellipsoid method in time
\[
\poly\left(
d,m,\log R,\log\frac1\tau
\right).
\]
\end{lemma}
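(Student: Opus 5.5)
The plan is to work entirely through the dual function $\Psi:=\Psi_{\widetilde\theta}$ of \Cref{lem:projection-dual}, and to extract the two conclusions from first-order information at the near-optimal point $(\alpha,\beta)$. Two facts about $\Psi$ will be used throughout.

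\emph{Curvature.} Along any direction $(u,w)$ the second derivative of $\Psi$ is $(u-w)^\top\Sigma(p_\theta)(u-w)=\Var(\langle u-w,X\rangle)$. Since $\langle u-w,X\rangle$ ranges in an interval of length at most $\|u\|_1+\|w\|_1$, this is at most $(\|u\|_1+\|w\|_1)^2/4$. In particular $\Psi$ is $\tfrac14$-smooth along each coordinate direction.

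\emph{Boundedness of near-optimal points.} From $F(\theta)\ge\E_U[\langle\theta,X\rangle]+H(U)$ one gets
\[
\Psi(\alpha,\beta)\ge F(\widetilde\theta)-\KL(U\|p_{\widetilde\theta})+(1-\lambda)\bigl(r\|\alpha\|_1+(1-r)\|\beta\|_1\bigr),
\]
because $r-a_i=(1-\lambda)r$ and $b_i-r=(1-\lambda)(1-r)$. Since $\Psi(0,0)=F(\widetilde\theta)$ and $\KL(U\|p_{\widetilde\theta})\le T$, any $\tau$-optimal point has $\|\alpha\|_1+\|\beta\|_1$ of order $T/\min\{r,1-r\}\le dT$. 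So it lies strictly inside $\mathcal D_R$ with $R=2dT$, and the constrained minimum over $\mathcal D_R$ coincides with the unconstrained one.

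\textbf{Step 1: $p^+\in\cP_{\lambda/2}$.} I argue by contradiction. Suppose $\mu_i(\theta^+)<a_i-h$. Then $\partial\Psi/\partial\alpha_i<-h$. Increasing $\alpha_i$ by $4h$ keeps the point in $\mathcal D_R$ (by the interior slack above, for the relevant range of $T$) and, by $\tfrac14$-smoothness, decreases $\Psi$ by at least $2h^2\ge\tau$. To make this contradict \eqref{eq:dual-objective-accuracy} the decrease must be strictly larger than $\tau$; I expect to obtain this by tightening the smoothness constant (the variance is at most $\mu_i(1-\mu_i)<\tfrac14$ strictly) or by slightly shrinking $h$. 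The case $\mu_i(\theta^+)>b_i+h$ is symmetric, using $\beta_i$. Since $a_i-h\ge\tfrac{\lambda}{2}r$ and $b_i+h\le1-\tfrac{\lambda}{2}(1-r)$, this gives $p^+\in\cP_{\lambda/2}$.

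\textbf{Step 2: the approximate Pythagorean inequality.} Write $u:=\alpha-\beta$. A direct expansion using $p_{\theta^+}\propto p_{\widetilde\theta}e^{u^\top x}$ gives
\[
\KL(q\|p^+)+\KL(p^+\|p_{\widetilde\theta})-\KL(q\|p_{\widetilde\theta})=u^\top\bigl(\mu(p^+)-\mu(q)\bigr).
\]
Split this as
\[
\sum_i\alpha_i\bigl(\mu_i^+-a_i\bigr)+\sum_i\beta_i\bigl(b_i-\mu^+_i\bigr)-\sum_i\alpha_i\bigl(\mu_i(q)-a_i\bigr)-\sum_i\beta_i\bigl(b_i-\mu_i(q)\bigr).
\]
For $q\in\cP_\lambda$ the last two sums are nonnegative, so the whole expression is at most $\langle(\alpha,\beta),\nabla\Psi(\alpha,\beta)\rangle$.

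To bound this inner product, compare with the shrunken point $(1-t)(\alpha,\beta)\in\mathcal D_R$ for $t\in[0,1]$. Near-optimality together with the curvature bound gives
\[
-\tau\le\Psi((1-t)(\alpha,\beta))-\Psi(\alpha,\beta)\le -t\langle(\alpha,\beta),\nabla\Psi\rangle+\tfrac{t^2R^2}{8}.
\]
Hence $\langle(\alpha,\beta),\nabla\Psi\rangle\le\tau/t+tR^2/8$. Choosing $t=\min\{1,\sqrt{8\tau}/R\}$ bounds this by $\max\{2\tau,\ \sqrt{\tau/2}\,R\}$. By the definition \eqref{eq:dual-required-accuracy} of $\tau$, namely $\tau\le\varepsilon/2$ and $\tau\le\varepsilon^2/(2R^2)$, both quantities are at most $\varepsilon$.

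\textbf{Step 3: computation.} $\Psi$ is convex on the polytope $\mathcal D_R\subset\R^{2d}$. Its values and gradients are obtained from $F$ and $\mu$, which the elementary-symmetric recurrences compute in $\poly(d,m)$ arithmetic operations. On $\mathcal D_R$, $\Psi$ is Lipschitz with constant $O(1)$ in $\ell_\infty$, and its range is $O(R)$. The standard ellipsoid guarantee therefore yields a $\tau$-optimal point after $\poly(d,\log R,\log(1/\tau))$ iterations.

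I expect the main obstacle to be Step 1. Its contradiction argument needs (a) enough slack inside $\mathcal D_R$, which is where the hypothesis $\KL(U\|p_{\widetilde\theta})\le T$ and the choice $R=2dT$ enter, and (b) a strict rather than weak inequality against $\tau=2h^2$. If (b) fails as stated, I would either sharpen the coordinatewise smoothness constant or replace $h$ by a slightly smaller margin in the argument.
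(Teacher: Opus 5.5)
Your Step 2 and the overall strategy are sound and differ from the paper's route. One justification in Step 1 is unsupported and should be replaced; the repair is easy and uses only your own coercivity bound.

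\emph{The Step 1 margin claim.} Your claim that near-optimal points lie strictly inside $\mathcal D_R$, with room for a $4h$ step, does not follow from your coercivity estimate. Carried out with all constants, that estimate gives $\|\alpha\|_1+\|\beta\|_1\le d(T+\tau)/((1-\lambda)s)$. When $s=1$ and $\lambda=1/2$, both of which the lemma allows, this bound is $2d(T+\tau)>R$. No range of $T$ helps, because both sides scale with $T$.

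\emph{The repair.} No margin is needed. Your estimate also shows that $\|\alpha\|_1+\|\beta\|_1>R$ forces $(1-\lambda)(r\|\alpha\|_1+(1-r)\|\beta\|_1)>\frac{s}{2d}R=sT\ge\KL(U\|p_{\widetilde\theta})$. Hence $\Psi(\alpha,\beta)>\Psi(0,0)$ there, and so $\min_{\mathcal D_R}\Psi=\inf_{\R_+^{2d}}\Psi$. The point $(\alpha+4he_i,\beta)$ lies in $\R_+^{2d}$. Coordinatewise $\tfrac14$-smoothness gives $\Psi(\alpha+4he_i,\beta)<\Psi(\alpha,\beta)-2h^2\le\min_{\mathcal D_R}\Psi+\tau-2h^2\le\inf_{\R_+^{2d}}\Psi$. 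This is a contradiction whether or not the new point is in $\mathcal D_R$, and the case of $\beta_i$ is symmetric.

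\emph{Your worry (b).} This is moot. The strict inequality $\partial\Psi/\partial\alpha_i<-h$ already makes the decrease strictly larger than $2h^2\ge\tau$.

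\emph{Comparison with the paper.} The paper works with the exact minimizer $(\alpha^\star,\beta^\star)$, writing $p^\star:=p_{\widetilde\theta+\alpha^\star-\beta^\star}$ and $\mu^\star:=\mu(p^\star)$. Strong duality places this minimizer in $\mathcal D_R$. Complementary slackness gives the exact identity $\Psi(\alpha,\beta)-\Psi(\alpha^\star,\beta^\star)=\KL(p^\star\|p^+)+\alpha^\top(\mu^\star-a)+\beta^\top(b-\mu^\star)$, in which all three terms are nonnegative. This single identity yields both conclusions:
\begin{itemize}
\item $\KL(p^\star\|p^+)\le\tau$, and Pinsker then gives $\|\mu(p^+)-\mu^\star\|_\infty\le\sqrt{\tau/2}\le h$.
\item The residual $\alpha^\top(\mu(p^+)-a)+\beta^\top(b-\mu(p^+))$ is at most $\tau+R\sqrt{\tau/2}$.
\end{itemize}
Your argument never uses $p^\star$, strong duality, or KKT. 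Step 1 needs only $\Sigma_{ii}\le\tfrac14$ plus the coercivity fact above. Step 2 needs only that $\mathcal D_R$ is star-shaped about the origin and that the curvature along the ray is at most $R^2/4$. This gives the residual bound $\max\{2\tau,R\sqrt{\tau/2}\}$, the same order as the paper's. The paper's route additionally shows that $p^+$ is $\tau$-close in KL to the exact projection. Yours is more elementary and self-contained.

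\emph{Step 3.} This is sketchier than the paper's treatment. The paper exhibits a Euclidean ball of radius $R/(8d)$ inside $\mathcal D_R$, a global Lipschitz bound, a weak-optimization guarantee, and a final projection back onto $\mathcal D_R$. At minimum you should record the inner ball, since it is what makes the iteration count polynomial in $\log R$ and $\log(1/\tau)$.
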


\begin{proof}
Write
\[
a_i:=\lambda r,
\qquad
b_i:=1-\lambda(1-r).
\]
Let $(\alpha^\star,\beta^\star)$ be an exact minimizer of
$\Psi_{\widetilde\theta}$, let
\[
\theta^\star
:=
\widetilde\theta+\alpha^\star-\beta^\star,
\qquad
p^\star:=p_{\theta^\star},
\]
and write $\mu^\star:=\mu(p^\star)$. We first show that an exact
minimizer belongs to $\mathcal D_R$. By strong duality, the value of
the dual function at $(\alpha^\star,\beta^\star)$ equals the value of
the KL projection problem and is therefore nonnegative. Evaluating the
Lagrangian from the proof of \Cref{lem:projection-dual} at the uniform
distribution gives
\[
\begin{aligned}
0
&\le
\KL(U\|p_{\widetilde\theta})
-(1-\lambda)r\|\alpha^\star\|_1
-(1-\lambda)(1-r)\|\beta^\star\|_1.
\end{aligned}
\]
Consequently,
\[
\|\alpha^\star\|_1+\|\beta^\star\|_1
\le
\frac{T}{(1-\lambda)\min\{r,1-r\}}
\le
\frac{2dT}{s}
\le R.
\]
Thus the minimum over $\mathcal D_R$ in
\eqref{eq:dual-objective-accuracy} is the global minimum.

Let $p:=p^+$, let $\mu:=\mu(p)$, and abbreviate
$\Psi:=\Psi_{\widetilde\theta}$. The KKT conditions for the exact
projection give
\[
a_i\le\mu_i^\star\le b_i,
\qquad
\alpha_i^\star(\mu_i^\star-a_i)=0,
\qquad
\beta_i^\star(b_i-\mu_i^\star)=0.
\]
Using
\[
\KL(p^\star\|p)
=
F(\theta^+)-F(\theta^\star)
-(\mu^\star)^\top(\theta^+-\theta^\star),
\]
and the complementary-slackness identities above, direct expansion
gives the exact identity
\begin{equation}
\label{eq:dual-gap-identity}
\begin{aligned}
\Psi(\alpha,\beta)-\Psi(\alpha^\star,\beta^\star)
&=
\KL(p^\star\|p)
+\alpha^\top(\mu^\star-a)
+\beta^\top(b-\mu^\star).
\end{aligned}
\end{equation}
All three terms on the right-hand side are nonnegative. Hence,
\[
\KL(p^\star\|p)\le\tau.
\]
Pinsker's inequality and the fact that every coordinate of an action
belongs to $[0,1]$ imply
\[
\|\mu-\mu^\star\|_\infty
\le
\operatorname{TV}(p,p^\star)
\le
\sqrt{\frac{\tau}{2}}
\le h.
\]
Every marginal of $p^\star$ belongs to
$[\lambda r,1-\lambda(1-r)]$. The distance from this interval to the
corresponding boundary of
$[\lambda r/2,1-\lambda(1-r)/2]$ is at least $h$. Therefore,
$p\in\cP_{\lambda/2}$.

Define the residual
\[
\kappa
:=
\alpha^\top(\mu-a)
+\beta^\top(b-\mu).
\]
By \eqref{eq:dual-gap-identity} and
$(\alpha,\beta)\in\mathcal D_R$,
\[
\begin{aligned}
\kappa
&=
\alpha^\top(\mu^\star-a)
+\beta^\top(b-\mu^\star)
+(\alpha-\beta)^\top(\mu-\mu^\star)
\\
&\le
\tau+R\sqrt{\frac{\tau}{2}}
\le
\frac{\varepsilon}{2}+\frac{\varepsilon}{2}
=
\varepsilon,
\end{aligned}
\]
where the last inequality follows from
\eqref{eq:dual-required-accuracy}.

For every $q\in\cP_\lambda$, the exponential-family representation
gives the three-point identity
\begin{align*}
&\KL(q\|p_{\widetilde\theta})
-\KL(q\|p)
-\KL(p\|p_{\widetilde\theta})
\\
&\qquad=
(\mu(q)-\mu)^\top(\alpha-\beta).
\end{align*}
Since $a\le\mu(q)\le b$ coordinatewise and
$\alpha,\beta\ge0$, the right-hand side is at least $-\kappa$.
The bound $\kappa\le\varepsilon$ proves
\eqref{eq:approximate-pythagorean}.

It remains to justify the computational claim. Put
\[
L:=\sqrt{2d},
\qquad
r_0:=\frac{R}{8d},
\qquad
w_0:=\frac{R}{4d}\mathbf 1_{2d}.
\]
For $v\in\R^{2d}$ and $r>0$, write
$B(v,r):=\{u\in\R^{2d}:\|u-v\|_2\le r\}$.
The set $\mathcal D_R$ has an explicit separation procedure, is
contained in the Euclidean ball of radius $R$, and contains
$B(w_0,r_0)$. Indeed, every point of this ball has nonnegative
coordinates and $\ell_1$-norm at most
$R/2+\sqrt{2d}\,r_0\le R$. By the derivative formulas in
\Cref{lem:projection-dual},
\[
\|\nabla\Psi_{\widetilde\theta}(w)\|_2
\le L
\qquad
\text{for every }w\in\R^{2d}.
\]
Thus $\Psi_{\widetilde\theta}$ is globally $L$-Lipschitz. Its
value and gradient can be evaluated using $\poly(d,m)$ operations by
\Cref{lem:esp-routines}.

Let $w^\star$ minimize $\Psi_{\widetilde\theta}$ over
$\mathcal D_R$. For $0<\zeta\le r_0$, define
\[
w_\zeta
:=
\left(1-\frac{\zeta}{r_0}\right)w^\star
+
\frac{\zeta}{r_0}w_0.
\]
Convexity of $\mathcal D_R$ and
$B(w_0,r_0)\subseteq\mathcal D_R$ imply
$B(w_\zeta,\zeta)\subseteq\mathcal D_R$. Moreover,
$\|w^\star-w_0\|_2\le2R$, and hence
\[
\Psi_{\widetilde\theta}(w_\zeta)
\le
\Psi_{\widetilde\theta}(w^\star)+16dL\zeta.
\]
Set
\[
\zeta
:=
\min\left\{
r_0,
\frac{\tau}{1+(16d+1)L}
\right\}.
\]
Weak constrained convex-function minimization returns a point
$\widehat w$ at Euclidean distance at most $\zeta$ from
$\mathcal D_R$, whose objective is at most $\zeta$ above the
objective of every point whose closed $\zeta$-ball is contained in
$\mathcal D_R$
\citep[Problem~(2.1.22) and Theorem~4.3.13]{groetschel1988geometric}.
In particular,
\[
\Psi_{\widetilde\theta}(\widehat w)
\le
\Psi_{\widetilde\theta}(w^\star)+(16dL+1)\zeta.
\]
Let $\overline w$ be the Euclidean projection of $\widehat w$
onto $\mathcal D_R$. Since $\mathcal D_R$ is the nonnegative
$\ell_1$-ball, its coordinates have the form
\[
\overline w_i=\max\{\widehat w_i-\chi,0\},
\]
where $\chi=0$ if the positive part of $\widehat w$ has
$\ell_1$-norm at most $R$, and otherwise $\chi$ is chosen so that
$\sum_i\max\{\widehat w_i-\chi,0\}=R$. Sorting the coordinates finds
$\chi$ in $O(d\log d)$ time.
Furthermore,
$\|\overline w-\widehat w\|_2\le\zeta$, so global Lipschitzness gives
\[
\begin{aligned}
\Psi_{\widetilde\theta}(\overline w)
&\le
\Psi_{\widetilde\theta}(w^\star)
+\bigl(1+(16d+1)L\bigr)\zeta
\\
&\le
\Psi_{\widetilde\theta}(w^\star)+\tau.
\end{aligned}
\]
Writing $\overline w=(\alpha,\beta)$ gives a pair in
$\mathcal D_R$ satisfying \eqref{eq:dual-objective-accuracy}. Since
$R/r_0=8d$ and
$\log(1/\zeta)=O(\log d+\log R+\log(1/\tau))$, the weak optimizer
and the final projection use
$\poly(d,m,\log R,\log(1/\tau))$ operations.
\end{proof}

\subsection{Moment and sampling routines}

We now show how to compute the moments of a weighted $m$-set distribution and
draw samples from it without enumerating $\mathcal X$. Given its parameters
$\theta$, we compute its marginals, its second-moment matrix, and an exact
sample using dynamic programs for elementary symmetric polynomials.

\begin{lemma}
\label{lem:esp-routines}
Given $\theta\in\R^d$, the log-partition function $F(\theta)$, all marginals $\mu_i(p_\theta)$, all pairwise marginals
\[
\pi_{ij}(p_\theta)
:=
\Pp_{X\sim p_\theta}(X_i=X_j=1),
\]
and a sample from $p_\theta$ can be drawn using a number of
arithmetic operations polynomial in $d$ and $m$, without enumerating
$\cX$.
\end{lemma}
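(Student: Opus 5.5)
The plan is to reduce everything to dynamic programs for elementary symmetric polynomials of the weight vector $w=(e^{\theta_1},\ldots,e^{\theta_d})$, following \citet[Section~2, Method~2 and Section~4, Procedure~3]{chen1997statistical}. First, I would compute the table $E_{k,j}:=e_k(w_1,\ldots,w_j)$ for $0\le k\le m$ and $0\le j\le d$. This uses the recurrence
\[
E_{k,j}=E_{k,j-1}+w_jE_{k-1,j-1},
\]
with $E_{0,j}=1$ and $E_{k,0}=0$ for $k\ge1$. The table costs $O(dm)$ operations and gives $F(\theta)=\log E_{m,d}$. To avoid overflow, I would first shift $\theta$ by $\max_i\theta_i$, which changes $F$ by $m\max_i\theta_i$ and leaves $p_\theta$ unchanged. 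Alternatively, the table can be kept in log-space; this only affects constants, not the count of arithmetic operations.

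For the marginals, I would use the identity $\mu_i(p_\theta)=w_i\,e_{m-1}(w_{-i})/e_m(w)$, where $w_{-i}$ deletes coordinate $i$. For the pairwise marginals with $i\neq j$, I would use $\pi_{ij}=w_iw_j\,e_{m-2}(w_{-\{i,j\}})/e_m(w)$, and set $\pi_{ii}=\mu_i$. The simplest way to get the leave-one-out and leave-two-out polynomials is to recompute the $O(dm)$ table on the reduced vector. This costs $O(d^2m)$ operations for all marginals and $O(d^3m)$ for all pairs, which is polynomial in $d$ and $m$.

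As a faster alternative, I could use prefix and suffix tables and convolve them. The same quantities can also be reached by the deflation recurrence $e_k(w_{-i})=e_k(w)-w_ie_{k-1}(w_{-i})$. I would not use deflation, because it is numerically unstable, and only arithmetic-operation counts are claimed here. The second-moment matrix $M(p_\theta)$ then has diagonal entries $\mu_i$ and off-diagonal entries $\pi_{ij}$, so all required moments are covered. The gradient of $F$ needed in \Cref{lem:approximate-projection} is the marginal vector.

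For exact sampling, I would use sequential conditioning over items $j=d,d-1,\ldots,1$ with a remaining budget $k$, starting from $k=m$. At step $j$, item $j$ is included with probability $w_jE_{k-1,j-1}/E_{k,j}$, and when it is included $k$ is decremented. By the recurrence, these probabilities lie in $[0,1]$ and sum correctly.

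Correctness follows by induction. The probability of producing a set $S$ with $|S|=m$ telescopes to $\prod_{i\in S}w_i/E_{m,d}=p_\theta(S)$. The process always ends with $k=0$, since $E_{k,j}=0$ whenever $k>j$, which forces inclusion exactly when needed. This is the backward-path construction, and it uses $O(d)$ operations plus the table plus $d$ uniform draws.

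The main obstacle is not the complexity count but making sure the statement is read in the arithmetic model. Exact real arithmetic on $e^{\theta_i}$ is idealized, so I would state the operation counts explicitly and remark that log-space evaluation yields the same operation counts. Finiteness and positivity are guaranteed because the weights are positive. I would also verify the boundary conventions $e_{-1}=0$ and $m=1$, where $\pi_{ij}=0$ for $i\neq j$ via $e_{-1}=0$.
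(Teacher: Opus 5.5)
Your proposal is correct and follows the paper's proof. Both use the same elementary-symmetric-polynomial table and recurrence, the same leave-one-out and leave-two-out ratio formulas for $\mu_i$ and $\pi_{ij}$, and the same backward sequential sampler with inclusion probability $w_jE_{k-1,j-1}/E_{k,j}$. The only difference is how the leave-out polynomials are computed: the paper uses the deflation recurrence $E^{(-i)}_k=E_k-w_iE^{(-i)}_{k-1}$, which gives $O(d^2m)$ for all pairs, whereas you recompute the table on each reduced vector, which costs an extra factor of $d$ but is still polynomial.
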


\begin{proof}
Let $w_i:=e^{\theta_i}$, and define
\[
E[i,k]
:=
e_k(w_1,\ldots,w_i).
\]
The recurrence
\begin{equation}
\label{eq:esp-recurrence}
E[i,k]
=
E[i-1,k]+w_iE[i-1,k-1]
\end{equation}
computes all entries with $0\leq i\leq d$ and $0\leq k\leq m$
in $O(dm)$ arithmetic operations. In particular,
$E[0,0]=1$ and $E[i,k]=0$ whenever $k<0$ or $k>i$. Moreover,
\[
F(\theta)=\log E[d,m].
\]
Write $E_k:=e_k(w)$. For each $i\in[d]$, define
\[
E^{(-i)}_0:=1,
\qquad
E^{(-i)}_k
:=
E_k-w_iE^{(-i)}_{k-1}.
\]
For $i\ne j$, define
\[
E^{(-i,-j)}_0:=1,
\qquad
E^{(-i,-j)}_k
:=
E^{(-i)}_k-w_jE^{(-i,-j)}_{k-1}.
\]
We use the value zero for both arrays at negative indices.
These recurrences give
\begin{equation}
\label{eq:esp-marginals}
\mu_i(p_\theta)
=
\frac{w_i E^{(-i)}_{m-1}}{E_m},
\qquad
\pi_{ii}(p_\theta)=\mu_i(p_\theta),
\qquad
\pi_{ij}(p_\theta)
=
\frac{w_iw_j E^{(-i,-j)}_{m-2}}{E_m}
\quad(i\neq j).
\end{equation}
All pairwise marginals can be obtained in $O(d^2m)$ arithmetic
operations. The recurrence is Method~2 of
\citet[Section~2]{chen1997statistical}.

For sampling, start from $(i,k)=(d,m)$. At state $(i,k)$, include item
$i$ with probability
\begin{equation}
\label{eq:esp-sampling-probability}
\frac{w_iE[i-1,k-1]}{E[i,k]}.
\end{equation}
If item $i$ is selected, continue from $(i-1,k-1)$; otherwise,
continue from $(i-1,k)$. The recurrence
\eqref{eq:esp-recurrence} shows that
\eqref{eq:esp-sampling-probability} is the correct conditional probability.
Thus, the procedure samples from $p_\theta$ using $O(dm)$
arithmetic operations after the table has been computed.
It is the reverse-indexed conditional-Bernoulli procedure of
\citet[Section~4, Procedure~3]{chen1997statistical}; it is also
equivalent to their direct backward-path construction (Procedure~5) and
to the diagonal-$L$ specialization of
\citet[Section~3.1 and Algorithm~2]{kulesza2011kdpps}.
\end{proof}

\subsection{Proof of Proposition \ref{prop:efficient-implementation}}
\label{app:proof-efficient-implementation}

\efficientimplementation*

\begin{proof}
Recall that
\[
\varepsilon_{\rm p}=\frac{\eta}{T}.
\]
We prove by induction that every $p_t$ is a weighted $m$-set
distribution in $\cP_{\lambda/2}$ and that
\begin{equation}
\label{eq:uniform-kl-induction}
\KL(U\|p_t)
\le
(t-1)\left(\frac12+\varepsilon_{\rm p}\right).
\end{equation}
This holds at $t=1$, because $\theta_1=0$ and
$p_1=p_{\theta_1}=U$.

Suppose that the claim holds at round $t$. Given $\theta_t$,
\Cref{lem:esp-routines} computes the marginals and the second-moment
matrix $M_t$ in $O(d^2m)$ operations. The same marginals determine
$\phi_t$ and $c_t$ in $O(d)$ operations. The sampling procedure in
\Cref{lem:esp-routines} draws $X_t$ from $p_t$ without enumerating
$\cX$: at each step, it draws one uniform random variable and compares
it with the conditional probability in
\eqref{eq:esp-sampling-probability}.

By \Cref{lem:inverse-covariance-identity}, $M_t$ is positive definite.
Therefore, $M_t^{-1}$ and $\widehat\ell_t$ can be computed in
$O(d^3)$ operations. The surrogate loss is represented by the
$d$-dimensional vector $z_t=\widehat\ell_t-\eta c_t$, and
\eqref{eq:unprojected-parameter-update} gives
$\widetilde p_{t+1}=p_{\widetilde\theta_{t+1}}$.

We next verify the hypothesis of \Cref{lem:approximate-projection}. By
the definition of the unprojected update,
\begin{align*}
\KL(U\|\widetilde p_{t+1})
&=
\KL(U\|p_t)
+\eta\E_{X\sim U}[Z_t(X)]
+\log\E_{X\sim p_t}[e^{-\eta Z_t(X)}].
\end{align*}
Under the induction hypothesis, \Cref{lem:bounded-increments} gives
$|\eta Z_t(x)|\le1/4$ for every $x\in\cX$. In particular,
\[
\eta\E_{X\sim U}[Z_t(X)]\le\frac14,
\qquad
\log\E_{X\sim p_t}[e^{-\eta Z_t(X)}]
\le
\log(e^{1/4})
=
\frac14.
\]
Hence,
\[
\KL(U\|\widetilde p_{t+1})
\le
\KL(U\|p_t)+\frac12
\le
\frac{t}{2}+(t-1)\varepsilon_{\rm p}
\le T,
\]
where the last inequality uses $t\le T$ and
$T\varepsilon_{\rm p}=\eta\le1/64$.

Apply \Cref{lem:approximate-projection} with
$\varepsilon=\varepsilon_{\rm p}$. It computes
$(\alpha_t,\beta_t)\in\mathcal D_{2dT}$ so that
\eqref{eq:dual-objective-accuracy} holds with the value $\tau$ in
\eqref{eq:dual-required-accuracy}. Set
\[
\theta_{t+1}
:=
\widetilde\theta_{t+1}+\alpha_t-\beta_t,
\qquad
p_{t+1}:=p_{\theta_{t+1}}.
\]
The lemma gives $p_{t+1}\in\cP_{\lambda/2}$ and the KL inequality
required in \Cref{line:alg-projection}. Since
$U\in\cP_\lambda$, that inequality also gives
\[
\KL(U\|p_{t+1})
\le
\KL(U\|\widetilde p_{t+1})+\varepsilon_{\rm p}
\le
t\left(\frac12+\varepsilon_{\rm p}\right).
\]
This closes the induction.

It remains to bound the number of operations uniformly over the rounds.
Since $K\ge2$ and $\delta<1$, the learning rate in
\Cref{line:alg-learning-rate} satisfies
\[
\eta
\ge
\frac{1}{256d\sqrt T}.
\]
Consequently,
\[
\varepsilon_{\rm p}
\ge
\frac{1}{256dT^{3/2}},
\qquad
\frac{\lambda s}{2d}
\ge
\frac{1}{4d\sqrt T},
\qquad
R=2dT.
\]
The value $\tau$ in \eqref{eq:dual-required-accuracy} therefore
satisfies
\[
\tau
\ge
\frac{1}{2^{19}d^4T^5}.
\]
Thus
$\log R+\log(1/\tau)=O(\log(dT))$, and
\Cref{lem:approximate-projection} runs in
$\poly(d,m,\log T)$ time per round. All other steps run in time
polynomial in $d$ and $m$. The algorithm therefore runs in
$T\cdot\poly(d,m,\log T)$ time, uses $\poly(d,m)$ space, and never
enumerates $\cX$.
\end{proof}

\section{Proof of the matching lower bound}
\label{app:lower-bound}

We prove the following fully explicit version of \Cref{thm:lower-bound}.

\begin{theorem}
\label{thm:lower-bound-explicit}
Let $d\ge2$, let $1\le m\le d-1$, and set
\[
    s:=\min\{m,d-m\}.
\]
Let $\delta\in(0,2^{-36}]$, and assume that
\begin{equation}
\label{eq:lower-explicit-threshold}
    T
    \ge
    \left\lceil
    d^2
    +64d s\log\frac{ed}{s}
    +4d s^2\log\frac1\delta
    \right\rceil.
\end{equation}
Then, for every randomized policy on $\cX_{d,m}$, there exists a
deterministic adaptive non-anticipating loss process whose action losses
belong to $[0,1]$ and such that
\[
    \Pp\left(
    R_T
    \ge
    2^{-39}
    \sqrt{dT\left(
    \log\binom dm+
    \log\frac1\delta
    \right)}
    \right)
    \ge
    \delta.
\]
\end{theorem}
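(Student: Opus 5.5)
Following the sketch after \Cref{thm:lower-bound}, I first reduce to $s=\min\{m,d-m\}\le d/2$ by complementation. If $m>d/2$, map each action $x$ to $\mathbf 1-x\in\cX_{d,d-m}$ and the loss $\ell$ to $-\ell+(c/m)\mathbf 1$ for a suitable constant $c$, chosen so that action losses stay in $[0,1]$. The feedback is then an affine bijection of the original feedback, and regret is preserved exactly, so a hard adaptive process for $s$-sets transfers to $m$-sets. Since $\log\binom dm\le s\log(ed/s)$ and $\log\binom dm\ge s\log(d/s)$, it suffices to prove two separate bounds, each with probability at least $\delta$: one of order $\sqrt{dT\,s\log(ed/s)}$ and one of order $\sqrt{dT\log(1/\delta)}$. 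I then use whichever is larger, losing a factor $\sqrt2$.

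For the \emph{action-count term}, I use a randomized hard instance and then extract a deterministic adaptive adversary. Partition the items into $s$ groups of size $g\approx d/s$ (or into pairs when $d/s$ is small). In each group, one hidden item has loss $\epsilon$ smaller, and the losses share a common random noise component $\xi_t$, scaled by $1/s$ so that action losses lie in $[0,1]$. The key design choice is that changing the favorable item within a group can be compensated by shifting the common noise, so that the feedback distributions differ only when the learner's selection in that group hits the favorable item. A per-group KL/chain-rule argument, in the style of Auer et al.\ but applied simultaneously across the $s$ groups, shows the following: unless $T\epsilon^2 s\gtrsim d\log(d/s)$, with constant probability at least a constant fraction of groups are mis-identified for a constant fraction of rounds. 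That yields regret of order $T\epsilon\cdot s/s$ in action units, and choosing $\epsilon\asymp\sqrt{d\log(ed/s)/(sT)}/\ldots$ gives $\Omega(\sqrt{dTs\log(ed/s)})$. This holds with constant probability, hence with probability at least $\delta$ for $\delta\le\delta_0$. A constant-probability statement for the random instance yields a fixed realization and a fixed favorable configuration. Because the policy is fixed, the adversary can choose them adaptively as deterministic functions of the history, which is where non-anticipation is used.

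For the \emph{confidence term}, I fix $s-1$ core items and let the remaining $d-s+1$ items be candidates. Any action containing two or more non-core items pays an extra constant $\epsilon$-scale loss, so a good learner essentially tests one candidate per round. The adversary is adaptive: it runs the baseline environment, where all candidates are equal, tracks the counts $N_j(t)$, and, by pigeonhole, designates as favorable a candidate $j^\star$ with $\E_0[N_{j^\star}(T)]\le T/(d-s)$. For the lower tail I use the Bretagnolle--Huber or change-of-measure inequality $\Pp_{j^\star}(E)\ge \Pp_0(E)\exp(-\KL)$, with $\KL\lesssim\epsilon^2 T/d$. Taking $\epsilon\asymp\sqrt{d\log(1/\delta)/T}$ makes the favorable environment indistinguishable with probability at least $\delta$, giving regret of order $\epsilon T$. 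The horizon threshold \eqref{eq:lower-explicit-threshold} ensures $\epsilon\le$ a small constant, so that losses remain in $[0,1]$ and the gaps are valid.

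The main obstacle is making the confidence argument work against a \emph{deterministic} adversary with the full $\log(1/\delta)$ rather than a $\log(1/\delta)$ averaged inside an expectation. The difficulty is that the favorable candidate must be chosen without knowing the learner's randomness, yet the tail event must have probability at least $\delta$. I would handle this by working on the event $E=\{N_{j^\star}(T)\le 2T/(d-s)\}$, which has baseline probability at least $1/2$ by Markov's inequality. I would then apply a pathwise likelihood-ratio lower bound, bounding the log-likelihood ratio on $E$ by $O(\epsilon^2 N_{j^\star})$ plus a martingale term controlled by a high-probability bound, which costs the large constant $2^{-39}$ and the restriction $\delta\le2^{-36}$.
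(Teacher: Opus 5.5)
Your complementation step and your plan for the confidence term broadly follow the paper: a core of $s-1$ items, candidates, a baseline law under which some candidate is rarely selected, and a change of measure with $\KL\lesssim\epsilon^2T/d$. There is, however, a genuine gap in the action-count term when $d/s$ is large, and that is exactly the part producing the new $\sqrt{\log(d/s)}$ factor (for $m=1$, the $\sqrt{\log K}$ factor).

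With one hidden favorable item per group, fixed for the whole horizon, the Auer-style chain rule controls only the baseline-law average $\frac1g\sum_i\KL(P_0\|P_i)\lesssim\epsilon^2T/g$. Here $g\approx d/s$ is the group size and $\epsilon$ is the signal relative to the feedback noise. Pinsker then hides the favorable item only when $\epsilon^2T\lesssim g$, which gives $\Omega(\sqrt{dsT})$ with no logarithm. Your threshold $T\epsilon^2 s\gtrsim d\log(d/s)$ would require Fano-type control of $\KL(P_i\|\cdot)$ under the favorable law. Under that law the learner may concentrate its pulls on the good item, and nothing in your instance prevents this.

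In fact, for $m=1$ your instance, as described, is an oblivious $K$-armed problem whose favorable arm is pathwise best. So $R_T\ge0$, and algorithms with $O(\sqrt{KT})$ expected regret against oblivious adversaries (e.g., INF-type algorithms), combined with Markov's inequality, make regret of order $\sqrt{KT\log K}$ an event of probability $O(1/\sqrt{\log K})$. That falls below any fixed $\delta_0$ for large $K$. Choosing the realization afterwards as a function of the history does not help, because adaptivity is never used to create hardness.

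The missing idea is an adaptive mechanism that caps the learner's information about every coordinate. The paper does the following:
\begin{itemize}
\item It splits the horizon into $B=\lfloor\log r\rfloor$ blocks and draws fresh hidden values $U_{b,i}$ for every signal coordinate in each block.
\item It switches off coordinate $i$'s signal (via $A_{t,i}$) after $c_0$ selections within the current block. This bounds pathwise the information about each sign ($\KL\le512\rho$), so the signal can be of order $k\sqrt{rB/T}$ while every sign stays hidden.
\item The fixed comparator in each group of $r$ coordinates then collects the maximum of $r$ sums of $B$ independent hidden values. This maximum is of order $B$ precisely because $B\lesssim\log r$, giving regret of order $\sqrt{dsT\log(d/s)}$.
\item This is made rigorous through the history-preserving maps $\Phi_j$, the bound $\frac1k\sum_g\chi^2(Q_g\|P)\lesssim(e^{10a^2B}-1)/r$ (after choosing a partition with few simultaneous selections), and \Cref{lem:changed-law-maximum}.
\end{itemize}

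Three smaller points:
\begin{itemize}
\item In the complementation, the shift must be $\mathbf 1^\top\ell_t/m$, which depends on the loss, not a fixed constant. Only then is the feedback identical, so the derived $s$-set policy is well defined.
\item Your inequality $\Pp_{j^\star}(E)\ge\Pp_0(E)e^{-\KL}$ is false in general. Data processing gives $\Pp_{j^\star}(E)\ge\exp\bigl(-(\KL+\log2)/\Pp_0(E)\bigr)$ instead. That already yields probability $\delta$ when $\Pp_0(E)$ is a constant and $\KL\le c\log(1/\delta)$ with $c$ small, so no pathwise likelihood-ratio argument is needed.
\item Your pigeonhole bound $\E_0N_{j^\star}\le T/(d-s)$ presumes one candidate per round. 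Handling learners that select several candidates needs the paper's capped penalty and its case split on whether $P_0(C_T\ge B)\ge\delta$.
\end{itemize}
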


Thus, \Cref{thm:lower-bound-explicit} implies \Cref{thm:lower-bound} by
taking
\[
    c=2^{-39},
    \qquad
    \delta_0=2^{-36},
    \qquad
    \mathfrak T(d,m,\delta)
    =
    \left\lceil
    d^2
    +64d s\log\frac{ed}{s}
    +4d s^2\log\frac1\delta
    \right\rceil.
\]
Here $s:=\min\{m,d-m\}$.  Since $s\le d$ and
$\log(ed/s)\le\log(ed)\le d$ for $d\ge2$, this threshold is at most
\[
    1+d^2+64d^3+4d^3\log\frac1\delta,
\]
and is therefore bounded by a polynomial in the required parameters.
The rest of this appendix proves \Cref{thm:lower-bound-explicit}.  

We first consider exact $k$-set actions in dimension $n$, where
$1\le k\le n/2$.  We write
\[
    \mathcal X_{n,k}
    :=
    \{x\in\{0,1\}^n:\|x\|_1=k\},
    \qquad
    L_\delta:=\log\frac1\delta.
\]
The action-count and confidence contributions are proved separately.  For
$t\in[T]$, let
\[
    \cH_t
    :=
    (X_1,Y_1,\ldots,X_t,Y_t),
    \qquad
    \cH_0:=\varnothing,
\]
denote the action--feedback history up to round $t$.  For any probability law
$P$, we write $P^{\cH_t}$ for the induced law of $\cH_t$.  The action-count
proof uses two constructions, according to whether
$\lfloor n/k\rfloor$ is at least
\begin{equation}
\label{eq:lower-density-threshold}
    r_\star:=2^{32}.
\end{equation}

For probability measures $Q$ and $P$, we use
\[
    \TV(Q,P)
    :=
    \sup_E|Q(E)-P(E)|.
\]
When $Q$ is absolutely continuous with respect to $P$, we also write
\[
    \KL(Q\|P)
    :=
    \int\log\left(\frac{dQ}{dP}\right)dQ,
    \qquad
    \chi^2(Q\|P)
    :=
    \int\left(\frac{dQ}{dP}-1\right)^2dP;
\]
otherwise these two quantities are $+\infty$.  The same notation for densities
means the corresponding probability laws.

Every hard instance below is first described using an auxiliary random table
sampled independently of the learner.
Once the table is fixed, the losses are deterministic functions of the past actions and
feedback, and hence form a deterministic non-anticipating adversary. The following elementary observation will be used without further comment.

\begin{lemma}
\label{lem:lower-fixing-table}
Let $\Xi$ be an auxiliary random table, let $R$ collect all the learner's
private randomization, and suppose that $\Xi$ and $R$ are independent.  Let
$E$ be a measurable event in the joint experiment.  If
$\Pp_{\Xi,R}(E)\ge p$, then there exists a deterministic table $\xi$ such that
\[
    \Pp_R\bigl(E\text{ when }\Xi=\xi\bigr)\ge p.
\]
\end{lemma}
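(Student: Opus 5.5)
This is an averaging argument. We write the joint probability as an expectation over the table of the conditional probability given the table, and then use the fact that a random variable cannot lie strictly below its mean with probability one.

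\textbf{Setting up the joint experiment.} Model the joint experiment on a product space $(\Omega_\Xi\times\Omega_R,\ \Pp_\Xi\otimes\Pp_R)$. Here $\Xi$ is a function of the first coordinate and $R$ a function of the second. This product structure is exactly the independence hypothesis. The whole interaction is a measurable function of $(\Xi,R)$, because the learner's actions depend on $R$ and on past feedback, and the losses depend on the table and on past actions and feedback. Hence $E$ is a measurable subset of the product space.

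\textbf{Tonelli and the section map.} Apply Tonelli's theorem to the nonnegative indicator $\mathbf 1_E$. This gives that the section map
\[
g(\xi):=\Pp_R\bigl(\{\omega_R:(\xi,\omega_R)\in E\}\bigr)
\]
is measurable. It also gives $\Pp_{\Xi,R}(E)=\E_\Xi[g(\Xi)]$.

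For each fixed $\xi$, the section $\{\omega_R:(\xi,\omega_R)\in E\}$ is precisely the event $E$ evaluated in the experiment where the table is frozen at $\xi$. In that experiment the losses are a deterministic non-anticipating function of the history. So $g(\xi)$ is the probability in the statement.

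\textbf{Averaging step.} Since $\E_\Xi[g(\Xi)]\ge p$, the set $\{\xi: g(\xi)\ge p\}$ must have positive $\Pp_\Xi$-measure. Otherwise $g<p$ almost surely, which would give $\E[g(\Xi)]<p$; the inequality is strict because $\E[p-g]>0$ whenever $p-g>0$ almost surely. In particular this set is nonempty. Any $\xi$ in it is the required deterministic table.

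\textbf{Main obstacle.} The only delicate point is measurability. Each section of $E$ must be an event of the frozen experiment, and $g$ must be measurable. Both follow from Tonelli once the experiment is realized on the product space. In the paper's applications the tables are finite or countable objects, and there the argument is simply a finite or countable weighted average.
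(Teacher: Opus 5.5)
Your proof is correct and follows the paper's argument: realize the experiment on the product space, use Tonelli to write $\Pp_{\Xi,R}(E)=\int g(\xi)\,\Pp_\Xi(d\xi)$ with $g$ the section probability, and conclude that some $\xi$ has $g(\xi)\ge p$; your strictness argument for this last step is fine. One correction to your closing remark: in the paper's sparse action-count and confidence constructions, the tables consist of variables uniform on $[-1,1]$, so they are continuous rather than finite or countable, and this is exactly why the section/Tonelli formulation is needed instead of a countable weighted average.
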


\begin{proof}
For each table $\xi$, let
\[
    q(\xi):=\Pp_R\bigl(E\text{ when }\Xi=\xi\bigr).
\]
By the product structure and Tonelli's theorem,
\[
    \Pp_{\Xi,R}(E)
    =
    \int q(\xi)\,\Pp_\Xi(d\xi).
\]
Since the integral is at least $p$, some $\xi$ satisfies $q(\xi)\ge p$.
This formulation uses sections of the event and remains valid when the law of
$\Xi$ is continuous.
\end{proof}

The following symmetry lets us work only with sets of size at most half the
dimension.

\begin{lemma}[Complement symmetry]
\label{lem:lower-complement}
Let $1\le s\le d/2$ and $m:=d-s$.  Given a randomized policy on
$\mathcal X_{d,m}$, couple it with a policy on $\mathcal X_{d,s}$, using the
same private randomization, by setting
\[
    y_t:=\mathbf1-x_t
\]
whenever the original policy selects $x_t$.  If $g_t$ is a deterministic
non-anticipating loss process for the coupled $s$-set policy, define
recursively
\begin{equation}
\label{eq:lower-complement-loss}
    \widetilde\ell_t
    :=
    -g_t+
    \frac{\mathbf1^\top g_t}{m}\mathbf1.
\end{equation}
Then the two policies receive the same scalar feedback at every round and,
pathwise,
\[
    R_T^{(m)}(\widetilde\ell_{1:T})
    =
    R_T^{(s)}(g_{1:T}).
\]
Moreover, $\widetilde\ell_t$ is deterministic and non-anticipating, and all
its $m$-set action losses belong to $[0,1]$ whenever all $s$-set action losses
of $g_t$ belong to $[0,1]$.
\end{lemma}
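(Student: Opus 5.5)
The proof is a direct verification, carried out pathwise for each fixed realization of the private randomization. The coupling $y_t=\mathbf1-x_t$ is a bijection between $\mathcal X_{d,m}$ and $\mathcal X_{d,s}$, because $\|\mathbf1-x\|_1=d-m=s$. So the coupled $s$-set policy is a legitimate randomized policy, provided its action at round $t$ depends only on its own past feedback. We therefore have to show three things: the feedback sequences coincide, the regrets coincide, and the loss range and measurability are preserved.

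\emph{Feedback.} For $x\in\mathcal X_{d,m}$ and $y=\mathbf1-x$, definition \eqref{eq:lower-complement-loss} gives
\[
\langle x,\widetilde\ell_t\rangle
=-\langle x,g_t\rangle+\frac{\mathbf1^\top g_t}{m}\,m
=\mathbf1^\top g_t-\langle x,g_t\rangle
=\langle \mathbf1-x,g_t\rangle=\langle y,g_t\rangle .
\]
Hence every $m$-set action has exactly the same loss as its complementary $s$-set action. In particular, the scalar feedbacks $\langle x_t,\widetilde\ell_t\rangle$ and $\langle y_t,g_t\rangle$ agree.

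\emph{Non-anticipation.} We argue by induction on $t$. Assume the two histories up to round $t-1$ determine each other through the bijection, which holds trivially at $t=0$. Since $g_t$ is a deterministic function of the $s$-set history up to $t-1$, the vector $\widetilde\ell_t$ is a deterministic function of the $m$-set history up to $t-1$. The identity above then keeps the histories in bijection at round $t$. In the same way, the $s$-set policy's choice $y_t=\mathbf1-x_t$ depends only on its own past actions and feedback (which determine the $m$-set history) and on the shared private randomization. This is the only point needing care, and it is where I expect the only real subtlety: we must make sure the coupled policy is well defined as a policy on $\mathcal X_{d,s}$, i.e. that it does not depend on information the $s$-set learner lacks.

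\emph{Regret and loss range.} Using the identity once more, the minimum over $x\in\mathcal X_{d,m}$ of $\sum_t\langle x,\widetilde\ell_t\rangle$ equals the minimum over $y\in\mathcal X_{d,s}$ of $\sum_t\langle y,g_t\rangle$. The realized cumulative losses also coincide, so $R^{(m)}_T=R^{(s)}_T$ pathwise. Finally, every $m$-set action loss equals some $s$-set action loss of $g_t$, so it lies in $[0,1]$ whenever all $s$-set losses of $g_t$ do.
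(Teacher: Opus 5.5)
Your proposal is correct and follows essentially the same route as the paper: the identity $\langle x,\widetilde\ell_t\rangle=\langle \mathbf1-x,g_t\rangle$, an induction on $t$ showing the two histories stay complementary (so $\widetilde\ell_t$ inherits determinism and non-anticipation from $g_t$), and the complement bijection on comparators, which gives both the pathwise regret equality and the $[0,1]$ range. You also make explicit a point the paper leaves implicit: the coupled $s$-set policy must be a function of its own history, obtained by complementing past actions and feeding them, with the shared feedback, into the $m$-set policy, before $g$ is chosen. You handle that point soundly.
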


\begin{proof}
For every complementary pair $x=\mathbf1-y$ with $\|x\|_1=m$ and
$\|y\|_1=s$, \eqref{eq:lower-complement-loss} gives
\[
    \langle x,\widetilde\ell_t\rangle
    =
    -\langle x,g_t\rangle+\mathbf1^\top g_t
    =
    \langle y,g_t\rangle.
\]
Starting from the empty history, induction on $t$ therefore shows that the
two policies receive the same feedback and continue to select complementary
actions.  The displayed identity also pairs every fixed comparator in
$\mathcal X_{d,m}$ with one in $\mathcal X_{d,s}$, so the two regrets are
equal pathwise.  It also proves the assertion about the interval $[0,1]$.
Finally, the history through round $t-1$ is the same in the two experiments;
hence determinism and non-anticipation of $g_t$ imply the corresponding
properties of $\widetilde\ell_t$.
\end{proof}

\subsection{Preliminary lemmas}
\label{app:lower-feedback-density}

Let
\[
    g(z):=1-z^2,
    \qquad
    F_u(z):=\frac z2+u g(z),
    \qquad
    z\in[-1,1].
\]
We now establish the density estimates used throughout the proof with
numerical constants.

\begin{lemma}
\label{lem:lower-feedback-density}
Set
\[
    u_0:=\frac1{16}.
\]
For every $|u|\le u_0$, the map $F_u$ is an increasing diffeomorphism from
$[-1,1]$ onto $[-1/2,1/2]$.  Let $p_u$ be the density of $F_u(Z)$ when
$Z\sim\operatorname{Unif}[-1,1]$.  For every $|u|,|v|\le u_0$,
\begin{align}
\label{eq:lower-feedback-kl}
    \KL(p_u\|p_v)
    &\le 128(u-v)^2,\\
\label{eq:lower-feedback-second-moment}
    \int_{-1/2}^{1/2}\frac{p_u(y)^2}{p_v(y)}\,dy
    &\le \exp\bigl(128(u-v)^2\bigr).
\end{align}
Moreover, writing
\[
    H_y(u)
    :=
    \log F_u'\bigl(F_u^{-1}(y)\bigr),
    \qquad
    G_y(u)
    :=
    g\bigl(F_u^{-1}(y)\bigr),
\]
we have
\begin{align}
\label{eq:lower-density-four-term}
    |H_y(u+s)+H_y(u+t)-H_y(u)-H_y(u+s+t)|
    &\le 128|st|,\\
\label{eq:lower-feedback-score-derivative}
    |G_y'(u)|
    &\le 6G_y(u),
\end{align}
whenever all four parameters appearing in
\eqref{eq:lower-density-four-term}, or the parameter in
\eqref{eq:lower-feedback-score-derivative}, belong to $[-u_0,u_0]$.
\end{lemma}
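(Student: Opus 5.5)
We will prove the lemma by direct calculus on the explicit family $F_u(z)=z/2+u(1-z^2)$, using only that $|u|\le 1/16$ keeps all derivatives of $F_u$ uniformly controlled.

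\emph{Monotonicity and range.} First, $F_u'(z)=1/2-2uz\ge 1/2-2u_0=3/8>0$ on $[-1,1]$, and $F_u'\le 5/8$. Since $F_u(\pm1)=\pm1/2$, the map $F_u$ is an increasing diffeomorphism onto $[-1/2,1/2]$. By change of variables, $p_u(y)=\tfrac12/F_u'(F_u^{-1}(y))$, so that $\log p_u(y)=-\log 2-H_y(u)$. All subsequent bounds reduce to derivative estimates for $u\mapsto H_y(u)$ at fixed $y$.

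\emph{Derivatives in $u$.} Let $z=z_y(u):=F_u^{-1}(y)$. Implicit differentiation of $F_u(z)=y$ gives $z'=-g(z)/F_u'(z)$, so $|z'|\le (8/3)g(z)\le 8/3$. Then $H_y(u)=\log(1/2-2uz)$, and
\[
H_y'(u)=\frac{-2z-2uz'}{F_u'(z)}.
\]
This is bounded by a constant, roughly $(2+4u_0)\cdot 8/3$. Differentiating once more, and using $F_u'\ge 3/8$ together with the bound on $z''$ (obtained by differentiating the formula for $z'$), yields $|H_y''|\le C$ with $C\le 128$. The four-term inequality \eqref{eq:lower-density-four-term} then follows from
\[
H(u+s)+H(u+t)-H(u)-H(u+s+t)=-\int_0^s\!\int_0^t H''(u+a+b)\,db\,da,
\]
whose absolute value is at most $C|st|$. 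For \eqref{eq:lower-feedback-score-derivative}, we compute
\[
G_y'(u)=-2z z'=\frac{2z\,g(z)}{F_u'(z)},
\]
so $|G_y'|\le (16/3)G_y\le 6G_y$.

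\emph{Divergence bounds.} Write $\psi_y(u):=-H_y(u)$, so that $\log p_u(y)=\psi_y(u)-\log 2$. Then
\[
\int \frac{p_u^2}{p_v}=\int p_v\exp\bigl(2(\psi(u)-\psi(v))\bigr).
\]
Taylor-expand $\psi(u)-\psi(v)=(u-v)\psi'(v)+R$ with $|R|\le \tfrac12 C(u-v)^2$. The key structural fact is that the score has mean zero, $\int p_v\,\psi'(v)\,dy=0$, because $\int p_u=1$ for all $u$. Since $|2(u-v)\psi'(v)|$ is small (using $|u-v|\le 1/8$ and the bounded score), we apply $e^x\le 1+x+x^2$ to this term. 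Together with the remainder, this gives $\int p_u^2/p_v\le 1+c(u-v)^2\le \exp(c(u-v)^2)$. The KL bound follows since $\KL\le \log\int p_u^2/p_v$ by Jensen's inequality, or equivalently $\KL\le\chi^2$.

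\emph{Main obstacle.} The delicate part is bookkeeping the numerical constants so that the final constant is $128$. In particular, we need explicit bounds on $z''$ and $H''$, and the $x^2$ term in the exponential expansion must combine with the Taylor remainder without exceeding this constant. I would bound crudely using $F_u'\in[3/8,5/8]$, $|z|\le1$ and $g\le1$, and check that the resulting constants are well below $128$.
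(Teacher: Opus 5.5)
Your treatment of the diffeomorphism claim, of \eqref{eq:lower-density-four-term}, and of $|G_y'|\le\frac{16}{3}G_y$ follows the paper's argument. For \eqref{eq:lower-density-four-term} this means a uniform bound on $H_y''$ plus the double-integral identity, once you note that the arguments $u+a+b$ stay in the convex hull of the four parameters.

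The gap is in the divergence bounds. The lemma asserts the explicit constant $128$, which is used numerically later, and your route does not produce it with the bookkeeping you propose. Write $\Delta=u-v$ and $A(u):=F_u'(z_y(u))$. Your splitting is $2(\psi(u)-\psi(v))=2\Delta\psi'(v)+2R$, with $e^x\le1+x+x^2$ on the linear part and $e^{2R}\le e^{C\Delta^2}$. This yields $\int p_u^2/p_v\le\exp\bigl((4\sup|\psi'|^2+\sup|\psi''|)\Delta^2\bigr)$. With the crude estimates you list ($F_u'\in[3/8,5/8]$, $|z|\le1$, $g\le1$), you get $|\psi'|=|A'|/A\le\frac73\cdot\frac83=\frac{56}{9}$ and $|\psi''|\le|A''|/A+(A'/A)^2\approx77$. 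The constant is therefore about $4(56/9)^2+77\approx232$. Your closing claim that the constants come out ``well below $128$'' fails precisely at this step; the Taylor remainder alone already costs about $77$.

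The paper avoids second derivatives here. Since $p_u(y)=1/(2A(u))$, the mean value theorem gives $|p_u(y)-p_v(y)|\le\sup\frac{|A'|}{2A^2}\,|\Delta|\le\frac{224}{27}|\Delta|<9|\Delta|$. Together with $p_v\ge\frac45$, this gives $\chi^2(p_u\|p_v)\le\frac54\cdot81\,\Delta^2<128\Delta^2$. Then $\int p_u^2/p_v=1+\chi^2\le e^{128\Delta^2}$ and $\KL\le\chi^2$. The normalization $\int p_u=\int p_v=1$ plays the role of your mean-zero score, and only first derivatives of the density are needed.

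If you prefer to keep the log-likelihood route, you can repair it in three moves. First, apply $e^x\le1+x+x^2$ to the whole exponent $x=2\log(p_u/p_v)$, so no Taylor remainder appears. Second, use $\int p_v\,x=-2\KL(p_v\|p_u)\le0$ to discard the linear term. Third, sharpen $|A'|\le2|z|+\frac13(1-z^2)\le2$ by keeping $g(z)=1-z^2$ rather than $g\le1$. This gives $|x|\le\frac43$, where the inequality $e^x\le1+x+x^2$ is valid, and the constant $4(16/3)^2=\frac{1024}{9}<128$.
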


\begin{proof}
We have
\[
    F_u'(z)=\frac12-2uz.
\]
For $|u|\le1/16$ and $z\in[-1,1]$,
\begin{equation}
\label{eq:lower-feedback-derivative-range}
    \frac38
    \le
    F_u'(z)
    \le
    \frac58.
\end{equation}
Since $F_u(-1)=-1/2$ and $F_u(1)=1/2$, this proves the diffeomorphism
claim.

Fix $y\in[-1/2,1/2]$ and write $z=z(u,y):=F_u^{-1}(y)$.  Implicit
differentiation gives
\[
    \partial_u z
    =
    -\frac{g(z)}{F_u'(z)},
    \qquad
    |\partial_u z|
    \le
    \frac83.
\]
The density is
\[
    p_u(y)
    =
    \frac{1}{2F_u'(z(u,y))}.
\]
If $A(u):=F_u'(z(u,y))$, then
\[
    |A'(u)|
    =
    |-2z-2u\partial_u z|
    \le
    2+\frac13
    =
    \frac73.
\]
Using \eqref{eq:lower-feedback-derivative-range},
\[
    |\partial_u p_u(y)|
    =
    \frac{|A'(u)|}{2A(u)^2}
    \le
    \frac{224}{27}
    <9.
\]
Also $p_v(y)\ge4/5$.  Therefore, by the mean-value theorem,
\[
    \int_{-1/2}^{1/2}
    \frac{(p_u(y)-p_v(y))^2}{p_v(y)}\,dy
    \le
    \frac54\,9^2(u-v)^2
    <128(u-v)^2.
\]
The inequality $\KL\le\chi^2$ proves
\eqref{eq:lower-feedback-kl}.  Furthermore,
\[
    \int\frac{p_u^2}{p_v}
    =
    1+
    \int\frac{(p_u-p_v)^2}{p_v}
    \le
    1+128(u-v)^2
    \le
    e^{128(u-v)^2},
\]
which is \eqref{eq:lower-feedback-second-moment}.

For the remaining estimates, differentiating the inverse relation twice gives
\[
    \partial_{uu}z
    =
    \frac{4z\partial_u z+2u(\partial_u z)^2}{F_u'(z)},
    \qquad
    |\partial_{uu}z|
    \le
    \frac{832}{27}
    <31.
\]
Consequently,
\[
    |A''(u)|
    =
    |-4\partial_u z-2u\partial_{uu}z|
    <15.
\]
Since $H_y(u)=\log A(u)$, we obtain
\[
    |H_y''(u)|
    \le
    \frac{15}{3/8}
    +
    \left(\frac{7/3}{3/8}\right)^2
    <79
    <128.
\]
More explicitly, the two-dimensional fundamental theorem of calculus gives
\[
    H_y(u+s)+H_y(u+t)-H_y(u)-H_y(u+s+t)
    =
    -st\int_0^1\!\int_0^1
    H_y''(u+\alpha s+\beta t)\,d\alpha\,d\beta.
\]
Every argument in this integral is a convex combination of the four parameters
in \eqref{eq:lower-density-four-term}; hence it belongs to $[-u_0,u_0]$.
The preceding bound proves \eqref{eq:lower-density-four-term}.  Finally,
\[
    G_y'(u)
    =
    -2z\partial_u z
    =
    \frac{2z\,g(z)}{F_u'(z)},
\]
and hence, again by \eqref{eq:lower-feedback-derivative-range},
\[
    |G_y'(u)|
    \le
    \frac{16}{3}G_y(u)
    \le 6G_y(u).
\]
\end{proof}

We also need the following elementary probability facts.

\begin{lemma}
\label{lem:select-good-index}
Let $E_1,\ldots,E_N$ be events and $W_1,\ldots,W_N$ be nonnegative random
variables under a probability measure $P$.  If, for some $p\in(0,1]$ and
$w\ge0$,
\[
    \frac1N\sum_{i=1}^NP(E_i)\ge p,
    \qquad
    \frac1N\sum_{i=1}^N\E_P [W_i]\le w,
\]
then some $i\in[N]$ satisfies
\[
    P(E_i)\ge\frac p2,
    \qquad
    \E_P[W_i]\le\frac{2w}{p}.
\]
\end{lemma}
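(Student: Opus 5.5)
Both conclusions will come from one averaging argument. We relate the averages over $i$ of $P(E_i)$ and of $\E_P[W_i]$ to the uniform distribution on indices, and then apply Markov's inequality to the index distribution. Let $I$ be uniform on $[N]$ and set $a_i:=P(E_i)\in[0,1]$ and $b_i:=\E_P[W_i]\ge0$. The hypotheses read $\E[a_I]\ge p$ and $\E[b_I]\le w$.

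The plan is to define the index sets
\[
    A:=\{i\in[N]: a_i\ge p/2\},
    \qquad
    B:=\{i\in[N]: b_i\le 2w/p\},
\]
and show that $A\cap B\neq\varnothing$ by bounding the uniform measure of each set. For $A$, we use $a_i\le1$. Splitting the average over $A$ and its complement gives
\[
    p\le\E[a_I]\le \Pp(I\in A)\cdot1+\bigl(1-\Pp(I\in A)\bigr)\frac p2,
\]
so $\Pp(I\in A)\ge \frac{p/2}{1-p/2}\ge p/2$, and the bound is strict whenever $p<1$. For $B$, Markov's inequality for the nonnegative variable $b_I$ yields $\Pp(I\notin B)=\Pp(b_I>2w/p)\le \frac{p}{2}$ when $w>0$.

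The only delicate point is the strictness needed to force an intersection, because the measures $p/2$ and $1-p/2$ only just sum to $1$. We handle it with a sharper Markov bound: $\Pp(b_I>2w/p)<p/2$ strictly when $w>0$. Indeed, if the strict inequality failed, then $\E[b_I]\ge \E[b_I\mathbf 1\{b_I>2w/p\}]>(2w/p)(p/2)=w$, contradicting $\E[b_I]\le w$. The strict inequality $\E[b_I\mathbf 1\{b_I>2w/p\}]>(2w/p)\Pp(b_I>2w/p)$ holds whenever that probability is positive, and if it is zero the bound is trivial. This gives $\Pp(I\in A)+\Pp(I\in B)>1$, hence $A\cap B\ne\varnothing$.

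The degenerate case $w=0$ is handled separately: then every $b_i=0$, so $B=[N]$, and $A$ is nonempty because its measure is at least $p/2>0$. In either case we pick any $i\in A\cap B$, and it satisfies both conclusions.
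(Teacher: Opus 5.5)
Your proof is correct. Its first half matches the paper's: the paper also takes $G:=\{i:P(E_i)\ge p/2\}$ and uses $P(E_i)\le1$ to get $|G|\ge pN/2$.

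The two routes differ in how the $W$-condition is secured. The paper never introduces your set $B$. It simply averages over $G$: since $\sum_{i\in G}\E_P[W_i]\le\sum_{i=1}^N\E_P[W_i]\le Nw$, some $i\in G$ has $\E_P[W_i]\le Nw/|G|\le 2w/p$. This needs no strict inequality and no case split on $w$.

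You instead bound the two good sets separately under the uniform index law and then intersect them. Because $p/2$ and $1-p/2$ sum to exactly one, you are forced into the strict Markov refinement. You justify it correctly, since every index in $\{b_I>2w/p\}$ carries mass $1/N>0$.

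That strict argument never divides by $w$, so it already covers $w=0$. Your separate degenerate case is therefore redundant, though harmless.

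Your approach treats the two requirements symmetrically. The paper's is shorter, because conditioning on $G$ absorbs the boundary case automatically.
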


\begin{proof}
Let $G:=\{i:P(E_i)\ge p/2\}$.  Since probabilities are at most one,
\[
    p
    \le
    \frac{|G|}{N}
    +\left(1-\frac{|G|}{N}\right)\frac p2,
\]
so $|G|\ge pN/2$.  Hence some $i\in G$ satisfies
\[
    \E_P[W_i]
    \le
    \frac{Nw}{|G|}
    \le
    \frac{2w}{p}.\qedhere
\]
\end{proof}

\begin{lemma}
\label{lem:probability-comparison}
Let $E$ be an event under probability measures $P,Q$, and let
$p\in(0,1]$.  If
\begin{equation}
\label{eq:lower-binary-transfer-assumptions}
    P(E)\ge p,
    \qquad
    \KL(P\|Q)
    \le
    \frac p4\log\frac1\delta,
    \qquad
    0<\delta\le 2^{-2/p},
\end{equation}
then $Q(E)\ge\delta$.
\end{lemma}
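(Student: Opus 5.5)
The plan is to argue by contradiction via the data-processing inequality for KL divergence, reduced to the binary event $E$. Write $\kl(a,b):=a\log\frac ab+(1-a)\log\frac{1-a}{1-b}$ for the Bernoulli KL divergence. Pushing $P$ and $Q$ forward through the indicator $\mathbf 1_E$ gives
\[
    \kl\bigl(P(E),Q(E)\bigr)\le\KL(P\|Q)\le\frac p4\log\frac1\delta .
\]
Suppose, for contradiction, that $q:=Q(E)<\delta$. The goal is to lower bound the left-hand side by something strictly larger than $\frac p4\log\frac1\delta$.

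First, I will check that $q<p\le P(E)$, so that monotonicity of $\kl(\cdot,q)$ on $[q,1]$ gives $\kl(P(E),q)\ge\kl(p,q)$. The hypothesis $\delta\le 2^{-2/p}$ yields this. Setting $x:=1/p\ge1$, we have $2^{-2x}=4^{-x}\le 1/x=p$ because $4^x\ge x$. Hence $q<\delta\le p$.

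Second, I will use the elementary bound
\[
    \kl(a,q)\ge a\log\frac1q-\log2 .
\]
To obtain it, expand $\kl(a,q)=a\log\frac1q+(1-a)\log\frac1{1-q}-h(a)$, where $h$ is the binary entropy. Then drop the nonnegative middle term and use $h(a)\le\log 2$. With $a=p$ and $q<\delta$, this gives
\[
    \kl(p,q)>p\log\frac1\delta-\log2 .
\]

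Finally, the assumption $\delta\le2^{-2/p}$ means $\log\frac1\delta\ge\frac{2\log2}{p}$, that is, $\log 2\le\frac p2\log\frac1\delta$. Combining the steps,
\[
    \KL(P\|Q)\ge\kl\bigl(P(E),q\bigr)\ge\kl(p,q)>p\log\frac1\delta-\frac p2\log\frac1\delta=\frac p2\log\frac1\delta\ge\frac p4\log\frac1\delta ,
\]
which contradicts the KL hypothesis. Therefore $Q(E)\ge\delta$.

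The only mildly delicate points are verifying $q<p$, which is needed for the monotonicity step, and handling the edge case $\KL(P\|Q)=+\infty$. That edge case is excluded by the hypothesis, and data processing remains valid without absolute-continuity subtleties because the KL bound is finite. I expect no real obstacle beyond this bookkeeping.
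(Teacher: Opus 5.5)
Your proof is correct and takes essentially the paper's route: data processing down to $\kl(P(E),Q(E))$, then $\kl(a,b)\ge a\log\frac1b-\log2$, then $\log2\le\frac p2\log\frac1\delta$. Two small differences: your monotonicity step through $\kl(p,q)$ is valid but unnecessary, since applying the entropy bound directly with $a=P(E)\ge p$ already gives $P(E)\log\frac1q-\log2>p\log\frac1\delta-\log2$; and the paper rules out $Q(E)=0$ explicitly via $\KL(P\|Q)=+\infty$, which you cover only implicitly.
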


\begin{proof}
Suppose for contradiction that $Q(E)<\delta$.
If $Q(E)=0$, then $P(E)\ge p>0$ implies
$\KL(P\|Q)=+\infty$, contradicting
\eqref{eq:lower-binary-transfer-assumptions}.  Thus $Q(E)>0$.
By the data processing inequality,
\[
    \KL(P\|Q)
    \ge
    \kl(P(E),Q(E)).
\]
For $a\in[0,1]$ and $b\in(0,1)$, with the usual endpoint conventions,
\[
    \kl(a,b)
    =
    -h(a)-a\log b-(1-a)\log(1-b)
    \ge
    a\log\frac1b-\log2,
\]
where $h(a)\le\log2$ is the binary entropy.  Therefore,
\[
    \KL(P\|Q)
    >
    p\log\frac1\delta-\log2.
\]
The restriction $\delta\le2^{-2/p}$ implies
$\log2\le (p/2)\log(1/\delta)$, and hence
\[
    \KL(P\|Q)
    >
    \frac p2\log\frac1\delta,
\]
contradicting \eqref{eq:lower-binary-transfer-assumptions}.
\end{proof}

\begin{lemma}
\label{lem:lower-mean-to-probability}
If $X\le M$ almost surely and $\E X\ge\mu>0$, then
\[
    \Pp\left(X\ge\frac\mu2\right)
    \ge
    \frac{\mu}{2M}.
\]
If $X$ is square integrable and $\E X>0$, then
\[
    \Pp\left(X\ge\frac12\E X\right)
    \ge
    \frac{(\E X)^2}{4\E[X^2]}.
\]
\end{lemma}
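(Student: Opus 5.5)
The first inequality is a reverse-Markov argument and the second is Paley--Zygmund. In both I will decompose $\E X$ over the event $E:=\{X\ge \mu/2\}$ (resp. $E:=\{X\ge \tfrac12\E X\}$) and its complement, bounding the contribution of the complement trivially.

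For the first claim, I write
\[
\mu\le \E X=\E[X\mathbf 1_E]+\E[X\mathbf 1_{E^c}].
\]
On $E^c$ we have $X<\mu/2$, so $\E[X\mathbf 1_{E^c}]\le \tfrac{\mu}{2}\Pp(E^c)\le \tfrac{\mu}{2}$. On $E$ we have $X\le M$, so $\E[X\mathbf 1_E]\le M\,\Pp(E)$. Combining these gives $\mu\le M\Pp(E)+\mu/2$, hence $\Pp(E)\ge \mu/(2M)$. Note that $M>0$ automatically, since $M\ge X$ almost surely and $\E X>0$. One small care point: the bound on $E^c$ uses only $X<\mu/2$, and negative values of $X$ only decrease this contribution, so no lower bound on $X$ is needed.

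For the second claim, I set $\mu:=\E X>0$ and split in the same way:
\[
\mu=\E[X\mathbf 1_{E^c}]+\E[X\mathbf 1_E]\le \frac{\mu}{2}+\E[X\mathbf 1_E].
\]
This gives $\E[X\mathbf 1_E]\ge \mu/2$. Cauchy--Schwarz then yields $\E[X\mathbf 1_E]\le \sqrt{\E[X^2]\,\Pp(E)}$. Squaring, $\mu^2/4\le \E[X^2]\Pp(E)$, which is the claim; here $\E[X^2]>0$ since $\E X>0$. Again, negative parts of $X$ on $E^c$ only help the first bound.

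I do not expect any real obstacle. The only step needing care is ensuring that the estimate on the complement uses an upper bound on $X$ valid there, and that the estimate does not implicitly require $X\ge0$; both are handled by the observations above.
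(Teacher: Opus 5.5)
Your proposal is correct and follows essentially the same route as the paper. Both arguments split $\E X$ over the event and its complement, bound the complement's contribution by half the mean, and then finish with the bound $X\le M$ in the first claim and with Cauchy--Schwarz in the second.
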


\begin{proof}
For the first claim,
\[
    \E X
    \le
    \frac\mu2
    +M\Pp(X\ge\mu/2).
\]
For the second claim, putting $a:=\E X/2$ and using Cauchy--Schwarz gives
\[
    \E X
    \le
    a+
    \E[X\mathbf1\{X\ge a\}]
    \le
    a+
    \sqrt{\E[X^2]\Pp(X\ge a)}.
\]
Rearranging proves both statements.
\end{proof}

\begin{lemma}
\label{lem:changed-law-maximum}
Let $S_1,\ldots,S_r$ be measurable functions with values in $[-M,M]$ on a
common measurable space, and let $P,Q_1,\ldots,Q_r$ be probability measures
on that space.  Put
\[
    \overline Q:=\frac1r\sum_{j=1}^rQ_j.
\]
Then
\[
    \E_P\max_{j\in[r]}S_j
    \ge
    \frac1r\sum_{j=1}^r\E_{Q_j}S_j
    -2M\cdot \TV(\overline Q,P).
\]
\end{lemma}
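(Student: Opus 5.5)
The plan is to pass from $P$ to the mixture $\overline Q$ by a change of measure, and then average over $j$. The change-of-measure step rests on the standard coupling bound $|\E_P f-\E_{\overline Q}f|\le 2M\,\TV(\overline Q,P)$, valid for any measurable $f$ with values in $[-M,M]$. It follows from the variational characterization of total variation: for $f$ bounded in absolute value by $M$, we have $\int f\,d(P-\overline Q)\le 2M\sup_E|P(E)-\overline Q(E)|$. To see this, write $f=-M+(f+M)$ with $f+M\in[0,2M]$, so that the constant $-M$ contributes $0$. Then apply the layer-cake representation $f+M=\int_0^{2M}\mathbf 1\{f+M>u\}\,du$, which bounds each level set by $\TV$. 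This gives
\[
  \int f\,d(P-\overline Q)\le 2M\,\TV(\overline Q,P).
\]

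We apply this to $f:=\max_{j\in[r]}S_j$, which is measurable and takes values in $[-M,M]$ because every $S_j$ does. This gives
\[
  \E_P\max_{j}S_j\ge \E_{\overline Q}\max_j S_j-2M\,\TV(\overline Q,P).
\]
Next we expand the mixture by linearity:
\[
  \E_{\overline Q}\max_j S_j=\frac1r\sum_{j=1}^r\E_{Q_j}\max_{i}S_i.
\]
Since $\max_i S_i\ge S_j$ pointwise, each summand satisfies $\E_{Q_j}\max_i S_i\ge \E_{Q_j}S_j$, which yields the claimed inequality.

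No step here is a real obstacle. The only point needing care is the constant in the total-variation bound. Under the paper's definition $\TV=\sup_E|Q(E)-P(E)|$, the oscillation of $f$ is $2M$, so the factor $2M$ is exactly what the layer-cake argument produces and no sharper centering is required. Integrability is automatic because all the functions are bounded.
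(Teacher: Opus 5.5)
Your proof is correct and follows essentially the same route as the paper. Both arguments bound $\E_{\overline Q}\max_j S_j$ below by $\frac1r\sum_j\E_{Q_j}S_j$ using $\max_i S_i\ge S_j$, then transfer from $\overline Q$ to $P$ at cost $2M\cdot\TV(\overline Q,P)$ because the maximum takes values in $[-M,M]$; your layer-cake argument just spells out this transfer bound, which the paper states without proof.
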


\begin{proof}
Since $\max_jS_j\ge S_i$ for every $i$,
\[
    \E_{\overline Q}\max_jS_j
    =
    \frac1r\sum_{i=1}^r\E_{Q_i}\max_jS_j
    \ge
    \frac1r\sum_{i=1}^r\E_{Q_i}S_i.
\]
The function $\max_jS_j$ takes values in $[-M,M]$, so its expectations under
$P$ and $\overline Q$ differ by at most
$2M \cdot \TV(\overline Q,P)$.
\end{proof}

\subsection{The action-count term when \texorpdfstring{$n/k$}{n/k} is large}
\label{app:lower-combinatorial}

\begin{proposition}
\label{prop:lower-sparse-action}
Let $1\le k\le n/2$, let
$r:=\lfloor n/k\rfloor\ge r_\star=2^{32}$, and suppose that
\begin{equation}
\label{eq:lower-sparse-horizon}
    T
    \ge
    64nk\log\frac{en}{k}.
\end{equation}
For every randomized policy on $\mathcal X_{n,k}$, there exists a
deterministic adaptive non-anticipating loss process with every action loss in
$[0,1]$ such that
\begin{equation}
\label{eq:lower-sparse-conclusion}
    \Pp\left(
    R_T
    \ge
    2^{-31}\sqrt{nT\log\binom nk}
    \right)
    \ge
    2^{-10}.
\end{equation}
\end{proposition}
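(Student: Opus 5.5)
Our plan is to realize the construction sketched in the proof of \Cref{thm:lower-bound}. We partition $[n]$ into $k$ disjoint groups $G_1,\ldots,G_k$, each of size $r=\lfloor n/k\rfloor$, and discard the leftover items by giving them a large loss. An auxiliary random table picks one favorable item $j_g\in G_g$ uniformly and independently for each group, so there are $r^k$ hypotheses and $k\log r\gtrsim\log\binom nk$ (using $r\ge r_\star$ and $k\le n/2$). The losses are driven by an i.i.d.\ common noise $Z_t\sim\operatorname{Unif}[-1,1]$ through the maps $F_u$ of \Cref{lem:lower-feedback-density}. An item gets perturbation parameter $u=-\varepsilon$ when favorable and $u=0$ otherwise, with $\varepsilon\asymp\sqrt{nk\log r/(kT)}\cdot k^{-1}$ tuned so that the per-action gap is of order $\varepsilon$ per favorable item chosen; normalizing by $1/k$ gives action losses in $[0,1]$. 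The key design point, taken from the sketch, is that the learner only observes the aggregate $Y_t=\langle X_t,\ell_t\rangle$. We choose the common component so that the feedback law of an action depends only on the \emph{number} of favorable items it contains, through a density of the form $p_{u}$ with $u$ proportional to that count. The adaptive part is that the adversary may re-randomize or shift the common component according to the past history, keeping the scalar feedback distribution fixed. This is what makes the table-fixing argument of \Cref{lem:lower-fixing-table} yield a deterministic non-anticipating adversary.

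The information step proceeds group by group. Let $N_{g,i}$ be the expected number of rounds in which item $i\in G_g$ is played, so that $\sum_{i\in G_g}N_{g,i}\le T$. We compare the environment with favorable item $j$ in group $g$ against the reference environment in which group $g$ has no favorable item. The chain rule and \eqref{eq:lower-feedback-kl} give a KL divergence of at most $128\varepsilon^2N_{g,j}$. For a uniformly random $j$ this averages to $128\varepsilon^2T/r$. The obstacle is that a one-group-at-a-time Pinsker argument yields only $\sqrt{nT}$, not $\sqrt{nT\log\binom nk}$. To recover the log factor we need a Fano-type bound over the $r$ alternatives, in the form of \Cref{lem:changed-law-maximum}, which controls $\TV(\overline Q,P)$ for the mixture $\overline Q$ over $j$. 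We plan to bound this TV through $\chi^2(\overline Q\|P)$, expanding the mixture so that only pairs $(j,j')$ contribute. The diagonal terms are controlled by \eqref{eq:lower-feedback-second-moment}, and the off-diagonal terms cancel to first order; the four-term estimate \eqref{eq:lower-density-four-term} and the score bound \eqref{eq:lower-feedback-score-derivative} exist exactly to bound this second-order cross interaction along adaptive histories. The outcome should be that the learner identifies the favorable item with probability at most about $1/2$ in all but a vanishing fraction of groups, provided $\varepsilon^2T\lesssim r\log r$. Setting $\varepsilon\asymp\sqrt{r\log r/T}$ (valid since $\varepsilon\le u_0$ by \eqref{eq:lower-sparse-horizon}), each group then contributes expected regret $\gtrsim\varepsilon T$ times a per-item scaling. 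Summing over the $k$ groups gives $\E[R_T]\gtrsim\sqrt{nT\,k\log r}\gtrsim\sqrt{nT\log\binom nk}$.

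We expect the $\chi^2$ mixture computation with adaptive sampling to be the main obstacle. Our first attempt would be to handle it via a product-of-likelihood-ratios martingale, bounding each step's conditional second moment with \eqref{eq:lower-feedback-second-moment} and \eqref{eq:lower-density-four-term}. Once the expected regret bound is in hand, we convert it to a constant-probability statement using the second-moment form of \Cref{lem:lower-mean-to-probability}. This requires showing $\E[R_T^2]\lesssim(\E R_T)^2$, which we expect to follow because the regret is a sum over $k$ nearly independent groups plus martingale noise of order $\sqrt T$, the latter negligible against $\sqrt{nT\log\binom nk}$. Combining with \Cref{lem:select-good-index} to pick a good table section, and with \Cref{lem:lower-fixing-table} to fix the table, should give probability at least $2^{-10}$ with the constant $2^{-31}$ absorbing all losses.
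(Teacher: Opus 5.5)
There is a genuine gap, and it sits exactly where the $\sqrt{\log\binom nk}$ factor should come from.

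In your design, the favorable item of each group is fixed by the table for the whole horizon. It is pathwise the best item, since $F_{-\varepsilon}(z)\le F_0(z)$. The feedback law depends only on how many favorable items are played. Such an instance cannot yield the extra logarithm.

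To see this, take $k=1$ and let $j$ be the planted item. Then $R_T=\sum_t(\ell_{t,X_t}-\ell_{t,j})\ge0$ pathwise. FTRL-type algorithms such as Tsallis-INF guarantee $\E[\sum_t(\ell_{t,X_t}-\ell_{t,j})]=O(\sqrt{nT})$ for every fixed comparator $j$, even when the losses are chosen adaptively. So history-dependent shifts of the common component do not help. Hence your claimed $\E R_T\gtrsim\sqrt{nT\log n}$ is false, and by Markov $\Pp(R_T\ge c\sqrt{nT\log n})=O(1/(c\sqrt{\log n}))\to0$.

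The step that fails is your Fano-type claim. The average $128\varepsilon^2T/r$ uses pull counts under the reference law, so it bounds $\frac1r\sum_j\KL(P_0\|P_j)$. That supports only a Pinsker-type argument, which gives $\sqrt{nT}$, as you noticed. Fano, or a mixture-$\chi^2$ bound, needs the pull counts of $j$ under the alternatives (for $\chi^2$, essentially their exponential moments). An adaptive learner makes these of order $T$ once it homes in on the planted item. Indeed, when $\varepsilon^2T\asymp r\log r$, median elimination finds the planted item with constant probability after $O(r/\varepsilon^2)=O(T/\log r)$ rounds. So the diagonal terms $\E_P[\Lambda_j^2]$ of your $\chi^2$ expansion are uncontrolled, and no estimate on the cross terms can repair this.

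What is missing is a design in which two things hold: information about each hidden quantity is capped \emph{deterministically}, and the comparator is chosen in hindsight from randomness the learner cannot track. The paper does this as follows.
\begin{itemize}
\item It uses $B=\lfloor\log r\rfloor$ blocks with fresh hidden values $U_{b,i}\sim\operatorname{Unif}[-1,1]$.
\item Adaptive activation indicators $A_{t,i}$ switch off coordinate $i$'s signal after $c_0=8\ell\asymp T/(rB)$ pulls in the current block.
\item Because of this cap, a sign-symmetry argument shows that the learner's score $S_{\rm L}$ is essentially uncorrelated with the hidden values.
\item The history-preserving tilts $\Phi_j$ replace $U_{b,j}$ by $T_a(U_{b,j})=U_{b,j}+a(1-U_{b,j}^2)$ and recompute $Z_t$ so that $\cH_{T_0}$ is unchanged. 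Their likelihood ratios satisfy $\E_P[\Lambda_j^2]\le e^{10a^2B}$ for every policy.
\item A partition with few simultaneous selections inside groups lets \eqref{eq:lower-density-four-term} control the cross terms. The group mixtures then have average $\chi^2$ of order $(e^{10a^2B}-1)/r$, which is small precisely because $B\lesssim\log r$.
\item \Cref{lem:changed-law-maximum} then shows that the in-hindsight maxima $\max_{j\in G_g}S_j$ exceed the average score by order $aT_0$ per group on average.
\end{itemize}
So the logarithm comes from the comparator, not from identification. Since the cap is $c_0\asymp T/(rB)$, the gap can be $\eta\asymp k\sqrt{\rho rB/T}$, which gives $\eta T_0\asymp\sqrt{nkT\log r}\gtrsim\sqrt{nT\log\binom nk}$. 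The constant-probability statement then follows from the bounded variable $\Gamma=\sum_g\max_{j\in G_g}S_j-S_{\rm L}\le2kT_0$ and the first part of \Cref{lem:lower-mean-to-probability}. No second-moment bound on $R_T$ is needed.
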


\begin{proof}
Set
\[
    a:=2^{-6},
    \qquad
    \rho:=2^{-36},
    \qquad
    n':=kr,
    \qquad
    B:=\lfloor\log r\rfloor.
\]
Since $r\ge2^{32}$, we have $B\ge1$.  Define
\[
    x:=\frac{T}{rB},
    \qquad
    \ell:=\lfloor x\rfloor,
    \qquad
    q:=r\ell,
    \qquad
    c_0:=8\ell,
    \qquad
    T_0:=qB.
\]
For $b\in[B]$, let
\[
    I_b:=\{(b-1)q+1,\ldots,bq\}.
\]
We call the sets $I_b$ the blocks, and for $t\in[T_0]$ let $b(t)$ be the
unique index such that $t\in I_{b(t)}$.
Because $n\ge kr$ and $B\le\log r\le\log(en/k)$,
\[
    x
    \ge
    64k^2.
\]
Thus $\ell\ge x/2$, and consequently
\[
    \frac T2
    \le
    T_0
    \le
    T,
    \qquad
    c_0\ge256k^2.
\]
Set
\[
    \eta
    :=
    k\sqrt{\frac\rho{c_0}}.
\]
Then
\[
    \eta
    \le
    \frac{\sqrt\rho}{16}
    <
    \frac1{16}
    =u_0.
\]
We use the construction below during the first $T_0$ rounds and put the loss
vector equal to zero afterward.

Independently draw
\[
    U_{b,i}\sim\operatorname{Unif}[-1,1]
    \quad(b\in[B],\ i\in[n']),
    \qquad
    Z_t\sim\operatorname{Unif}[-1,1]
    \quad(t\in[T_0]).
\]
We realize the learner's private randomization by an independent sequence
$R=(R_t)_{t=1}^{T_0}$ of uniform seeds, one per round.  Let $P$ denote the
joint law of $(U,Z,R)$; in particular, the auxiliary table $(U,Z)$ is
independent of $R$.  At round $t$, the learner's action is realized as a
measurable function of $\cH_{t-1}$ and $R_t$.
Coordinates in $[n']$ are called signal coordinates; the remaining
$n-n'$ coordinates are fillers.  Within each block, for $i\in[n']$ define
\[
    C_{t,i}
    :=
    \sum_{\substack{s<t:\ s\text{ in the same block}}}
    \mathbf1\{i\in X_s\},
    \qquad
    A_{t,i}
    :=
    \mathbf1\{C_{t,i}<c_0\}.
\]
For fillers, set $A_{t,i}U_{b,i}:=0$.  In block $b=b(t)$, let
\begin{equation}
\label{eq:lower-sparse-loss}
    \ell_{t,i}
    :=
    \frac{1+Z_t}{2k}
    -
    \frac\eta k g(Z_t)A_{t,i}U_{b,i}.
\end{equation}
For any action $x\in\mathcal X_{n,k}$, put
\[
    h_t(x)
    :=
    \sum_{i\in x}A_{t,i}U_{b(t),i},
    \qquad
    u_t(x)
    :=
    -\frac\eta k h_t(x).
\]
Then
\[
    \langle x,\ell_t\rangle
    =
    \frac12+F_{u_t(x)}(Z_t).
\]
Since $|u_t(x)|\le\eta\le u_0$, \Cref{lem:lower-feedback-density} implies
$\langle x,\ell_t\rangle\in[0,1]$ for every action $x$.  After the table
$(U,Z)$ is fixed, $A_{t,i}$ depends only on past actions, so the loss process
is deterministic and non-anticipating.  Along the realized action, abbreviate
$u_t:=u_t(X_t)$.

For every signal coordinate, define
\[
    S_i
    :=
    \sum_{b=1}^B
    U_{b,i}
    \sum_{t\text{ in block }b}
    A_{t,i}g(Z_t),
\]
and define the learner score
\[
    S_{\rm L}
    :=
    \sum_{t=1}^{T_0}g(Z_t)
    \sum_{i\in X_t}A_{t,i}U_{b(t),i}.
\]
The common baseline in \eqref{eq:lower-sparse-loss} cancels.  Hence, using
only comparators supported on the signal coordinates,
\begin{equation}
\label{eq:lower-regret-identity}
    R_{T_0}
    \ge
    \frac\eta k
    \left(
    \max_{\substack{J\subseteq[n']\\|J|=k}}
    \sum_{i\in J}S_i
    -S_{\rm L}
    \right).
\end{equation}
The remainder of the proof establishes three estimates: a bound on the
preference that the learner can create between the two signs of a hidden
coordinate, a comparison between suitable mixtures of changed laws and the
base law, and a positive average change in the comparator scores.  They are stated in \eqref{eq:lower-score-bias}, \eqref{eq:lower-mixture-tv}, and
\eqref{eq:lower-positive-drift}, respectively, and are combined only at the
end.

\medskip\noindent\textit{The learner's preference between the two signs.}
Fix a block--coordinate pair $(b,i)$.  Condition on
$|U_{b,i}|=u$, on all other $U$-coordinates, and on all auxiliary variables
outside block $b$, but leave the $Z$-variables in block $b$ and the learner's
private seeds random.  Write $\mathcal G_{b,i}$ for the sigma-field generated
by the variables on which we condition.  Compare the laws $P_+$ and $P_-$ of the complete
action--feedback history $\cH_{T_0}$ corresponding to the signs $+u$ and
$-u$.  At a round at
which $i$ is not both selected and active, the feedback laws agree.  At an
active pull of $i$, their parameters differ by $2\eta u/k$.
There are at most $c_0$ active pulls in the block.  The sequential KL chain rule
and \eqref{eq:lower-feedback-kl} therefore give
\[
    \KL(P_+\|P_-)
    \le
    128c_0\left(\frac{2\eta u}{k}\right)^2
    =
    512\rho u^2.
\]
Pinsker's inequality yields
\[
    \TV(P_+,P_-)
    \le
    \sqrt{\frac{\KL(P_+\|P_-)}2}
    \le
    16u\sqrt\rho.
\]
Consequently, for every $\sigma(\cH_{T_0})$-measurable random variable
$V\in[0,M]$,
for almost every realization of $\mathcal G_{b,i}$,
\[
    \left|\E[U_{b,i}V\mid\mathcal G_{b,i}]\right|
    =
    \frac u2|\E_+V-\E_-V|
    \le
    \frac u2 M\TV(P_+,P_-)
    \le
    8Mu^2\sqrt\rho
    \le
    8M\sqrt\rho.
\]
Taking expectations gives
\[
    |\E[U_{b,i}V]|
    \le
    8M\sqrt\rho.
\]

The variables
\[
    \sum_{t\text{ in block }b}A_{t,i}
    \quad\text{and}\quad
    \sum_{t\text{ in block }b}A_{t,i}\mathbf1\{i\in X_t\}
\]
are $\sigma(\cH_{T_0})$-measurable and bounded by $q$ and $c_0$,
respectively.  Moreover, $Z_t$ is independent of the past and of the current
learner seed; hence it is conditionally independent of $X_t$ given
$\cH_{t-1}$.  Since $\E g(Z_t)=2/3$,
\[
    Bq=T_0,
    \qquad
    \frac{n'}r=k,
    \qquad
    Bn'c_0=8kT_0.
\]
Applying the preceding sign estimate with the two displayed random variables
gives the first of the three promised estimates:
\begin{equation}
\label{eq:lower-score-bias}
    \frac1r\sum_{i=1}^{n'}\E S_i-\E S_{\rm L}
    \ge
    -\frac{16}{3r}Bn'q\sqrt\rho
    -\frac{16}{3}Bn'c_0\sqrt\rho
    =
    -48kT_0\sqrt\rho.
\end{equation}

\medskip\noindent\textit{Changes of variables that preserve the observed history.}
Define
\[
    T_a(u)
    :=
    u+a(1-u^2).
\]
Since $a=1/64$, the map $T_a$ is an increasing diffeomorphism of
$[-1,1]$ onto itself and
\[
    T_a'(u)=1-2au\ge\frac{31}{32}.
\]
For $J\subseteq[n']$ with $1\le|J|\le2$, define a map $\Phi_J$ on the joint
sample space of $(U,Z,R)$ as follows.  It leaves the learner seeds $R$
unchanged and replaces $U_{b,j}$ by $T_a(U_{b,j})$ for every $j\in J$ and
$b\in[B]$.  The transformed feedback variables are constructed recursively.
Suppose that the original and transformed histories agree through round
$t-1$.  The common seed $R_t$ then produces the same action $X_t$, and the
activation variables at round $t$ are also the same.  Let $u_t^J$ be the
parameter computed from this action and the transformed hidden coordinates,
and let $Z_t^J$ be the unique solution of
\[
    F_{u_t^J}(Z_t^J)=F_{u_t}(Z_t).
\]
This preserves the current feedback, so induction proves that $\Phi_J$ leaves
the complete action--feedback history $\cH_{T_0}$, and hence every activation
variable, unchanged.  Since $T_a$ and every $F_u$ involved here are increasing
diffeomorphisms, the recursion can be run backwards: first apply $T_a^{-1}$ to
the changed hidden coordinates and then recover the original $Z_t$ variables
one round at a time.  Thus $\Phi_J$ is a measurable bijection with a measurable
inverse.  Uniqueness of the recursion also gives
\[
    \Phi_{\{j,l\}}
    =
    \Phi_{\{j\}}\circ\Phi_{\{l\}}
    =
    \Phi_{\{l\}}\circ\Phi_{\{j\}}
    \qquad(j\ne l).
\]
Put
\[
    Q_J:=P\circ\Phi_J^{-1},
    \qquad
    \Lambda_J:=\frac{dQ_J}{dP},
\]
For distinct $j,l$, abbreviate
$\Phi_j:=\Phi_{\{j\}}$, $\Phi_{jl}:=\Phi_{\{j,l\}}$,
$Q_j:=Q_{\{j\}}$, $\Lambda_j:=\Lambda_{\{j\}}$,
$Q_{jl}:=Q_{\{j,l\}}$, and $\Lambda_{jl}:=\Lambda_{\{j,l\}}$.
These derivatives exist: conditionally on the coordinates already generated,
each changed coordinate is obtained through a one-dimensional diffeomorphism
whose derivative is bounded away from zero, while the unchanged learner seeds
cancel.

For later use, define
\[
    Y_t:=F_{u_t}(Z_t),
    \qquad
    m_{t,j}:=A_{t,j}\mathbf1\{j\in X_t\},
    \qquad
    \delta_{t,j}
    :=
    -\frac{a\eta}{k}m_{t,j}g(U_{b(t),j}).
\]
Under $\Phi_j$, the parameter at round $t$ changes from $u_t$ to
$u_t+\delta_{t,j}$.  Conditional on the past and the learner seed, the action
is the same under every law being compared.  Its probability therefore
cancels from the likelihood ratio, leaving only the hidden-coordinate and
feedback-density factors.  Evaluated at the image of a base point,
\begin{equation}
\label{eq:lower-single-likelihood-product}
    \Lambda_j\circ\Phi_j
    =
    \prod_{b=1}^B\frac1{1-2aU_{b,j}}
    \prod_{t=1}^{T_0}
    \frac{p_{u_t}(Y_t)}
         {p_{u_t+\delta_{t,j}}(Y_t)}.
\end{equation}

First,
\begin{equation}
\label{eq:lower-likelihood-diagonal}
    \E_P[\Lambda_j^2]
    \le
    \exp(10a^2B).
\end{equation}
Indeed,
\[
    \E_P[\Lambda_j^2]
    =
    \E_{Q_j}\Lambda_j
    =
    \E_P[\Lambda_j\circ\Phi_j].
\]
Conditionally on the hidden $U$-table, the learner seeds, and the past, $Y_t$
has density $p_{u_t}$.  By
\eqref{eq:lower-feedback-second-moment}, the conditional expectation of its
factor in \eqref{eq:lower-single-likelihood-product} is at most
$e^{128\delta_{t,j}^2}$.  In each block at most $c_0$ factors differ from one,
and
\[
    \sum_{t\text{ in one block}}\delta_{t,j}^2
    \le
    \frac{a^2\eta^2c_0}{k^2}
    =
    a^2\rho.
\]
The hidden-coordinate factor satisfies
\[
    \E\frac1{1-2aU_{b,j}}
    =
    \frac{\operatorname{arctanh}(2a)}{2a}
    \le
    \frac1{1-4a^2}
    \le
    e^{8a^2}.
\]
Integrating the temporal factors in chronological order, conditionally on
$U$ and $R$, bounds their product by $e^{128a^2\rho B}$.  The $B$
hidden-coordinate factors are independent and each has expectation at most
$e^{8a^2}$.  Therefore
\[
    \E_P[\Lambda_j^2]
    \le
    e^{(8+128\rho)a^2B}
    \le
    e^{10a^2B},
\]
which proves \eqref{eq:lower-likelihood-diagonal}.

For distinct signal coordinates $j,l$, define
\[
    N_{jl}
    :=
    \frac1{k^2}
    \sum_{t=1}^{T_0}m_{t,j}m_{t,l}.
\]
Take a base point and evaluate the likelihoods at its image under
$\Phi_{jl}$.  At a round in which both coordinates are affected, the logarithm
of the feedback-density contribution to
$(\Lambda_j\Lambda_l/\Lambda_{jl})\circ\Phi_{jl}$ is
\[
    H_{Y_t}(u_t+\delta_{t,j}+\delta_{t,l})
    -
    H_{Y_t}(u_t+\delta_{t,l})
    -
    H_{Y_t}(u_t+\delta_{t,j})
    +
    H_{Y_t}(u_t).
\]
All hidden-coordinate factors cancel.  Consequently,
\eqref{eq:lower-density-four-term} gives
\[
    \left|
    \log\left(
    \frac{\Lambda_j\Lambda_l}{\Lambda_{jl}}
    \circ\Phi_{jl}
    \right)
    \right|
    \le
    128a^2\eta^2N_{jl}.
\]
Moreover,
\[
    N_{jl}
    \le
    \frac{c_0B}{k^2},
    \qquad
    \eta^2N_{jl}
    \le
    \rho B.
\]
Since $\Phi_{jl}$ preserves the observed history and hence $N_{jl}$, change of
variables and $e^x-1\le xe^x$ give
\begin{align}
\label{eq:lower-likelihood-cross}
    \E_P[\Lambda_j\Lambda_l]-1
    &=
    \E_{Q_{jl}}
    \left[\frac{\Lambda_j\Lambda_l}{\Lambda_{jl}}\right]-1
    \nonumber\\
    &=
    \E_P
    \left[
    \frac{\Lambda_j\Lambda_l}{\Lambda_{jl}}
    \circ\Phi_{jl}
    \right]-1
    \nonumber\\
    &\le
    128a^2\eta^2
    e^{128a^2\rho B}
    \E_PN_{jl}.
\end{align}

\medskip\noindent\textit{A partition with few simultaneous selections.}
For every realization of $\cH_{T_0}$,
\[
    \sum_{j\ne l}N_{jl}
    \le
    T_0,
\]
because at each round the number of ordered pairs of selected active signal
coordinates is at most $k(k-1)<k^2$.  Choose a uniformly random equipartition
of $[n']$ into $k$ groups of size $r$.  A fixed ordered pair belongs to the
same group with probability
\[
    \frac{r-1}{n'-1}
    \le
    \frac1k.
\]
Therefore, one deterministic partition $G_1,\ldots,G_k$ satisfies
\begin{equation}
\label{eq:lower-good-partition}
    \sum_{g=1}^k
    \sum_{\substack{j,l\in G_g\\j\ne l}}
    \E_PN_{jl}
    \le
    \frac{T_0}{k}.
\end{equation}
For this partition, define
\[
    Q_g:=\frac1r\sum_{j\in G_g}Q_j,
    \qquad
    \varepsilon_g:=\TV(Q_g,P).
\]
For each group, direct expansion gives
\[
    \chi^2(Q_g\|P)
    =
    \frac1{r^2}
    \sum_{j,l\in G_g}
    \bigl(\E_P[\Lambda_j\Lambda_l]-1\bigr).
\]
Using \eqref{eq:lower-likelihood-diagonal},
\eqref{eq:lower-likelihood-cross}, and
\eqref{eq:lower-good-partition}, the average off-diagonal contribution is at
most
\[
    \frac{128a^2\eta^2e^{128a^2\rho B}}{kr^2}
    \frac{T_0}{k}
    =
    \frac{16a^2\rho B}{r}e^{128a^2\rho B}.
\]
Thus
\[
    \frac1k\sum_{g=1}^k\chi^2(Q_g\|P)
    \le
    \frac{e^{10a^2B}-1}{r}
    +
    \frac{16a^2\rho B}{r}e^{128a^2\rho B}.
\]
Since $10a^2<1/32$ and $B\le\log r$,
\[
    \frac{e^{10a^2B}-1}{r}
    \le
    r^{-31/32}
    \le
    2^{-31},
\]
while $16a^2\rho=2^{-44}$ and $128a^2\rho=2^{-41}$ give
\begin{align*}
    \frac{16a^2\rho B}{r}e^{128a^2\rho B}
    &\le
    2^{-44}(\log r)r^{-1+2^{-41}}\\
    &\le
    2^{-44}r^{-3/4+2^{-41}}\\
    &\le
    2^{-67}.
\end{align*}
Here we used $\log r\le r^{1/4}$ for $r\ge2^{32}$.
Thus the average chi-square divergence is at most $2^{-30}$.  The inequality
$\TV(Q,P)\le\tfrac12\sqrt{\chi^2(Q\|P)}$ and Jensen's inequality give
\begin{equation}
\label{eq:lower-mixture-tv}
    \frac1k\sum_{g=1}^k\varepsilon_g
    \le
    \frac12
    \sqrt{
    \frac1k\sum_{g=1}^k\chi^2(Q_g\|P)
    }
    \le
    2^{-16}.
\end{equation}

\medskip\noindent\textit{The positive change in comparator scores.}
Fix a coordinate $j$ and a block $b$.  Put
$\Delta U:=a g(U_{b,j})$.  At a round in which $j$ is selected and active,
the parameter changes by
\[
    s=-\frac\eta k\Delta U.
\]
Since $|s|\le a\eta/k\le1/1024$, the differential inequality
\eqref{eq:lower-feedback-score-derivative} implies
\[
    |G_y(u+s)-G_y(u)|
    \le
    12|s|G_y(u).
\]
Indeed, along the segment from $u$ to $u+s$,
$G_y(v)\le e^{6|s|}G_y(u)\le 2G_y(u)$.  As
$|T_a(U_{b,j})|\le1$, comparison of the old and changed score gives
\begin{align*}
    S_j\circ\Phi_j-S_j
    &\ge
    a\sum_{b=1}^Bg(U_{b,j})
    \sum_{t\text{ in block }b}A_{t,j}g(Z_t)\\
    &\quad-
    \frac{12a\eta}{k}
    \sum_{b=1}^Bg(U_{b,j})
    \sum_{\substack{t\text{ in block }b:\ j\in X_t}}
    A_{t,j}g(Z_t).
\end{align*}

At within-block round $\tau$, at most $k(\tau-1)/c_0$ signal coordinates are
inactive.  Since $q=rc_0/8$,
\[
    \sum_{b=1}^B\sum_{t\text{ in block }b}
    \#\{i\in[n']:A_{t,i}=0\}
    \le
    B\sum_{\tau=1}^q\frac{k(\tau-1)}{c_0}
    \le
    \frac{Bkq^2}{2c_0}
    =
    \frac{n'T_0}{16}.
\]
Moreover, pathwise,
\[
    \sum_{i=1}^{n'}A_{t,i}g(U_{b,i})
    \ge
    \sum_{i=1}^{n'}g(U_{b,i})
    -\#\{i\in[n']:A_{t,i}=0\}.
\]
Using $\E g(U)=\E g(Z)=2/3$, summing the changed-score inequality over
$j$, and using the preceding inactive-time bound, we obtain the third
promised estimate:
\begin{align}
\label{eq:lower-positive-drift}
    \frac1r\sum_{j=1}^{n'}
    \E\bigl[S_j\circ\Phi_j-S_j\bigr]
    &\ge
    \frac{2a}{3r}
    \left(
    \frac23n'T_0-\frac1{16}n'T_0
    \right)
    -
    \frac{12a\eta}{r}T_0\nonumber\\
    &=
    \frac{29a}{72}kT_0
    -
    \frac{12a\eta}{r}T_0\nonumber\\
    &\ge
    \frac a3 kT_0.
\end{align}
For the last inequality, $29/72\ge3/8$, while
$\eta\le1/16$, $k\ge1$, and $r\ge2^{32}$ imply
$12\eta/r\le k/24$.

\medskip\noindent\textit{Completion of the sparse action-count proof.}
For a group $G_g$, let
\[
    M_g:=\max_{j\in G_g}S_j.
\]
Define
\[
    \Gamma
    :=
    \sum_{g=1}^kM_g-S_{\rm L}.
\]
Since $S_j\in[-T_0,T_0]$, \Cref{lem:changed-law-maximum} gives
\[
    \E_PM_g
    \ge
    \frac1r\sum_{j\in G_g}\E_{Q_j}S_j
    -2T_0\varepsilon_g
    =
    \frac1r\sum_{j\in G_g}\E_P[S_j\circ\Phi_j]
    -2T_0\varepsilon_g.
\]
Summing over the groups and using
\eqref{eq:lower-score-bias},
\eqref{eq:lower-mixture-tv}, and
\eqref{eq:lower-positive-drift}, we find
\begin{align*}
    \E_P\Gamma
    &\ge
    \left(
    \frac a3
    -48\sqrt\rho
    -2^{-15}
    \right)kT_0\\
    &\ge
    \frac a4 kT_0.
\end{align*}
For the last inequality, note that
$48\sqrt\rho+2^{-15}=7\cdot2^{-15}<a/12$.
Also $\Gamma\le2kT_0$.
Therefore, \Cref{lem:lower-mean-to-probability} yields
\begin{equation}
\label{eq:lower-sparse-positive-event}
    \Pp\left(
    \Gamma
    \ge
    \frac a8 kT_0
    \right)
    \ge
    \frac a{16}
    =
    2^{-10}.
\end{equation}
Since one may choose one maximizer from every group,
\[
    \max_{\substack{J\subseteq[n']\\|J|=k}}
    \sum_{i\in J}S_i
    \ge
    \sum_{g=1}^kM_g.
\]
Thus, on the event in \eqref{eq:lower-sparse-positive-event},
\begin{equation}
\label{eq:lower-sparse-regret-eta}
    R_T=R_{T_0}
    \ge
    \frac{a\eta T_0}{8}.
\end{equation}
The definitions give the exact identity
\[
    (\eta T_0)^2
    =
    \frac\rho8 n'kT_0B.
\]
Because $n'\ge n/2$, $T_0\ge T/2$, and
$B\ge(\log r)/2$,
\[
    \eta T_0
    \ge
    \frac{\sqrt\rho}{8}
    \sqrt{nkT\log r}.
\]
Furthermore, $n/k<r+1$ and $r\ge2^{32}$ imply
\[
    \log\binom nk
    \le
    k\log\frac{en}{k}
    \le
    2k\log r.
\]
Hence
\[
    \eta T_0
    \ge
    \frac{\sqrt\rho}{16}
    \sqrt{nT\log\binom nk}.
\]
Substituting $a=2^{-6}$ and $\sqrt\rho=2^{-18}$ into
\eqref{eq:lower-sparse-regret-eta} gives the threshold
$2^{-31}\sqrt{nT\log\binom nk}$.  Finally, since $(U,Z)$ is independent of
the learner seeds $R$ under the base law $P$,
\Cref{lem:lower-fixing-table} fixes only the table $(U,Z)$, without decreasing
the probability in \eqref{eq:lower-sparse-positive-event}.
\end{proof}

\subsection{The action-count term when \texorpdfstring{$n/k$}{n/k} is bounded}
\label{app:lower-bounded-density}

\begin{proposition}
\label{prop:lower-bounded-action}
Let $1\le k\le n/2$, assume
$\lfloor n/k\rfloor<r_\star=2^{32}$, and let
\begin{equation}
\label{eq:lower-dense-horizon}
    T\ge n^2.
\end{equation}
For every randomized policy on $\mathcal X_{n,k}$, there exists a
deterministic loss sequence fixed in advance, with every action loss in $[0,1]$, such
that
\begin{equation}
\label{eq:lower-dense-conclusion}
    \Pp\left(
    R_T
    \ge
    2^{-38}\sqrt{nT\log\binom nk}
    \right)
    \ge
    2^{-14}.
\end{equation}
\end{proposition}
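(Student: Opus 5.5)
The plan is to exploit that in this regime the action-count term is within a constant of $k$. Since $n/k<2^{32}+1$, we have $\log\binom nk\le k\log(en/k)\le 34k$ and $n\le 2^{33}k$. It therefore suffices to produce, with probability at least $2^{-14}$, regret of order $k\sqrt T$ with a small universal constant, since $k\sqrt T\ge 2^{-17}\sqrt{nT\log\binom nk}$ up to the stated slack. I would build an oblivious random instance on $2k\le n$ signal coordinates, arranged into $k$ disjoint pairs $\{j,j'\}$. The remaining $n-2k$ coordinates are fillers. Hidden signs $\sigma\in\{\pm1\}^k$ and noise $Z_t\sim\operatorname{Unif}[-1,1]$ are drawn independently of the learner. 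With $\epsilon:=c_1k/\sqrt T$ for a small universal $c_1$, I would set
\[
    \ell_{t,i}=\frac{1+Z_t}{2k}+\frac{\epsilon}{k}\,\sigma_{p(i)}s_i\,g(Z_t)+\frac{\epsilon}{k}\mathbf 1\{i\text{ filler}\}\,g(Z_t),
\]
where $s_j=+1$ and $s_{j'}=-1$ within each pair. The same-pair penalty is analogous. Each action loss then has the form $\tfrac12+F_u(Z_t)$ with $|u|\le 3\epsilon\le u_0$; here $T\ge n^2$ guarantees that $\epsilon\le 2^{-4}$. So \Cref{lem:lower-feedback-density} gives losses in $[0,1]$ and the per-round bound $\KL\le128(\Delta u)^2$. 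The comparator picks the better element of every pair. Any round in which the learner takes a worse element, a filler, or both elements of a pair costs at least $\epsilon g(Z_t)/k$ per such slot, relative to that comparator.

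The core step is an Assouad-type argument, done pair by pair. For pair $j$, let $W_j$ be the number of rounds in which the learner does not select the $\sigma$-better element of pair $j$ alone. Then $R_T\ge(\epsilon/k)\sum_j\sum_t g(Z_t)\mathbf 1\{\text{round }t\text{ wrong on }j\}$ minus a martingale fluctuation from $g(Z_t)$ versus its mean $2/3$. I would handle that fluctuation by comparing against the signal score as in \eqref{eq:lower-regret-identity}, or simply absorb it in expectation. Flipping $\sigma_j$ changes the feedback parameter only in rounds where exactly one element of pair $j$ is selected, and then by $2\epsilon/k$. The chain rule therefore bounds the KL between the two flipped history laws by $512\epsilon^2T/k^2=512c_1^2$, a small constant. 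Pinsker then shows that, averaged over $\sigma_j$, the learner is wrong on pair $j$ in at least $T/4$ rounds in expectation. Summing over the $k$ pairs gives $\E R_T\ge c_2\epsilon T=c_2c_1k\sqrt T$.

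To pass from expectation to constant probability, I would use the first part of \Cref{lem:lower-mean-to-probability}. The pathwise upper bound is the signal regret, which is at most $3\epsilon T=3c_1k\sqrt T$ because every signal comparison term is bounded by $\epsilon/k$ per slot per round. This gives $\Pp(R_T\ge\tfrac12\E R_T)\ge c_2/6$, a universal constant that can be made at least $2^{-14}$. Finally, the table $(\sigma,Z)$ is independent of the learner's randomness. \Cref{lem:lower-fixing-table} then fixes a deterministic table, and because the losses do not depend on the history, the result is a loss sequence fixed in advance.

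The main obstacle I expect is the interaction between the per-pair information bound and actions that select both elements of a pair, or fillers, in a way that dodges the sign question. These must be charged as regret, not as information. I would first try to verify pathwise that every slot not equal to the $\sigma$-better element of some pair costs at least $\epsilon g(Z_t)/k$ and reveals $\sigma_j$ only through rounds already counted in the KL bound, while keeping all constants tight enough to reach $2^{-38}$ and $2^{-14}$.
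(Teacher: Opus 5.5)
Your proposal is correct. Its skeleton matches the paper's proof: $k$ disjoint pairs on $2k$ coordinates, hidden signs, a gap of order $k/\sqrt T$, an Assouad argument over single-pair flips via the sequential KL chain rule, a mean-to-probability step, and \Cref{lem:lower-fixing-table} to get an oblivious sequence. Your noise model is genuinely different, however, and this changes two steps.

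The paper draws a random item $J_t$ from a $\theta$-tilted distribution on the pair coordinates, so the feedback is an affine image of a Bernoulli variable with parameter $\pi_\theta(S_t)$. Two consequences follow. First, a single pair can carry KL of order $\varepsilon^2/D$ per round (e.g.\ when only one item of $[D]$ is played), which is of order $k$ over the horizon. So the paper bounds $\sum_g\KL(P_\theta\|P_{\theta^{(g)}})$ and applies Cauchy--Schwarz. Second, action losses differ by $O(1)$ amounts each round, so the regret is not pathwise small. This forces the Paley--Zygmund form of \Cref{lem:lower-mean-to-probability} with a martingale variance bound, giving probability $1/12288$.

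You instead reuse the shared-noise family $F_u$ of \Cref{lem:lower-feedback-density}. The baseline $(1+Z_t)/2$ cancels in every loss difference, so $\langle X_t-x^\sigma,\ell_t\rangle=(u_t+\epsilon)g(Z_t)$. Here $u_t+\epsilon=\frac{2\epsilon}{k}(b_t+f_t)\ge\frac{\epsilon}{k}W_t$, where $b_t$ and $f_t$ count worse elements and fillers in $X_t$, and $W_t$ counts pairs not answered by their better element alone. This buys several things:
\begin{itemize}
\item The comparator gap is pathwise nonnegative, and the full regret is pathwise at most $2\epsilon T$.
\item Each single flip has $\KL\le512c_1^2$ on its own, so $c_1\le1/32$ gives $\TV\le1/2$ and the per-pair count $T/4$.
\item Hence $\E R_T\ge\epsilon T/6$ under the joint law, and the bounded form of the lemma gives probability at least $1/24$.
\end{itemize}
The cost is reliance on the continuous density lemma and on the finer estimate $\log\binom nk\le k\log(en/k)$. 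The paper's crude bound $\log\binom nk<n$ would require $c_1\ge 3/16$, incompatible with $c_1\le1/32$.

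Some small corrections:
\begin{itemize}
\item No martingale fluctuation needs to be absorbed. The bound $R_T\ge\frac{\epsilon}{k}\sum_t g(Z_t)W_t$ holds pathwise. In expectation $g(Z_t)$ becomes $2/3$, because $Z_t$ is independent of $X_t$ given the past.
\item No same-pair penalty is needed, and you should not add one: a penalty on selecting both members of a pair is not a linear loss. The displayed linear losses already charge $\epsilon g(Z_t)/k$ per slot in that case. Selecting both members, or a filler, leaves $u_t$ unchanged under a flip, so it adds nothing to the KL.
\item In fact $|u|\le\epsilon$, not $3\epsilon$.
\item With your bounds $n\le2^{33}k$ and $\log\binom nk\le34k$, you only get $k\sqrt T\ge2^{-20}\sqrt{nT\log\binom nk}$, not $2^{-17}$. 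This is harmless: on your event $R_T\ge k\sqrt T/384$, far above $2^{-38}\sqrt{nT\log\binom nk}$.
\end{itemize}
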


\begin{proof}
\medskip\noindent\textit{Construction and conditional gap.}
Put $D:=2k$ and use the first $D$ coordinates, arranged into $k$ pairs.  For
$\theta\in\{-1,+1\}^k$, set
\begin{equation}
\label{eq:lower-dense-epsilon}
    \varepsilon
    :=
    \frac{k}{8\sqrt T}.
\end{equation}
The horizon condition and $k\le n/2$ imply $\varepsilon\le1/16$.  Define a
probability distribution on $[D]$ by
\[
    p_\theta(2g-1)
    :=
    \frac{1-\varepsilon\theta_g}{D},
    \qquad
    p_\theta(2g)
    :=
    \frac{1+\varepsilon\theta_g}{D}.
\]
At every round, independently draw $J_t\sim p_\theta$ and set
\begin{equation}
\label{eq:paired-loss}
    \ell_t
    :=
    \frac1{3k}\mathbf1
    +\frac23
    \left(
    e_{J_t}-\frac1D\mathbf1_{[D]}
    \right),
\end{equation}
where $\mathbf1$ is the all-one vector in $\R^n$ and
$\mathbf1_{[D]}$ is the indicator of the first $D$ coordinates.
For an exact $k$-set $A$, put
\[
    S:=A\cap[D],
    \qquad
    h:=|S|,
\]
and
\[
    z_g(S)
    :=
    \mathbf1\{2g-1\in S\}
    -\mathbf1\{2g\in S\}.
\]
Then
\begin{equation}
\label{eq:lower-dense-action-loss}
    \langle A,\ell_t\rangle
    =
    \frac13
    +\frac23
    \left(
    \mathbf1\{J_t\in S\}-\frac hD
    \right)
    \in[0,1].
\end{equation}
Indeed, if $J_t\notin S$, the right-hand side is
$1/3-2h/(3D)\ge0$; if $J_t\in S$, it is
$1-2h/(3D)\le1$.  The other two bounds follow from $0\le h\le D/2$.

For each $\theta$, let $\Pp_\theta$ and $\E_\theta$ denote probability and
expectation under the full joint construction.  Realize all the learner's
private randomization by one master seed $\mathsf R$, independent of
$J_1,\ldots,J_T$, and define
\[
    \mathcal G_0:=\sigma(\mathsf R),
    \qquad
    \mathcal G_t:=\sigma(\mathsf R,J_1,\ldots,J_t),
    \quad t\in[T].
\]
Then $X_t$ and $S_t:=X_t\cap[D]$ are $\mathcal G_{t-1}$-measurable, while
$J_t$ is independent of $\mathcal G_{t-1}$.  Let
\[
    P_\theta:=\Pp_\theta^{\cH_T}
\]
be the induced law of the action--feedback history.  Let $x^\theta$ select
$2g-1$ from pair $g$ when $\theta_g=+1$, and $2g$ when $\theta_g=-1$; this is
the less likely item in the pair.  Define
\[
    h_\theta(S)
    :=
    k-\sum_{g=1}^k\theta_gz_g(S),
    \qquad
    \Delta_t^\theta
    :=
    \langle X_t,\ell_t\rangle-\langle x^\theta,\ell_t\rangle,
    \qquad
    a_t^\theta
    :=
    \E_\theta[\Delta_t^\theta\mid\mathcal G_{t-1}].
\]
Since $S_t$ is fixed conditionally on $\mathcal G_{t-1}$, a direct calculation
gives the identity that will be used for both the mean and the second moment:
\begin{equation}
\label{eq:paired-pseudoregret}
    a_t^\theta
    =
    \frac{2\varepsilon}{3D}h_\theta(S_t)
    =
    \frac{\varepsilon}{3k}h_\theta(S_t)
    \in
    \left[0,\frac{2\varepsilon}{3}\right].
\end{equation}

\medskip\noindent\textit{Information contained in the feedback.}
Given $S_t$, the scalar feedback in \eqref{eq:lower-dense-action-loss} is an
invertible affine function of the Bernoulli variable
$\mathbf1\{J_t\in S_t\}$, whose parameter is
\begin{equation}
\label{eq:lower-dense-bernoulli-parameter}
    \pi_\theta(S)
    =
    \frac hD
    -\frac\varepsilon D
    \sum_{g=1}^k\theta_gz_g(S).
\end{equation}
Let $\theta^{(g)}$ be obtained from $\theta$ by flipping coordinate $g$.  If
pair $g$ is not split by $S$, the two Bernoulli parameters agree.  If it is
split, their difference has absolute value $2\varepsilon/D$.  Let $s(S)$ be
the number of split pairs.  Since $h\le D/2$, we have $s(S)\le h$.  When
$h\ge1$, both parameters satisfy
\[
    \pi_{\theta^{(g)}}(S)
    \ge
    \frac{(1-\varepsilon)h}{D}
    \ge
    \frac{3h}{4D},
    \qquad
    1-\pi_{\theta^{(g)}}(S)
    \ge
    \frac38.
\]
Using
$\kl(a,b)\le(a-b)^2/(b(1-b))$, and treating $h=0$ separately, we obtain the
uniform one-step estimate
\begin{equation}
\label{eq:lower-dense-one-step-kl}
    \sum_{g=1}^k
    \kl\bigl(
    \pi_\theta(S),
    \pi_{\theta^{(g)}}(S)
    \bigr)
    \le
    \frac{16\varepsilon^2}{D}.
\end{equation}
Indeed, each split pair contributes at most
$128\varepsilon^2/(9hD)$, and there are at most $h$ such pairs.
Given the same past history, the learner uses the same conditional action law
under $\theta$ and $\theta^{(g)}$, so the action term in the sequential KL
chain rule is zero.  Conditional on the action, the observation term is exactly
the Bernoulli divergence in \eqref{eq:lower-dense-one-step-kl}.  The chain rule
therefore gives
\begin{equation}
\label{eq:paired-sequential-kl}
    \sum_{g=1}^k
    \KL(P_\theta\|P_{\theta^{(g)}})
    \le
    \frac{16\varepsilon^2T}{D}
    =
    \frac{8\varepsilon^2T}{k}.
\end{equation}

\medskip\noindent\textit{A random round and the mean comparator gap.}
Choose a round $\tau$ uniformly from $[T]$ and independent fair Rademacher
signs $\xi_1,\ldots,\xi_k$, all independently of the action--feedback
history.  Let $\nu$ be the joint law of $(\tau,\xi_1,\ldots,\xi_k)$ and put
$\overline P_\theta:=P_\theta\otimes\nu$.  The common independent factor does
not change the divergences:
\[
    \TV(\overline P_\theta,\overline P_{\theta^{(g)}})
    =\TV(P_\theta,P_{\theta^{(g)}}),
    \qquad
    \KL(\overline P_\theta\|\overline P_{\theta^{(g)}})
    =\KL(P_\theta\|P_{\theta^{(g)}}).
\]
For every pair $g$, define an estimator $\widehat\theta_g$ as follows.  If
$X_\tau$ contains exactly one member of pair $g$, output $+1$ when that member
is $2g-1$ and output $-1$ when it is $2g$; if it contains both or neither,
output $\xi_g$.  Thus $\widehat\theta$ is a measurable function of
$(\cH_T,\tau,\xi_1,\ldots,\xi_k)$, defined by the same rule under every
hypothesis.  Conditional on $S_\tau$ and $\theta$, the error probability
in pair $g$ is exactly one half of the contribution of pair $g$ to
$h_\theta(S_\tau)$; explicitly,
\[
    \overline P_\theta\left(
    \widehat\theta_g\ne\theta_g
    \,\middle|\,\tau,S_\tau
    \right)
    =
    \frac12\left(1-\theta_gz_g(S_\tau)\right).
\]
Summing over $g$ and averaging over $\tau$ therefore gives
\begin{equation}
\label{eq:lower-dense-estimation-regret-link}
    \E_{\overline P_\theta} d_H(\widehat\theta,\theta)
    =
    \frac1{2T}
    \E_\theta\sum_{t=1}^Th_\theta(S_t).
\end{equation}
For every $g$, pair the hypotheses $\theta$ and $\theta^{(g)}$.  If
$B_{\theta,g}:=\{\widehat\theta_g=\theta_g\}$, then
\[
    \overline P_\theta(B_{\theta,g}^c)
    +\overline P_{\theta^{(g)}}(B_{\theta,g})
    \ge
    1-\TV(\overline P_\theta,\overline P_{\theta^{(g)}})
    =
    1-\TV(P_\theta,P_{\theta^{(g)}}).
\]
Summing this inequality over all hypotheses $\theta$ and coordinates $g$
gives
\[
    2^{-k}\sum_\theta
    \E_{\overline P_\theta}d_H(\widehat\theta,\theta)
    \ge
    \frac k2
    -
    2^{-(k+1)}
    \sum_\theta\sum_{g=1}^k
    \TV(P_\theta,P_{\theta^{(g)}}).
\]
For each fixed $\theta$, Pinsker and Cauchy--Schwarz, followed by
\eqref{eq:paired-sequential-kl}, give
\[
    \sum_{g=1}^k
    \TV(P_\theta,P_{\theta^{(g)}})
    \le
    \sqrt{\frac k2
    \sum_{g=1}^k
    \KL(P_\theta\|P_{\theta^{(g)}})}
    \le
    2\varepsilon\sqrt T.
\]
Hence
\begin{equation}
\label{eq:lower-dense-assouad}
    2^{-k}\sum_\theta
    \E_{\overline P_\theta}d_H(\widehat\theta,\theta)
    \ge
    \frac k2-\varepsilon\sqrt T
    =
    \frac{3k}{8}.
\end{equation}
Combining \eqref{eq:lower-dense-estimation-regret-link} and
\eqref{eq:lower-dense-assouad}, there exists a $\theta$ such that
\[
    \E_\theta\sum_{t=1}^Th_\theta(S_t)
    \ge
    \frac{3kT}{4}.
\]
For this $\theta$, define the fixed-comparator gap
\[
    G_T^\theta
    :=
    \sum_{t=1}^T\Delta_t^\theta.
\]
Taking expectations and summing \eqref{eq:paired-pseudoregret} gives
\begin{equation}
\label{eq:lower-dense-positive-mean}
    \E_\theta G_T^\theta
    =
    \frac{\varepsilon}{3k}
    \E_\theta\sum_{t=1}^T h_\theta(S_t)
    \ge
    \frac{\varepsilon T}{4}
    =
    \frac{k\sqrt T}{32}.
\end{equation}

\medskip\noindent\textit{From the mean to a positive-probability event.}
Consequently, with
\[
    A_T^\theta:=\sum_{t=1}^T a_t^\theta,
    \qquad
    M_t^\theta
    :=
    \sum_{s=1}^t(\Delta_s^\theta-a_s^\theta),
\]
we have
\[
    0\le A_T^\theta
    \le
    \frac{2\varepsilon T}{3}
    =
    \frac{k\sqrt T}{12},
\]
and $(M_t^\theta)_{t=0}^T$ is a martingale with respect to
$(\mathcal G_t)_{t=0}^T$.  Since every $\Delta_t^\theta$ belongs to
$[-1,1]$,
\[
    \E_\theta[(M_T^\theta)^2]
    =
    \sum_{t=1}^T
    \E_\theta\left[
    \Var_\theta(\Delta_t^\theta\mid\mathcal G_{t-1})
    \right]
    \le T.
\]
Since $G_T^\theta=A_T^\theta+M_T^\theta$, it follows that
\begin{equation}
\label{eq:lower-dense-second-moment}
    \E_\theta[(G_T^\theta)^2]
    \le
    2\frac{k^2T}{144}+2T
    \le
    3k^2T.
\end{equation}
Since $R_T\ge G_T^\theta$ pathwise, applying the second part of
\Cref{lem:lower-mean-to-probability} to
\eqref{eq:lower-dense-positive-mean}--\eqref{eq:lower-dense-second-moment}
gives
\begin{equation}
\label{eq:lower-dense-positive-event}
    \Pp_\theta\left(
    R_T
    \ge
    \frac{k\sqrt T}{64}
    \right)
    \ge
    \frac1{12288}
    >
    2^{-14}.
\end{equation}
Since $\lfloor n/k\rfloor<2^{32}$, we have $k>2^{-32}n$.  Also
$\log\binom nk\le n\log2<n$.  Thus
\[
    \frac{k\sqrt T}{64}
    >
    2^{-38}\sqrt{nT\log\binom nk}.
\]
Finally, with $\theta$ already fixed, apply
\Cref{lem:lower-fixing-table} to the table
$\Xi=(J_1,\ldots,J_T)$, which is independent of the learner's master seed.
This produces a deterministic loss sequence fixed in advance and satisfying
\eqref{eq:lower-dense-conclusion}.
\end{proof}

The preceding two action-count constructions can now be used through a single
statement.

\begin{corollary}
\label{cor:lower-action-count}
Let $1\le k\le n/2$ and suppose that
\[
    T
    \ge
    \max\left\{
    n^2,
    64nk\log\frac{en}{k}
    \right\}.
\]
For every randomized policy on $\mathcal X_{n,k}$, there exists a
deterministic adaptive non-anticipating loss process with every action loss in
$[0,1]$ such that
\[
    \Pp\left(
    R_T
    \ge
    2^{-38}\sqrt{nT\log\binom nk}
    \right)
    \ge
    2^{-14}.
\]
\end{corollary}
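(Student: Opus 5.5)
The corollary follows by a case split on $r:=\lfloor n/k\rfloor$, invoking \Cref{prop:lower-sparse-action} or \Cref{prop:lower-bounded-action}. The horizon assumption $T\ge\max\{n^2,64nk\log(en/k)\}$ is designed so that either proposition's hypothesis on $T$ is met.

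\emph{Case $r\ge r_\star=2^{32}$.} The horizon assumption gives $T\ge 64nk\log(en/k)$, which is exactly \eqref{eq:lower-sparse-horizon}. Then \Cref{prop:lower-sparse-action} yields a deterministic adaptive non-anticipating loss process, with all action losses in $[0,1]$, such that
\[
\Pp\bigl(R_T\ge 2^{-31}\sqrt{nT\log\tbinom nk}\bigr)\ge 2^{-10}.
\]
Since $2^{-38}\le 2^{-31}$, the event in the corollary contains this event, so its probability is at least $2^{-10}\ge 2^{-14}$.

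\emph{Case $r<r_\star$.} The horizon assumption gives $T\ge n^2$, which is \eqref{eq:lower-dense-horizon}. Then \Cref{prop:lower-bounded-action} produces a loss sequence fixed in advance, with action losses in $[0,1]$, satisfying exactly the stated bound with constants $2^{-38}$ and $2^{-14}$. A sequence fixed in advance is trivially a deterministic adaptive non-anticipating loss process, since it ignores the history.

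The only points to check are bookkeeping: both propositions share the standing assumption $1\le k\le n/2$, and the weaker of the two constants is selected in each of the threshold and the probability. There is no real obstacle here; the substantive work is in the two propositions.
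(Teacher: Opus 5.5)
Your proposal is correct and matches the paper's proof: split on whether $\lfloor n/k\rfloor\ge r_\star=2^{32}$, apply \Cref{prop:lower-sparse-action} (whose threshold $2^{-31}$ and probability $2^{-10}$ are stronger than required) or \Cref{prop:lower-bounded-action}, with the horizon assumption covering both propositions' hypotheses. Your added remark that a loss sequence fixed in advance is trivially a deterministic adaptive non-anticipating process is a harmless clarification the paper leaves implicit.
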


\begin{proof}
If $\lfloor n/k\rfloor\ge r_\star$, apply
\Cref{prop:lower-sparse-action}; its threshold $2^{-31}$ and probability
$2^{-10}$ are both stronger than the displayed ones.  If
$\lfloor n/k\rfloor<r_\star$, apply
\Cref{prop:lower-bounded-action}.
\end{proof}

\subsection{The confidence term}
\label{app:lower-confidence}

\begin{proposition}
\label{prop:lower-confidence}
Let $1\le k\le n/2$, let $0<\delta\le2^{-36}$, and suppose that
\begin{equation}
\label{eq:lower-confidence-horizon}
    T
    \ge
    4nk^2\log\frac1\delta.
\end{equation}
For every randomized policy on $\mathcal X_{n,k}$, there exists a
deterministic adaptive non-anticipating loss process with every action loss in
$[0,1]$ such that
\begin{equation}
\label{eq:lower-confidence-conclusion}
    \Pp\left(
    R_T
    \ge
    2^{-17}\sqrt{nT\log\frac1\delta}
    \right)
    \ge
    \delta.
\end{equation}
\end{proposition}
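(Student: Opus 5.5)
The plan is a change-of-measure argument around one rarely selected candidate. I will use the one-dimensional feedback family $F_u$ from \Cref{lem:lower-feedback-density} and compare a baseline environment $P$ with environments $Q_i$ in which a single hidden candidate $i$ is favorable.

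\textbf{Construction.} Fix a core $C:=[k-1]$ and call the remaining $N:=n-k+1\ge n/2$ coordinates candidates. For an action $x$, let $o(x):=|x\setminus C|\ge1$ be the number of selected candidates.

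Draw $Z_t\sim\operatorname{Unif}[-1,1]$ independently, fix a gap $\varepsilon\le u_0$ and a penalty $\gamma:=1/8$, and set
\[
\langle x,\ell_t\rangle:=\tfrac12+F_{u_t(x)}(Z_t)-\tfrac14+\gamma\min\{o(x)-1,1\}.
\]
In the baseline $P$, $u_t(x)=0$. In $Q_i$, $u_t(x)=-\varepsilon\,\mathbf 1\{o(x)=1,\ i\in x\}$.

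This is realizable by item losses, since the penalty depends only on whether the action contains the whole core. With $\varepsilon\le u_0$ all action losses lie in $[0,1]$. After the table is fixed the losses are deterministic and non-anticipating (here they are even oblivious), and the common component $F_0(Z_t)$ cancels in regret comparisons.

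\textbf{Choosing a rarely tested candidate.} Let $N_i$ count the rounds in which the learner plays $C\cup\{i\}$, and let $M$ count the rounds with $o(X_t)\ge2$. Then $\sum_i N_i\le T$ pathwise, so $\frac1N\sum_i\E_P N_i\le 2T/n$.

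Take the events $E_i:=\{N_i\le T/2,\ \sum_{t:\,X_t\ne C\cup\{i\}}g(Z_t)\ge T/6\}$. Under $P$ the learner's actions are independent of each $Z_t$ given the past. Hence the second condition fails with probability $e^{-\Omega(T)}$ by Azuma, and $N_i\le T/2$ can fail for at most one $i$. So $\frac1N\sum_iP(E_i)\ge p:=1/2$ once $N\ge4$; the cases $N\le3$ are handled by only requiring $p$ to be a fixed constant.

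\Cref{lem:select-good-index} with $W_i=N_i$ then gives an index $i$ with $P(E_i)\ge1/4$ and $\E_PN_i\le 16T/n$.

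\textbf{Change of measure.} The two environments differ only on rounds where $C\cup\{i\}$ is played. By the sequential chain rule and \eqref{eq:lower-feedback-kl},
\[
\KL(P^{\cH_T}\|Q_i^{\cH_T})\le128\varepsilon^2\,\E_PN_i\le 2^{11}\varepsilon^2T/n.
\]
Here the history should be enlarged to include $Z_t$ on the rounds where it is identifiable, or $E_i$ should be rewritten in terms of the observed $Y_t$. I would set
\[
\varepsilon:=c_1\sqrt{n\log(1/\delta)/T},\qquad c_1:=2^{-9}.
\]
This makes the KL at most $\frac{p}{4}\log\frac1\delta$ with $p=1/4$. The horizon condition \eqref{eq:lower-confidence-horizon} gives $\varepsilon\le c_1/(2k)\le u_0$. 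The requirement $\delta\le2^{-8}$ is implied by $\delta\le2^{-36}$. \Cref{lem:probability-comparison} then yields $Q_i(E_i)\ge\delta$.

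\textbf{Regret on $E_i$ under $Q_i$.} Against the comparator $C\cup\{i\}$, each round in which the learner plays anything else costs at least $\min\{\varepsilon g(Z_t),\gamma\}$. When $o(X_t)=1$ this uses $F_0-F_{-\varepsilon}=\varepsilon g$. When $o(X_t)\ge2$ it uses $\gamma-\varepsilon g\ge\gamma/2\ge\varepsilon g$. Thus on $E_i$,
\[
R_T\ge\varepsilon T/6\ge2^{-17}\sqrt{nT\log(1/\delta)},
\]
using that the constant is generous. Finally, \Cref{lem:lower-fixing-table} fixes the table $Z$ while keeping probability at least $\delta$.

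\textbf{Main obstacle.} The delicate step is making $E_i$ a function of the observed history, as \Cref{lem:probability-comparison} requires, while it still controls the realized regret, which depends on the hidden $Z_t$.

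My first attempt is to replace $g(Z_t)$ by its conditional expectation. Define $E_i$ through $N_i$ only, and lower-bound the regret under $Q_i$ by the pseudo-regret $\tfrac{2}{3}\varepsilon(T-N_i)$ plus a martingale term. Then intersect with a Freedman-type event whose $Q_i$-failure probability $e^{-\Omega(T\varepsilon^2)}$ is much smaller than $\delta$. This last comparison holds exactly because $T\varepsilon^2\gtrsim n\log(1/\delta)$ with a suitable constant.

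If that intersection is too lossy, the fallback is to work with the joint law of $(\cH_T,Z)$. The KL bound still holds on that law, because the learner's actions do not depend on the unobserved part of $Z$.
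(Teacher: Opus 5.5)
\paragraph{Gap: the construction is not a $k$-set bandit instance.} In this model every action loss must be $\langle x,\ell_t\rangle$ for some vector $\ell_t\in\R^n$, so it must be linear in $x$. You instead define action losses directly through $\mathbf 1\{x=C\cup\{i\}\}$ and $\min\{o(x)-1,1\}=\mathbf 1\{C\not\subseteq x\}$. The first indicator is not linear on $\mathcal X_{n,k}$ for any $k\ge2$. The second is not linear for $k\ge3$. To see this, average a putative coefficient vector over permutations of $C$ and of $V$. Any linear representation could then be taken affine in $j=|x\cap C|$, but the indicator takes the values $1,1,0$ at $j=k-3,k-2,k-1$. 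So your claim that the losses are ``realizable by item losses'' holds only for $k=1$, where the argument reduces to the ordinary multi-armed case. Separately, $\tfrac14+F_u(Z_t)$ equals $-\tfrac14$ at $Z_t=-1$, so your losses also leave $[0,1]$; that part is easy to fix.

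\paragraph{Why this is the crux, and what the paper does instead.} The non-linearity removes exactly the difficulty the proposition has to overcome. With linear losses, the bonus must sit on item $i$, so it is collected whenever $i\in X_t$, whatever else is selected. The penalty for extra candidates must then be linear in $h_t=|X_t\cap V|$, hence of order $1/k$ per extra candidate to keep losses in $[0,1]$. The learner can therefore probe up to $k$ candidates in one round. Your pathwise bound $\sum_iN_i\le T$ then fails, and so does your choice of a candidate with $\E_PN_i=O(T/n)$.

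The paper handles this in four steps.
\begin{enumerate}
\item It uses a per-candidate penalty $p_t=(h_t-1)/(8k)$ with running sum $C_t$.
\item It puts a bonus $\Delta g(Z_t)A_t$ on item $i$, switched off via $A_t=\mathbf 1\{C_{t-1}<B\}$ once the budget $B=\Delta T/16$ is spent.
\item It splits into cases. If $P_0(C_T\ge B)\ge\delta$, the baseline alone already gives regret at least $B$ with probability $\delta$. Otherwise the pathwise bound $\sum_{i\in V}W_i\le T+8kB+k$ holds. This is $O(T)$ only because $T\ge 4nk^2\log(1/\delta)$ forces $k\Delta\le c_1/2$. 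That is where the $k^2$ in the horizon condition is used; your argument never needs it, which signals that it bypasses the real problem.
\item The event becomes $E_i=\{\mathcal R_i\ge\Delta T/8\}$, with $\mathcal R_i=C_T+\Delta\sum_tg(Z_t)A_t\mathbf 1\{i\notin X_t\}$. It is obtained by averaging over candidates on a good event under $P_0$.
\end{enumerate}

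\paragraph{Your stated obstacle is not one.} On every round where the perturbed item (in your construction, the perturbed action) is not played, $Z_t$ is the same function of $(X_t,Y_t)$ under both laws. So your $E_i$ is already $\sigma(\cH_T)$-measurable. Your fallback would actually fail: the joint laws of $(\cH_T,Z)$ are mutually singular as soon as $C\cup\{i\}$ is played with positive probability.
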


\begin{proof}
\medskip\noindent\textit{Construction and the first case.}
Fix a core $C\subset[n]$ with $|C|=k-1$, and put
\[
    V:=[n]\setminus C,
    \qquad
    N:=|V|=n-k+1\ge\frac n2.
\]
For an action $X_t$, write $h_t:=|X_t\cap V|$.  Since $X_t$ has size $k$
and $C$ has size $k-1$, we have $1\le h_t\le k$.  Define
\[
    p_t:=\frac{h_t-1}{8k},
    \qquad
    C_t:=\sum_{s=1}^tp_s,
    \qquad
    C_0:=0.
\]
Thus $0\le p_t<1/8$ and $(C_t)_{t=0}^T$ is nondecreasing.  Set
\begin{equation}
\label{eq:lower-confidence-parameters}
    c_1:=2^{-12},
    \qquad
    \Delta:=c_1\sqrt{\frac{NL_\delta}{T}},
    \qquad
    B:=\frac{\Delta T}{16},
    \qquad
    A_t:=\mathbf1\{C_{t-1}<B\}.
\end{equation}
The horizon condition gives
\begin{equation}
\label{eq:lower-confidence-delta-small}
    k\Delta
    \le
    \frac{c_1}{2},
    \qquad
    \Delta
    \le
    2^{-13}.
\end{equation}
In particular,
\[
    4\Delta
    \le
    2^{-11}
    <
    u_0,
\]
so the density estimates used below apply.

Under the reference construction $P_0$, draw independent
$Z_t\sim\operatorname{Unif}[-1,1]$ and define
\[
    \ell_{t,j}^{(0)}
    :=
    \frac{1/4+Z_t/8}{k}
    +\frac1{8k}\mathbf1\{j\in V\}.
\]
For a candidate $i\in V$, define the environment $P_i$ in which $i$ is favored by
\begin{equation}
\label{eq:lower-confidence-planted-loss}
    \ell_{t,j}^{(i)}
    :=
    \ell_{t,j}^{(0)}
    -\Delta g(Z_t)A_t\mathbf1\{j=i\}.
\end{equation}
For any exact $k$-set with $h$ items in $V$, its action loss under $P_0$ equals
\[
    \frac14+\frac{Z_t}{8}+\frac{h}{8k}
    \in
    \left[\frac18+\frac1{8k},\frac12\right].
\]
Together with \eqref{eq:lower-confidence-delta-small}, this shows that all
action losses under $P_0$ and $P_i$ belong to $[0,1]$.

The action $C\cup\{i\}$ is a valid comparator, and direct subtraction
gives
\begin{equation}
\label{eq:lower-confidence-score}
    R_T
    \ge
    \mathcal R_i
    :=
    C_T
    +\Delta\sum_{t=1}^T
    g(Z_t)A_t\mathbf1\{i\notin X_t\}.
\end{equation}
Under $P_0$, regret is at least $C_T$.  If
\[
    P_0(C_T\ge B)\ge\delta,
\]
then $P_0$ already proves the result because
\[
    B
    =
    \frac{c_1}{16}\sqrt{NTL_\delta}
    \ge
    2^{-17}\sqrt{nTL_\delta}.
\]
In this branch, \Cref{lem:lower-fixing-table} fixes the sequence $(Z_t)$ and
produces the required deterministic adversary.  Hence assume from now on that
\begin{equation}
\label{eq:lower-confidence-null-small-penalty}
    P_0(C_T\ge B)<\delta.
\end{equation}

\medskip\noindent\textit{Many candidates under the reference law.}
Under $P_0$, the variables $g(Z_t)$ are independent, belong to $[0,1]$, and
have mean $2/3$.  Hoeffding's inequality gives
\[
    P_0\left(\sum_{t=1}^Tg(Z_t)<\frac T2\right)
    \le
    e^{-T/18}
    \le
    \frac1{32},
\]
where the final inequality follows from
\eqref{eq:lower-confidence-horizon} and $\delta\le2^{-36}$.  Therefore, by
\eqref{eq:lower-confidence-null-small-penalty}, the event
\begin{equation}
\label{eq:lower-confidence-good-event}
    \mathcal G
    :=
    \left\{
    C_T<B,
    \quad
    \sum_{t=1}^Tg(Z_t)\ge\frac T2
    \right\}
\end{equation}
has probability at least $15/16$, because
\[
    P_0(\mathcal G)
    \ge
    1-\delta-\frac1{32}
    \ge
    \frac{15}{16}.
\]
On $\mathcal G$, we have $A_t=1$ at every round.  Averaging
\eqref{eq:lower-confidence-score} over $i\in V$ gives
\begin{align*}
    \frac1N\sum_{i\in V}\mathcal R_i
    &=
    C_T
    +\Delta\sum_{t=1}^Tg(Z_t)
    \left(1-\frac{h_t}{N}\right)\\
    &=
    \Delta\sum_{t=1}^Tg(Z_t)
    \left(1-\frac1N\right)
    +
    \sum_{t=1}^T
    \left(
    p_t-
    \frac{\Delta g(Z_t)(h_t-1)}{N}
    \right).
\end{align*}
By \eqref{eq:lower-confidence-delta-small},
$1/(8k)\ge\Delta/N$, and hence every summand in the second line is
nonnegative.  Since $N\ge2$, on $\mathcal G$ we have
\begin{equation}
\label{eq:lower-confidence-average-score}
    \frac1N\sum_{i\in V}\mathcal R_i
    \ge
    \frac{\Delta T}{4}.
\end{equation}
Also, on $\mathcal G$,
\[
    0
    \le
    \mathcal R_i
    <
    B+\Delta T
    =
    \frac{17}{16}\Delta T.
\]
For each outcome, define
\[
    \alpha
    :=
    \frac1N
    \left|
    \left\{
    i\in V:
    \mathcal R_i\ge\frac{\Delta T}{8}
    \right\}
    \right|,
\]
and, for each $i\in V$, let
\[
    E_i
    :=
    \left\{
    \mathcal R_i\ge\frac{\Delta T}{8}
    \right\}.
\]
On $\mathcal G$, \eqref{eq:lower-confidence-average-score} and the upper
bound on $\mathcal R_i$ give
\[
    \frac14
    \le
    \frac1{N\Delta T}\sum_{i\in V}\mathcal R_i
    \le
    \alpha\frac{17}{16}
    +(1-\alpha)\frac18,
\]
and therefore $\alpha\ge2/15$.  Consequently,
\begin{equation}
\label{eq:lower-confidence-average-event}
    \frac1N\sum_{i\in V}P_0(E_i)
    =
    \E_0\alpha
    \ge
    \frac{2}{15}P_0(\mathcal G)
    \ge
    \frac{2}{15}\frac{15}{16}
    =
    \frac18.
\end{equation}

\medskip\noindent\textit{Choosing one candidate.}
Let
\[
    W_i
    :=
    \sum_{t=1}^TA_t\mathbf1\{i\in X_t\}.
\]
While $A_t=1$, the cumulative penalty can exceed $B$ by at most
one increment, and $p_t<1/8$.  Therefore, pathwise,
\begin{align}
\label{eq:lower-confidence-total-pulls}
    \sum_{i\in V}W_i
    &=
    \sum_{t=1}^TA_th_t
    =
    \sum_{t=1}^TA_t
    +8k\sum_{t=1}^TA_tp_t\nonumber\\
    &\le
    T+8kB+k.
\end{align}
Apply \Cref{lem:select-good-index} with $p=1/8$.  There
exists a candidate $i\in V$ such that
\begin{equation}
\label{eq:lower-confidence-selected-candidate}
    P_0(E_i)
    \ge
    \frac1{16},
    \qquad
    \E_0W_i
    \le
    \frac{16}{N}(T+8kB+k).
\end{equation}

\medskip\noindent\textit{Comparing the two environments.}
The laws $P_0^{\cH_T}$ and $P_i^{\cH_T}$ of the action--feedback history
under the two environments differ only at active rounds on which
$i$ is selected.  At such a round, after subtracting
$1/4+h_t/(8k)$ and multiplying by four, the observation under $P_0$ is
$F_0(Z_t)$ and the observation when $i$ is favored is $F_{-4\Delta}(Z_t)$.  Since
$4\Delta<u_0$, the sequential KL chain rule and
\eqref{eq:lower-feedback-kl} apply.  Also,
\[
    \frac{k\Delta}{2}
    \le
    2^{-14},
    \qquad
    \frac{k}{T}
    \le
    \frac18,
    \qquad
    1+\frac{k\Delta}{2}+\frac{k}{T}<2.
\]
Here the second inequality follows from
$T\ge4nk^2L_\delta$, $n\ge2k$, and $L_\delta>1$.  Therefore,
\begin{align}
\label{eq:lower-confidence-kl}
    \KL(P_0^{\cH_T}\|P_i^{\cH_T})
    &\le
    128(4\Delta)^2\E_0W_i\nonumber\\
    &\le
    2^{15}\frac{\Delta^2}{N}(T+8kB+k)\nonumber\\
    &=
    2^{15}\frac{\Delta^2T}{N}
    \left(1+\frac{k\Delta}{2}+\frac{k}{T}\right)\nonumber\\
    &\le
    2^{16}c_1^2L_\delta
    =
    2^{-8}L_\delta.
\end{align}
The event $E_i$ is
$\sigma(\cH_T)$-measurable under both laws.  Indeed, on a round with
$i\notin X_t$, under both $P_0$ and $P_i$,
\[
    Z_t
    =
    8\left(Y_t-\frac14-\frac{h_t}{8k}\right).
\]
On a round with $i\in X_t$, the corresponding summand in $\mathcal R_i$ is
zero.  Hence $\mathcal R_i$, and therefore $E_i$, is the same measurable
function of $\cH_T$ under the two laws.  Apply
\Cref{lem:probability-comparison} with $p=1/16$.  Since
$2^{-8}L_\delta\le(p/4)L_\delta$ and
$\delta\le2^{-36}<2^{-32}=2^{-2/p}$, we obtain
\[
    P_i(E_i)
    \ge
    \delta.
\]
On $E_i$,
\[
    R_T
    \ge
    \frac{\Delta T}{8}
    =
    \frac{c_1}{8}\sqrt{NTL_\delta}
    \ge
    2^{-16}\sqrt{nTL_\delta},
\]
which is stronger than \eqref{eq:lower-confidence-conclusion}.  Finally,
\Cref{lem:lower-fixing-table} fixes the sequence $(Z_t)_{t=1}^T$ and produces
a deterministic adaptive non-anticipating adversary.
\end{proof}

\begin{proof}[Proof of \Cref{thm:lower-bound-explicit}]
Put
\[
    s:=\min\{m,d-m\},
    \qquad
    H:=\log\binom ds,
    \qquad
    L:=\log\frac1\delta,
    \qquad
    B:=\max\{H,L\}.
\]
By \Cref{lem:lower-complement}, it is enough to prove the result for exact
$s$-set policies.  The threshold \eqref{eq:lower-explicit-threshold} implies
the hypotheses of both \Cref{cor:lower-action-count,prop:lower-confidence},
with $(n,k)=(d,s)$.

If $H\ge L$, \Cref{cor:lower-action-count} and
$\delta\le2^{-36}<2^{-14}$ give
\[
    \Pp\left(
    R_T
    \ge
    2^{-38}\sqrt{dTB}
    \right)
    \ge
    \delta.
\]
If $L>H$, \Cref{prop:lower-confidence} gives the same conclusion because
\[
    2^{-17}\sqrt{dTL}
    \ge
    2^{-38}\sqrt{dTB}.
\]
Thus the conclusion holds in all cases.  Finally,
$B\ge(H+L)/2$, so
\[
    2^{-38}\sqrt{dTB}
    \ge
    2^{-39}\sqrt{dT(H+L)}.
\]
Since $\binom ds=\binom dm$, this is exactly the claimed threshold, and
\Cref{lem:lower-complement} gives the corresponding loss process when
$m=d-s>s$.
\end{proof}

\begingroup
\raggedright
\bibliographystyle{plainnat}
\bibliography{references_v2}

@book{horn2012matrix,
  author    = {Horn, Roger A. and Johnson, Charles R.},
  title     = {Matrix Analysis},
  edition   = {2},
  publisher = {Cambridge University Press},
  year      = {2012},
  doi       = {10.1017/CBO9781139020411},
  url       = {https://doi.org/10.1017/CBO9781139020411}
}

@article{kiefer1959optimum,
  author  = {Kiefer, Jack},
  title   = {Optimum Experimental Designs},
  journal = {Journal of the Royal Statistical Society: Series B (Methodological)},
  volume  = {21},
  number  = {2},
  pages   = {272--304},
  year    = {1959},
  doi     = {10.1111/j.2517-6161.1959.tb00338.x},
  url     = {https://doi.org/10.1111/j.2517-6161.1959.tb00338.x}
}

@article{kiefer1960equivalence,
  author  = {Kiefer, Jack and Wolfowitz, Jacob},
  title   = {The Equivalence of Two Extremum Problems},
  journal = {Canadian Journal of Mathematics},
  volume  = {12},
  pages   = {363--366},
  year    = {1960},
  doi     = {10.4153/CJM-1960-030-4},
  url     = {https://doi.org/10.4153/CJM-1960-030-4}
}

@inproceedings{bartlett2008highprobability,
  author    = {Bartlett, Peter L. and Dani, Varsha and Hayes, Thomas P. and Kakade, Sham M. and Rakhlin, Alexander and Tewari, Ambuj},
  title     = {High-Probability Regret Bounds for Bandit Online Linear Optimization},
  booktitle = {Proceedings of the 21st Annual Conference on Learning Theory},
  pages     = {335--342},
  publisher = {Omnipress},
  year      = {2008},
  url       = {https://www.learningtheory.org/colt2008/papers/30-Bartlett.pdf}
}

@inproceedings{abernethy2008competing,
  author    = {Abernethy, Jacob and Hazan, Elad and Rakhlin, Alexander},
  title     = {Competing in the Dark: An Efficient Algorithm for Bandit Linear Optimization},
  booktitle = {Proceedings of the 21st Annual Conference on Learning Theory},
  pages     = {263--274},
  year      = {2008},
  url       = {https://www.learningtheory.org/colt2008/papers/123-Abernethy.pdf}
}

@inproceedings{abernethy2009beating,
  author    = {Abernethy, Jacob and Rakhlin, Alexander},
  title     = {Beating the Adaptive Bandit with High Probability},
  booktitle = {Proceedings of the 22nd Annual Conference on Learning Theory},
  year      = {2009},
  url       = {https://www.learningtheory.org/colt2009/papers/025.pdf}
}

@inproceedings{bubeck2012towards,
  author    = {Bubeck, S{\'e}bastien and Cesa-Bianchi, Nicol{\`o} and Kakade, Sham M.},
  title     = {Towards Minimax Policies for Online Linear Optimization with Bandit Feedback},
  booktitle = {Proceedings of the 25th Annual Conference on Learning Theory},
  editor    = {Mannor, Shie and Srebro, Nathan and Williamson, Robert C.},
  series    = {Proceedings of Machine Learning Research},
  volume    = {23},
  pages     = {41.1--41.14},
  publisher = {PMLR},
  year      = {2012},
  eprint    = {1202.3079},
  archivePrefix = {arXiv},
  url       = {https://proceedings.mlr.press/v23/bubeck12a.html}
}

@article{cesabianchi2012combinatorial,
  author  = {Cesa-Bianchi, Nicol{\`o} and Lugosi, G{\'a}bor},
  title   = {Combinatorial Bandits},
  journal = {Journal of Computer and System Sciences},
  volume  = {78},
  number  = {5},
  pages   = {1404--1422},
  year    = {2012},
  doi     = {10.1016/j.jcss.2012.01.001},
  url     = {https://doi.org/10.1016/j.jcss.2012.01.001}
}

@article{auer2002nonstochastic,
  author  = {Auer, Peter and Cesa-Bianchi, Nicol{\`o} and Freund, Yoav and Schapire, Robert E.},
  title   = {The Nonstochastic Multiarmed Bandit Problem},
  journal = {SIAM Journal on Computing},
  volume  = {32},
  number  = {1},
  pages   = {48--77},
  year    = {2002},
  doi     = {10.1137/S0097539701398375},
  url     = {https://doi.org/10.1137/S0097539701398375}
}

@inproceedings{awerbuch2004adaptive,
  author    = {Awerbuch, Baruch and Kleinberg, Robert D.},
  title     = {Adaptive Routing with End-to-End Feedback: Distributed Learning and Geometric Approaches},
  booktitle = {Proceedings of the Thirty-Sixth Annual ACM Symposium on Theory of Computing},
  pages     = {45--53},
  publisher = {ACM},
  year      = {2004},
  doi       = {10.1145/1007352.1007367},
  url       = {https://doi.org/10.1145/1007352.1007367}
}

@inproceedings{mcmahan2004online,
  author    = {McMahan, H. Brendan and Blum, Avrim},
  title     = {Online Geometric Optimization in the Bandit Setting Against an Adaptive Adversary},
  booktitle = {Proceedings of the 17th Annual Conference on Learning Theory},
  series    = {Lecture Notes in Computer Science},
  volume    = {3120},
  pages     = {109--123},
  publisher = {Springer},
  year      = {2004},
  doi       = {10.1007/978-3-540-27819-1_8},
  url       = {https://doi.org/10.1007/978-3-540-27819-1_8}
}

@article{audibert2014regret,
  author  = {Audibert, Jean-Yves and Bubeck, S{\'e}bastien and Lugosi, G{\'a}bor},
  title   = {Regret in Online Combinatorial Optimization},
  journal = {Mathematics of Operations Research},
  volume  = {39},
  number  = {1},
  pages   = {31--45},
  year    = {2014},
  doi     = {10.1287/moor.2013.0598},
  eprint  = {1204.4710},
  archivePrefix = {arXiv},
  url     = {https://doi.org/10.1287/moor.2013.0598}
}

@inproceedings{combes2015combinatorial,
  author    = {Combes, Richard and Talebi Mazraeh Shahi, Mohammad Sadegh and Prouti{\`e}re, Alexandre and Lelarge, Marc},
  title     = {Combinatorial Bandits Revisited},
  booktitle = {Advances in Neural Information Processing Systems},
  volume    = {28},
  pages     = {2116--2124},
  publisher = {Curran Associates, Inc.},
  year      = {2015},
  eprint    = {1502.03475},
  archivePrefix = {arXiv},
  url       = {https://proceedings.neurips.cc/paper/2015/hash/0ce2ffd21fc958d9ef0ee9ba5336e357-Abstract.html}
}

@misc{braun2016efficient,
  author        = {Braun, G{\'a}bor and Pokutta, Sebastian},
  title         = {An Efficient High-Probability Algorithm for Linear Bandits},
  year          = {2016},
  eprint        = {1610.02072},
  archivePrefix = {arXiv},
  primaryClass  = {cs.DS},
  url           = {https://arxiv.org/abs/1610.02072}
}

@article{hazan2016volumetric,
  author  = {Hazan, Elad and Karnin, Zohar},
  title   = {Volumetric Spanners: An Efficient Exploration Basis for Learning},
  journal = {Journal of Machine Learning Research},
  volume  = {17},
  number  = {119},
  pages   = {1--34},
  year    = {2016},
  url     = {https://jmlr.org/papers/v17/hazan16a.html}
}

@inproceedings{cohen2017tight,
  author    = {Cohen, Alon and Hazan, Tamir and Koren, Tomer},
  title     = {Tight Bounds for Bandit Combinatorial Optimization},
  booktitle = {Proceedings of the 30th Conference on Learning Theory},
  editor    = {Kale, Satyen and Shamir, Ohad},
  series    = {Proceedings of Machine Learning Research},
  volume    = {65},
  pages     = {629--642},
  publisher = {PMLR},
  year      = {2017},
  eprint    = {1702.07539},
  archivePrefix = {arXiv},
  url       = {https://proceedings.mlr.press/v65/cohen17a.html}
}

@inproceedings{sakaue2018efficient,
  author    = {Sakaue, Shinsaku and Ishihata, Masakazu and Minato, Shin-ichi},
  title     = {Efficient Bandit Combinatorial Optimization Algorithm with Zero-Suppressed Binary Decision Diagrams},
  booktitle = {Proceedings of the Twenty-First International Conference on Artificial Intelligence and Statistics},
  editor    = {Storkey, Amos and Perez-Cruz, Fernando},
  series    = {Proceedings of Machine Learning Research},
  volume    = {84},
  pages     = {585--594},
  publisher = {PMLR},
  year      = {2018},
  url       = {https://proceedings.mlr.press/v84/sakaue18a.html}
}

@inproceedings{ito2019improved,
  author    = {Ito, Shinji and Hatano, Daisuke and Sumita, Hanna and Takemura, Kei and Fukunaga, Takuro and Kakimura, Naonori and Kawarabayashi, Ken-ichi},
  title     = {Improved Regret Bounds for Bandit Combinatorial Optimization},
  booktitle = {Advances in Neural Information Processing Systems},
  volume    = {32},
  pages     = {12027--12036},
  publisher = {Curran Associates, Inc.},
  year      = {2019},
  url       = {https://proceedings.neurips.cc/paper/2019/hash/d5b3d8dadd770c460b1cde910a711987-Abstract.html}
}

@inproceedings{ito2019oracle,
  author    = {Ito, Shinji and Hatano, Daisuke and Sumita, Hanna and Takemura, Kei and Fukunaga, Takuro and Kakimura, Naonori and Kawarabayashi, Ken-ichi},
  title     = {Oracle-Efficient Algorithms for Online Linear Optimization with Bandit Feedback},
  booktitle = {Advances in Neural Information Processing Systems},
  volume    = {32},
  pages     = {10589--10598},
  publisher = {Curran Associates, Inc.},
  year      = {2019},
  url       = {https://proceedings.neurips.cc/paper/2019/hash/e6385d39ec9394f2f3a354d9d2b88eec-Abstract.html}
}

@inproceedings{lee2020bias,
  author    = {Lee, Chung-Wei and Luo, Haipeng and Wei, Chen-Yu and Zhang, Mengxiao},
  title     = {Bias No More: High-Probability Data-Dependent Regret Bounds for Adversarial Bandits and {MDPs}},
  booktitle = {Advances in Neural Information Processing Systems},
  volume    = {33},
  pages     = {15522--15533},
  publisher = {Curran Associates, Inc.},
  year      = {2020},
  eprint    = {2006.08040},
  archivePrefix = {arXiv},
  url       = {https://proceedings.neurips.cc/paper/2020/hash/b2ea5e977c5fc1ccfa74171a9723dd61-Abstract.html}
}

@inproceedings{zimmert2022return,
  author    = {Zimmert, Julian and Lattimore, Tor},
  title     = {Return of the Bias: Almost Minimax Optimal High Probability Bounds for Adversarial Linear Bandits},
  booktitle = {Proceedings of the Thirty-Fifth Conference on Learning Theory},
  editor    = {Loh, Po-Ling and Raginsky, Maxim},
  series    = {Proceedings of Machine Learning Research},
  volume    = {178},
  pages     = {3285--3312},
  publisher = {PMLR},
  year      = {2022},
  url       = {https://proceedings.mlr.press/v178/zimmert22b.html}
}

@inproceedings{maiti2025efficient,
  author    = {Maiti, Arnab and Fan, Zhiyuan and Jamieson, Kevin and Ratliff, Lillian J. and Farina, Gabriele},
  title     = {Efficient Near-Optimal Algorithm for Online Shortest Paths in Directed Acyclic Graphs with Bandit Feedback Against Adaptive Adversaries},
  booktitle = {Proceedings of the Thirty-Eighth Conference on Learning Theory},
  editor    = {Haghtalab, Nika and Moitra, Ankur},
  series    = {Proceedings of Machine Learning Research},
  volume    = {291},
  pages     = {3881--3932},
  publisher = {PMLR},
  year      = {2025},
  eprint    = {2504.00461},
  archivePrefix = {arXiv},
  url       = {https://proceedings.mlr.press/v291/maiti25a.html}
}

@inproceedings{kontogiannis2025kernelized,
  author    = {Kontogiannis, Andreas and Pollatos, Vasilis and Farina, Gabriele and Mertikopoulos, Panayotis and Panageas, Ioannis},
  title     = {Efficient Kernelized Learning in Polyhedral Games beyond Full Information: From {Colonel Blotto} to Congestion Games},
  booktitle = {Advances in Neural Information Processing Systems},
  volume    = {38},
  year      = {2025},
  note      = {NeurIPS 2025 Main Conference Track},
  url       = {https://proceedings.neurips.cc/paper_files/paper/2025/hash/d6b452383b070d5ccf042d8d25e0cdb2-Abstract-Conference.html}
}

@article{hajek1964asymptotic,
  author  = {H{\'a}jek, Jaroslav},
  title   = {Asymptotic Theory of Rejective Sampling with Varying Probabilities from a Finite Population},
  journal = {The Annals of Mathematical Statistics},
  volume  = {35},
  number  = {4},
  pages   = {1491--1523},
  year    = {1964},
  doi     = {10.1214/aoms/1177700375},
  url     = {https://doi.org/10.1214/aoms/1177700375}
}

@article{chen1994weighted,
  author  = {Chen, Xiang-Hui and Dempster, Arthur P. and Liu, Jun S.},
  title   = {Weighted Finite Population Sampling to Maximize Entropy},
  journal = {Biometrika},
  volume  = {81},
  number  = {3},
  pages   = {457--469},
  year    = {1994},
  doi     = {10.1093/biomet/81.3.457},
  url     = {https://doi.org/10.1093/biomet/81.3.457}
}

@article{chen1997statistical,
  author  = {Chen, Sean X. and Liu, Jun S.},
  title   = {Statistical Applications of the Poisson-Binomial and Conditional Bernoulli Distributions},
  journal = {Statistica Sinica},
  volume  = {7},
  number  = {4},
  pages   = {875--892},
  year    = {1997},
  url     = {https://www3.stat.sinica.edu.tw/statistica/j7n4/j7n44/j7n44.htm}
}

@inproceedings{kulesza2011kdpps,
  author    = {Kulesza, Alex and Taskar, Ben},
  title     = {{k-DPPs}: Fixed-Size Determinantal Point Processes},
  booktitle = {Proceedings of the 28th International Conference on Machine Learning},
  pages     = {1193--1200},
  publisher = {Omnipress},
  year      = {2011},
  url       = {https://icml.cc/2011/papers/611_icmlpaper.pdf}
}

@misc{cesari2026effective,
  author        = {Cesari, Tommaso and Colomboni, Roberto},
  title         = {Effective Resistance in Fixed-Rank External-Field Measures and Constant-Stretch Correlated Sampling on the Hypersimplex},
  year          = {2026},
  eprint        = {2607.13990},
  archivePrefix = {arXiv},
  primaryClass  = {math.PR},
  doi           = {10.48550/arXiv.2607.13990},
  url           = {https://arxiv.org/abs/2607.13990}
}

@book{boyd2004convex,
  author    = {Stephen Boyd and Lieven Vandenberghe},
  title     = {Convex Optimization},
  publisher = {Cambridge University Press},
  year      = {2004}
}

@book{groetschel1988geometric,
  author    = {Gr{\"o}tschel, Martin and Lov{\'a}sz, L{\'a}szl{\'o} and Schrijver, Alexander},
  title     = {Geometric Algorithms and Combinatorial Optimization},
  series    = {Algorithms and Combinatorics},
  volume    = {2},
  publisher = {Springer},
  address   = {Berlin},
  year      = {1988},
  doi       = {10.1007/978-3-642-97881-4},
  url       = {https://doi.org/10.1007/978-3-642-97881-4}
}

@inproceedings{gerchinovitz2016refined,
  author    = {Gerchinovitz, S{\'e}bastien and Lattimore, Tor},
  title     = {Refined Lower Bounds for Adversarial Bandits},
  booktitle = {Advances in Neural Information Processing Systems 29},
  editor    = {Lee, D. D. and Sugiyama, M. and von Luxburg, U. and Guyon, I. and Garnett, R.},
  pages     = {1198--1206},
  publisher = {Curran Associates, Inc.},
  year      = {2016},
  url       = {https://proceedings.neurips.cc/paper/2016/hash/2f37d10131f2a483a8dd005b3d14b0d9-Abstract.html}
}
\endgroup

\end{document}